\newcommand{\calO}{\mathcal{O}}
\newcommand*\dotp{\mathpalette\dotp@{.5}}
\newcommand*\dotp@[2]{\mathbin{\vcenter{\hbox{\scalebox{#2}{$\m@th#1\bullet$}}}}}
\newtheorem{definition}{Definition}[section]
\newtheorem*{definition*}{Definition}
\newtheorem*{fact*}{Fact}
\newtheorem{lemma}[definition]{Lemma}
\newtheorem{assum}{Assumption}
\newtheorem*{theorem*}{Theorem}
\newtheorem*{lemma*}{Lemma}
\newtheorem*{proposition*}{Proposition}
\newtheorem{coro}{Corollary}   
\newtheorem{assumption*}{Assumption}
\newtheorem{condition*}{Condition}
\newtheorem{exercise*}{Exercise}
\newtheorem*{example*}{Example}
\newcommand{\red}[1]{\color{black} #1 \color{black}}
\title{\textbf{Multi-Agent Off-Policy TD Learning: Finite-Time Analysis with Near-Optimal Sample Complexity and Communication Complexity}}
\author[1]{\textit{Ziyi Chen}}
\author[1]{\textit{Yi Zhou}}
\author[1]{\textit{Rongrong Chen}}
\affil[1]{Department of Electrical and Computer Engineering, University of Utah, USA}
\affil[ ]{\small {\{u1276972,yi.zhou\}@utah.edu, rchen@eng.utah.edu}}
\date{}
\begin{document}
\doparttoc 
\faketableofcontents 

\twocolumn[ 
\maketitle
\thispagestyle{empty}



\icmlsetsymbol{equal}{*}

\begin{icmlauthorlist}
\end{icmlauthorlist}


\icmlcorrespondingauthor{Cieua Vvvvv}{c.vvvvv@googol.com}
\icmlcorrespondingauthor{Eee Pppp}{ep@eden.co.uk}

\vskip 0.3in
]

\printAffiliationsAndNotice{\icmlEqualContribution}   

\begin{abstract}
The finite-time convergence of off-policy TD learning has been comprehensively studied recently. However, such a type of convergence has not been well established for off-policy TD learning in the multi-agent setting, which covers broader applications and is fundamentally more challenging.
This work develops two decentralized TD with correction (TDC) algorithms for multi-agent off-policy TD learning under Markovian sampling. In particular, our algorithms preserve full privacy of the actions, policies and rewards of the agents, and adopt mini-batch sampling to reduce the sampling variance and communication frequency. Under Markovian sampling and linear function approximation, we proved that the finite-time sample complexity of both algorithms for achieving an $\epsilon$-accurate solution is in the order of $\mathcal{O}(\epsilon^{-1}\ln \epsilon^{-1})$, matching the near-optimal sample complexity of centralized TD(0) and TDC. Importantly, the communication complexity of  our algorithms is in the order of $\mathcal{O}(\ln \epsilon^{-1})$, which is significantly lower than the communication complexity $\mathcal{O}(\epsilon^{-1}\ln \epsilon^{-1})$ of the existing decentralized TD(0). Experiments corroborate our theoretical findings. 
\end{abstract}

\section{Introduction}
Multi-agent reinforcement learning (MARL) has become an emerging technique that has broad applications in control \cite{Yanmaz2017,chalaki2020hysteretic}, wireless sensor networks \cite{Krishnamurthy2008,yuan2020towards}, robotics \cite{Yan2013}, etc. In MARL, agents interact with an environment and follow their own policies to collect private rewards. 
In particular, policy evaluation is a fundamental problem in MARL that aims to learn a multi-agent value function associated with the policies of the agents. This motivates the development of convergent and communication-efficient multi-agent TD learning algorithms.

For single-agent on-policy evaluation (i.e., samples are collected by target policy), the conventional TD(0) algorithm  \cite{sutton1988learning,sutton2018reinforcement} and Q-learning algorithm \cite{dayan1992convergence} have been developed  with asymptotic convergence guarantee. Recently, their finite-time (i.e., non-asymptotic) convergence has been established under Markovian sampling and linear approximation \cite{bhandari2018finite,zou2019finite}. However, these algorithms may diverge in the more popular off-policy setting \cite{baird1995residual}, where samples are collected by a different behavior policy. To address this important issue, a family of gradient-based TD (GTD) algorithms were developed for off-policy evaluation with asymptotic convergence guarantee  \cite{sutton2008convergent,sutton2009fast,maei2011gradient}. In particular, the TD with gradient correction (TDC) algorithm has been shown to have superior performance and its finite-time convergence has been established recently under Markovian sampling \cite{xu2019two,Gupta2019finite,kaledin2020finite}.

For multi-agent on-policy evaluation, various decentralized TD learning algorithms have been developed, e.g., the finite-time convergence of decentralized TD(0) was established with i.i.d samples \cite{Wai2018,Doan19a} and Markovian samples \cite{sun2020finite}, respectively, under linear function approximation, and an improved result is further obtained in \cite{wang2020decentralized} by leveraging gradient tracking.
However, these algorithms do not apply to the off-policy setting. In the existing literature, 
decentralized off-policy TD learning has been studied only in simplified settings, e.g., agents obtain independent MDP trajectories \cite{macua2014distributed,stankovic2016multi,cassano2020multi} or share their behavior and target policies with each other \cite{cassano2020multi}, and the data samples are either i.i.d.\ or have a finite sample size.
These MARL settings either are impractical or sacrifice the privacy of the agents. Therefore, we want to ask the following question:
\begin{itemize}[leftmargin=*,topsep=0pt]
	\item {\em Q1: Can we develop a decentralized off-policy TD algorithm for MARL with interdependent agents that collect private Markovian data following private policies?}
\end{itemize}
In fact, developing such a desired decentralized off-policy TD learning algorithm requires overcoming two major challenges. First, to perform decentralized off-policy TD learning, all the agents need to obtain a global importance sampling ratio (see \Cref{sec: TDC}). In \cite{cassano2020multi}, they obtained this ratio by sharing all local policies among the agents, which raises privacy concerns. Therefore, we need to develop private schemes to synchronize the global importance sampling ratio among the agents. Second, although the existing decentralized TD-type algorithms achieve the near-optimal sample complexity $\mathcal{O}(\epsilon^{-1}\ln \epsilon^{-1})$ \cite{sun2020finite,wang2020decentralized}, their communication complexities (number of communication rounds) are of the same order, which induces much communication overhead when the target accuracy $\epsilon$ is small. Hence, we want to ask the following fundamental question:
\begin{itemize}[leftmargin=*,topsep=0pt]
	\item {\em Q2: Can we develop a decentralized off-policy TD learning algorithm that achieves the near-optimal finite-time sample complexity while requires a significantly smaller communication complexity?}
\end{itemize}
In this work, we provide affirmative answers to these questions by developing two decentralized TDC algorithms. The algorithms preserve the privacy of all the agents and achieve the near-optimal sample complexity as well as a significantly reduced communication complexity. We summarize our contributions as follows.

\subsection{Our Contributions}
We consider a fully decentralized network where the agents share a common state space and take individual actions following their own behavior policies to collect local rewards. All of these information are kept private. The goal of the agents is to cooperatively learn the multi-agent value function associated with their target policies. 

To perform multi-agent off-policy evaluation, we develop two decentralized TDC algorithms with linear function approximation. In every iteration, each agent performs two timescale TDC updates locally and exchanges the linear model parameters with its neighborhood. In particular, our algorithms adopt the following designs to enable private off-policy TD learning and reduce communication complexity. 
\begin{itemize}[leftmargin=*,topsep=0pt]
	\item One critical issue is that the agents must use the global importance sampling ratio (product of local importance sampling ratios) to adjust their local updates. In our \Cref{alg: 1}, we propose to let the agents broadcast their local importance sampling ratios over the network until all agents obtain the exact global importance sampling ratio. In our \Cref{alg: 2}, we let the agents perform local averaging on the local importance sampling ratios to obtain approximated inexact global importance sampling ratios. We show that both schemes induce small communication overhead for synchronizing importance sampling ratios while ensuring fast convergence of the algorithms.
	\item We propose to let the agents use a mini-batch of samples to update their model parameters in each iteration. In this way, the mini-batch sampling reduces the sampling variance as well as the communication frequency, leading to an order-wise lower communication complexity than that of the decentralized TD0(0). 
	\vspace{-2mm}
	\item After the main decentralized TDC iterations, our algorithms perform local averaging of the model parameters to achieve a global consensus. Our proof shows that such local averaging steps are critical for establishing the near-optimal sample complexity and achieving an improved communication complexity.
\end{itemize}

Theoretically, we analyze the finite-time convergence of the two decentralized TDC algorithms with Markovian samples under exact and inexact global importance sampling, respectively. We show that both algorithms converge to a small neighborhood of the minimizer at a linear convergence rate, and the neighborhood size can be driven arbitrarily close to zero. The overall sample complexity of both algorithms to achieve an $\epsilon$-accurate solution is in the order of $\mathcal{O}(\epsilon^{-1} \ln \epsilon^{-1})$, which matches the state-of-the-art complexity of both centralized and decentralized TD learning and is near-optimal. More importantly, the total communication complexity of our algorithms for synchronizing model parameters is in the order of $\mathcal{O}(\ln \epsilon^{-1})$, which is significantly lower than the communication complexity $\mathcal{O}(\epsilon^{-1}\ln \epsilon^{-1})$ of the decentralized TD(0) \cite{sun2020finite} and matches the communication complexity lower bound for decentralized strongly-convex optimization \cite{scaman2017optimal}.

\subsection{Related Works}

\textbf{Centralized policy evaluation.} TD(0) with linear function approximation \cite{sutton1988learning} is popular for on-policy evaluation. The asymptotic and non-asymptotic convergence results of TD(0) have been established in \cite{sutton1988learning, dayan1992convergence, Jaakkola1993, gordon1995stable, baird1995residual, tsitsiklis1997analysis, tadic2001convergence, hu2019characterizing} and \cite{korda2015td,liu2015finite,bhandari2018finite,dalal2018finite, lakshminarayanan2018linear,wang2019multistep,srikant2019finite,xu2020reanalysis} respectively. 
\cite{sutton2009fast} proposed TDC for off-policy evaluation. The finite-sample convergence of TDC has been established in \cite{2018Finite,Dalal_2020} with i.i.d. samples and in \cite{xu2019two,Gupta2019finite,kaledin2020finite} with Markovian samples. 

\textbf{Decentralized policy evaluation.}
\cite{mathkar2016distributed} proposed the decentralized TD(0) algorithm. The asymptotic and non-asymptotic convergence rate of decentralized TD have been obtained in \cite{borkar2009stochastic} and \cite{sun2020finite,wang2020decentralized} respectively. 
Exisitng decentralized off-policy evaluation studies considered simplified settings. \cite{macua2014distributed,stankovic2016multi} obtained asymptotic result for decentralized off-policy evaluation where the agents obtained independent MDPs. \cite{cassano2020multi} obtained linear convergence rate also with indepedent MDPs by applying variance reduction and extended to the case where the individual behavior policies and the joint target policy are shared among the agents. 


\section{Policy Evaluation in Multi-Agent RL}
In this section, we introduce multi-agent reinforcement learning (MARL) and define the policy evaluation problem. 

Consider a fully decentralized multi-agent network that consists of $M$ agents. The network topology is specified by an undirected graph $\mathcal{G}=(\mathcal{M},\mathcal{E})$, where $\mathcal{M}=\{1,2,\cdots,M\}$ denotes the set of agents and $\mathcal{E}$ denotes the set of communication links. 
In MARL, the agents interact with a dynamic environment through a multi-agent Markov decision process (MMDP) specified as $\{\mathcal{S},\{\mathcal{A}^{(m)}\}_{m=1}^M,P,\{R^{(m)}\}_{m=1}^M,\gamma\}$. To elaborate, $\mathcal{S}$ denotes a global state space that is shared by all the agents, $\mathcal{A}^{(m)}$ corresponds to the action space of agent $m$, $P$ is the state transition kernel and $R^{(m)}$ denotes the reward function of agent $m$. All the state and action spaces have finite cardinality. $\gamma\in(0,1]$ is a discount factor. 

At any time $t$, assume that all the agents are in the global state $s_t\in \mathcal{S}$. Then, each  agent $m$ takes a certain action $a_{t}^{(m)}\in \mathcal{A}^{(m)}$ following its own stationary policy $\pi^{(m)}$, i.e., $a_{t}^{(m)}\sim \pi^{(m)} (\cdot|s_t)$. After all the actions are taken, the global state transfers to a new state $s_{t+1}$ according to the transition kernel $P$, i.e., $s_{t+1}\sim P(\cdot|s_t,a_t)$ where $a_t:=\{a_{t}^{(m)}\}_{m=1}^M$. At the same time, each agent $m$ receives a local reward $R_{t}^{(m)}:=R^{(m)}(s_t,a_t,s_{t+1})$ from the environment for this action-state transition. Throughout the MMDP, each agent $m$ has access to only the global state $\{s_t\}_t$ and its own actions $\{a_{t}^{(m)}\}_t$ and rewards $\{R_{t}^{(m)}\}_t$.  The goal of policy evaluation in MARL is to evaluate the following value function associated with all the local policies $\pi:=\{\pi^{(m)}\}_{m=1}^M$ for any global state $s$. 
\begin{align}
V^{\pi}(s)=\mathbb{E}\Big[\sum_{t=0}^{+\infty} \gamma^t \Big(\frac{1}{M}\sum_{m=1}^{M} R_{t}^{(m)}\Big)\Big|s_0=s,\pi\Big]. \label{Vfunc}
\end{align}
In particular, it is known that the above value function is a fixed point of the following Bellman operator $T^{\pi}$. 
\begin{align}
T^{\pi} [V(s)]:=& \mathbb{E}_{a\sim\pi(a|s), s'\sim P(s'|s,a)} \nonumber\\
& \Big[ \frac{1}{M}\sum_{m=1}^{M} R^{(m)}(s,a^{(m)},s') +\gamma V(s') \Big]. \label{BellmanOp}
\end{align}

{\bf Decentralized TD(0) with linear approximation.}  A popular algorithm for evaluating the value function in MARL is the decentralized TD(0) \cite{sun2020finite}, which is a decentralized variant of the centralized TD(0) algorithm.  Specifically, consider a popular linear function approximation of the value function  ${V}_{\theta}(s):=\theta^{\top}\phi(s)$, where $\theta\in \mathbb{R}^d$ contains the model parameters and $\phi(s)$ is a feature vector that corresponds to the state $s$. The linear function approximation has been widely considered in the existing literature \cite{xu2019two,sun2020finite,xu2020sample}, as it helps to avoid the curse of dimensionality ($d\ll |\mathcal{S}|$).

In decentralized TD(0), each agent $m$ collects a Markovian sample $\{s_t,a_{t}^{(m)},s_{t+1},R_{t}^{(m)}\}$ at time $t$ and updates its own model parameters $\theta_{t}^{(m)}$ with learning rate $\alpha>0$ as follows. 
\begin{align}
\theta_{t+1}^{(m)}&=\sum_{m'\in \mathcal{N}_m}V_{m,m'} \theta_{t}^{(m')} + \alpha \big(A_t \theta_{t}^{(m)}+b_{t}^{(m)}\big),
\end{align}
where $\mathcal{N}_m$ denotes the neighborhood of agent $m$, $V$ corresponds to a doubly stochastic communication matrix and $A_t=\phi(s_t) (\gamma \phi(s_{t+1})-\phi(s_t))^{\top}$, $b_{t}^{(m)}=R_{t}^{(m)} \phi(s_t)$. The above update rule applies the local TD error to update the parameters and synchronize the parameters among neighboring agents through the network. However, decentralized TD(0) encounters the following challenges: 1) decentralized TD(0) cannot be applied to the off-policy setting, where the agents have the flexibility to perform TD learning with samples that are collected by a different behavior policy; 2) decentralized TD(0) requires $O(\epsilon^{-1}\log \epsilon^{-1})$ number of communication rounds to achieve an $\epsilon$-accurate solution, and hence is not communication-efficient. Our goal is to develop a more communication-efficient decentralized TD learning algorithm that applies to the off-policy setting.

\section{Two Timescale Decentralized TDC for Off-Policy Evaluation}

\subsection{Centralized TDC} 
In this subsection, we review the centralized TD with gradient correction (TDC) algorithm proposed in \cite{sutton2009fast}. 
In RL, the agent may not have enough samples that are collected following the target policy $\pi$. Instead, it may have some historical data samples that are collected under a different behavior policy $\pi_b$. Therefore, in this {\em off-policy} setting, the agent would like to utilize the historical data obtained by following the behavior policy $\pi_b$ to help evaluate the value function  $V^{\pi}$ associated with the target policy $\pi$.

In \cite{sutton2009fast}, a family of gradient-based TD (GTD) learning algorithms have been proposed for off-policy evaluation with convergence guarantee.  In particular, the TDC algorithm has been shown to have superior performance. To explain, consider the linear function approximation ${V}_\theta(s) = \theta^\top \phi(s)$ and suppose the state space includes states $s_1,...,s_n$, we can define a total value function as ${V}_\theta:=[{V}_\theta(s_1), ..., {V}_\theta(s_n)]^\top$.
In TDC learning, the goal is to minimize the following mean square projected Bellman error (MSPBE).
\begin{align*}
\text{MSPBE}(\theta) := \mathbb{E}_{\mu_b}\| {V}_\theta - \Pi T^{\pi} {V}_\theta\|^2,
\end{align*} 
where $\mu_b$ is the stationary distribution induced by $\pi_{b}$, $T^{\pi}$ is the Bellman operator and $\Pi$ is a projection operator onto the space of linear models.
Given the $t$-th sample $(s_t,a_t,s_{t+1},R_t)$ obtained by the behavior policy, we define the following terms 
\begin{align*}
\rho_t &:= \frac{\pi(a_t|s_t)}{\pi_b(a_t|s_t)}, \quad b_t :=  \rho_t R_t \phi(s_{t}),\\
A_t &:=  \rho_t \phi(s_{t})(\gamma\phi(s_{t+1}) - \phi(s_{t}))^\top, \\
B_t &:=  -\gamma  \rho_t \phi(s_{t+1})\phi(s_{t})^\top, ~~C_t :=-\phi(s_{t})\phi(s_{t})^\top,
\end{align*}
where $\rho_t$ is referred to as the {\em importance sampling ratio}. 
Then, with learning rates $\alpha, \beta>0$ and initialization parameters $\theta_0, w_0$, the two timescale off-policy TDC algorithm takes the following recursive updates for $t=0,1,2,...$
\begin{equation}
\text{(TDC):}~~
\left\{
\begin{aligned}
\theta_{t+1} &= \theta_t + \alpha (A_t \theta_t + b_t + B_t w_t),  \\
w_{t+1} &= w_t + \beta (A_t \theta_t + b_t + C_t w_t). \label{eq: tdc}
\end{aligned}
\right.
\end{equation}

\subsection{Decentralized Mini-batch TDC}\label{sec: TDC}
In this subsection, we propose two decentralized TDC algorithms for off-policy evaluation in MARL.

In the multi-agent setting, without loss of generality, we assume that each agent $m$ has a target policy $\pi^{(m)}$ and its samples are collected by a different behavior policy $\pi_{b}^{(m)}$. In particular, if agent $m$ is on-policy, then we have $\pi_{b}^{(m)} = \pi^{(m)}$. In this {\em multi-agent off-policy} setting, the agents aim to utilize the data collected by the behavior policies $\pi_{b}=\{\pi_{b}^{(m)} \}_{m=1}^M$ to help evaluate the value function $V^{\pi}$ associated with the target policies $\pi=\{\pi^{(m)} \}_{m=1}^M$.

However, directly generalizing the centralized TDC algorithm to the decentralized setting will encounter several challenges. First, the centralized TDC in \cref{eq: tdc} consumes one sample per-iteration and achieves the sample complexity $O(\epsilon^{-1}\log\epsilon^{-1})$ \cite{xu2019two}. Therefore, the corresponding decentralized TDC would perform one local communication per-iteration and is expected to have a communication complexity in the order of $O(\epsilon^{-1}\log\epsilon^{-1})$, which induces {\em large communication overhead}. Second, in the multi-agent off-policy setting, every agent $m$ has a local importance sampling ratio $\rho_{t}^{(m)} := \pi^{(m)}(a_{t}^{(m)}|s_t) / {\pi_{b}^{(m)}(a_{t}^{(m)}|s_t)}$. However, to correctly perform off-policy updates, every agent needs to know all the other agents' local importance sampling ratios in order to obtain the {\bf global importance sampling ratio} 
$\rho_{t}:=\prod_{m=1}^{M}  \rho_{t}^{(m)}.$


To address these challenges in multi-agent off-policy TD learning, we next propose two  {\em decentralized TDC} algorithms that take mini-batch stochastic updates. 

{\bf Decentralized TDC with exact $\rho_{t}$.} We first consider an idealized case where all the agents obtain the exact global importance sampling ratio $\rho_t$ in every iteration $t$. This requires all the agents to broadcast their local importance sampling ratios over the decentralized network $\mathcal{G}$ using at most $M-1$ communication rounds. Although this setting may not be desired for large networks, it serves as a basis for understanding and analyzing decentralized TDC-type algorithms. 

\begin{algorithm}[h]
	\caption{Decentralized mini-batch TDC with exact global importance sampling.}
	\label{alg: 1}
	{\bf Input:} Batch size $N$, iterations $T,T'$, learning rates $\alpha, \beta$.
	
	
	{\bf Initialize:} $\theta_{0}^{(m)},w_{0}^{(m)}$ for all agents $m\in\mathcal{M}$.
	
	\For{iteration $t=0, 1, \ldots, T-1$}
	{	
		Each agent collects $N$ Markovian samples, computes its corresponding local importance sampling ratio and broadcasts over the network. \\

		\For{agent $m\in\mathcal{M}$ in parallel}{
			Agent $m$ obtains exact global importance sampling ratios for the $N$ samples and performs the updates in \cref{TDC1-theta,TDC1-w}.
		}
	}	
	
	\For{iteration $t=T, T+1, \ldots, T+T'-1$}
	{	
		\For{agent $m\in\mathcal{M}$ in parallel}{
			$\theta_{t+1}^{(m)}=\sum_{m'\in \mathcal{N}_m} V_{m,m'} \theta_{t}^{(m')}.$
		}
	}
	
	\textbf{Output:} $\{\theta_{T+T'}^{(m)}\}_{m=1}^M$.
\end{algorithm}

We formally present our first algorithm in \Cref{alg: 1}. 
To elaborate, at iteration $t$, every agent $m$ cooperatively collects a mini batch of $N$ Markovian samples $\big\{s_i,a_{i}^{(m)},s_{i+1},R_{i}^{(m)} \big\}_{i=tN}^{(t+1)N-1}$ and perform the following two timescale TDC-type updates in parallel.
\begin{align}
\theta_{t+1}^{(m)}&=\sum_{m'\in \mathcal{N}_m}V_{m,m'} \theta_{t}^{(m')} \nonumber\\
&\quad+ \frac{\alpha}{N} \sum_{i=tN}^{(t+1)N-1}\!\!\!\! \big(A_i \theta_{t}^{(m)}+b_{i}^{(m)}+B_i w_{t}^{(m)}\big),\label{TDC1-theta}\\
w_{t+1}^{(m)}&=\sum_{m'\in \mathcal{N}_m}V_{m,m'}w_{t}^{(m')} \nonumber\\
&\quad+ \frac{\beta}{N} \sum_{i=tN}^{(t+1)N-1}\!\!\!\! \big(A_i \theta_{t}^{(m)}+b_{i}^{(m)}+C_i w_{t}^{(m)}\big),\label{TDC1-w}
\end{align} 
where $\alpha, \beta$ are the learning rates, $\mathcal{N}_{m}$ denotes the neighborhood of agent $m$, and $V$ is a doubly-stochastic matrix. The matrices $A_i, B_i, C_i$ are defined in the same way as those in centralized TDC, and $b_{i}^{(m)} = \rho_t R_{t}^{(m)} \phi(s_t)$.
The above \cref{TDC1-theta,TDC1-w} apply mini-batch TDC updates to update the value function parameter $\theta_t^{(m)}$ and the auxiliary parameter $w_t^{(m)}$ of agent $m$, and these parameters are further synchronized among the neighboring agents $\mathcal{N}_{m}$.
As we show in Section \ref{sec_analysis}, the use of mini-batch updates helps significantly reduce the overall communication complexity.  Moreover, note that all the agents only exchange their local model parameters $\theta_t^{(m)}, w_t^{(m)}$ and local importance sampling ratios $\rho_t^{(m)}$. Hence, their actions, behavior and target policies, and rewards are kept private with regard to each other.

After performing the decentralized TDC updates for a sufficient number of $T$ iterations, we will halt the TDC updates and let all agents synchronize their value function parameters $\theta_t^{(m)}$ via $T'$ iterations of local averaging. In this way, every agent will converge exponentially fast to the model average $\overline{\theta}_T=\frac{1}{M}\sum_{m=1}^{M} \theta_{T}^{(m)}$, which we show to converge fast to the desired parameter $\theta^*$.
To summarize, every TDC iteration of \Cref{alg: 1} consumes $N$ Markovian samples, and requires two vector communication rounds for synchronizing the parameter vectors $\theta_t^{(m)},w_t^{(m)}$ and at most $(M-1)$  communication rounds for broadcasting the local importance sampling ratios, while the last $T’$ iterations only involve the 
communication of the parameter vector $\theta_t^{(m)}$.

{\bf Decentralized TDC with inexact $\rho_t$.} 
We also propose \Cref{alg: 2} as a variant of \Cref{alg: 1} that does not require full synchronization of the global importance sampling ratio $\rho_t$. To elaborate, note that $\rho_t$ can be rewritten as 
$$\rho_{t}=\exp\Big(M \cdot\frac{1}{M}\sum_{m=1}^{M}\ln\rho_{t}^{(m)}\Big).$$ Therefore, all the agents just need to obtain the sample average $\frac{1}{M}\sum_{m=1}^{M}\ln\rho_{t}^{(m)}$, 
which can be efficiently approximated via local communication of the quantities $\{\ln \rho_{t}^{(m)}\}_{m=1}^M$. Based on this idea, we propose to let the agents perform local averaging for $L$ rounds to obtain approximated global importance sampling ratios $\{\widehat{\rho}_{t}^{(m)}\}_{m=1}^M$. Specifically, every agent $m$ initializes $\widetilde{\rho}_{t,0}^{(m)}=\ln \rho_{t}^{(m)}$ and for iterations $\ell = 0,..., L-1$ do
\begin{align}
\widetilde{\rho}_{t,\ell+1}^{(m)}&=\sum_{m'\in \mathcal{N}_m}V_{m,m'} \widetilde{\rho}_{t,\ell}^{(m')}, \label{rho_iter2}\\
\text{Output}:~ \widehat{\rho}_{t}^{(m)}&=\exp(M\cdot\widetilde{\rho}_{t,L}^{(m)}) \label{rho_end2}.
\end{align}
Such a local averaging scheme is much less restrictive than the exact global synchronization in \Cref{alg: 1}, especially when the network size $M$ is large. 
In fact, in \Cref{coro_rho}, we prove that all of these local estimates $\{\widehat{\rho}_{t}^{(m)}\}_{m=1}^M$ converge exponentially fast to the desired consensus quantity $\rho_{t}$ as $L$ increases. Hence, we can control the approximation error to be arbitrarily small by choosing a proper $L$. Then, every agent $m$ performs the following two timescale updates 
\begin{align}
&\theta_{t+1}^{(m)}=\sum_{m'\in \mathcal{N}_m}V_{m,m'} \theta_{t}^{(m')} \nonumber\\
&\quad+ \frac{\alpha}{N} \sum_{i=tN}^{(t+1)N-1}\!\!\!\!\big(A_{i}^{(m)} \theta_{t}^{(m)}+\widetilde{b}_{i}^{(m)}+{B}_{i}^{(m)} w_{t}^{(m)}\big),\label{theta_iter2}\\
&w_{t+1}^{(m)} = \sum_{m'\in \mathcal{N}_m}V_{m,m'}w_{t}^{(m')} \nonumber\\
&\quad+ \frac{\beta}{N} \sum_{i=tN}^{(t+1)N-1}\!\!\!\!\big(A_{i}^{(m)} \theta_{t}^{(m)}+\widetilde{b}_{i}^{(m)}+C_{i} w_{t}^{(m)}\big),\label{w_iter2}
\end{align}
where $A_{i}^{(m)}, {B}_{i}^{(m)}, \widetilde{b}_{i}^{(m)}$ are defined by replacing the exact global $\rho_i$ involved in $A_i, B_i, b_{i}^{(m)}$ (see \cref{TDC1-theta,TDC1-w}) with the approximated $\widehat{\rho}_i^{(m)}$. To summarize, every TDC iteration of \Cref{alg: 2} consumes $N$ Markovian samples, and requires two vector communication rounds for synchronizing the parameter vectors $\theta_t^{(m)},w_t^{(m)}$ and $L$ scalar communication rounds for estimating the global importance sampling ratio.   

\begin{algorithm}[h]
	\caption{Decentralized mini-batch TDC with inexact global importance sampling.}
	\label{alg: 2}
	{\bf Input:} Batch size $N$, iterations $T,T'$, learning rates $\alpha, \beta$.
	
	
	{\bf Initialize:} $\theta_{0}^{(m)},w_{0}^{(m)}$ for all agents $m\in\mathcal{M}$.
	
	\For{iteration $t=0, 1, \ldots, T-1$}
	{	
		Each agent collects $N$ Markovian samples and computes their local importance sampling ratios $\rho_t^{(m)}$.\\
		
		\For{agent $m\in\mathcal{M}$ in parallel}{
			Agent $m$ estimates global importance sampling ratios for the $N$ samples via \cref{rho_iter2,rho_end2} and performs the updates in \cref{theta_iter2,w_iter2}.
		}
	}
\For{iteration $t=T, T+1, \ldots, T+T'-1$}
{	
	\For{agent $m\in\mathcal{M}$ in parallel}{
		$\theta_{t+1}^{(m)}=\sum_{m'\in \mathcal{N}_m}V_{m,m'} \theta_{t}^{(m')}.$
	}
}

\textbf{Output:} $\{\theta_{T+T'}^{(m)}\}_{m=1}^M$.	
\end{algorithm}

%



\section{Finite-Time Analysis of Decentralized TDC}\label{sec_analysis}
In this section, we analyze the finite-time convergence and complexity of both \Cref{alg: 1} and \Cref{alg: 2}. In all the theorem statements, we introduce some notations to denote the universal constants. Please refer to \Cref{supp: notation} for a summary of all notations and constants.


Denote $\mu_{\pi_b}$ as the stationary distribution of the Markov chain $\{s_t\}_{t}$ induced by the collection of agents' behavioral policies $\pi_b$. Throughout the analysis, we define the following expected quantities.
\begin{align*}
	A &:= \mathbb{E}_{\pi_b}[A_t], ~ B :=\mathbb{E}_{\pi_b}[B_t], ~C :=\mathbb{E}_{\pi_b}[C_t],\\
	 b^{(m)} &:=\mathbb{E}_{\pi_b}\big[b_{t}^{(m)}\big], ~ \overline{b}_{t}:=\frac{1}{M}\sum_{m=1}^{M} b_{t}^{(m)}, ~ \overline{b} :{=}\mathbb{E}_{\pi_b}\big[\overline{b}_{t}\big],
\end{align*}
 where $\mathbb{E}_{\pi_b}$ denotes the expectation when $s_t\sim\mu_{\pi_{b}}$, $a_{t}^{(m)}\sim \pi_{b}^{(m)}(s_t)$ and $s_{t+1}\sim P(\cdot|s_t,a_t)$. It is well-known that 
the optimal model parameter is $\theta^*=-A^{-1}\overline{b}$. \cite{xu2020reanalysis,xu2020sample}

We first make the following standard assumption on the mixing time of the Markov chain.
\begin{assum}\label{assum_TV_exp}
	There exist constants $\nu>0$ and $\delta\in(0,1)$ such that for all $t \geq 0$,
	\begin{align}
	\sup_{s \in \mathcal{S}} d_{TV}\left(\mathbb{P}_{\pi_{b}}\left(s_{t} \mid s_{0}=s\right), \mu_{\pi_{b}}\right) \leq \nu \delta^{t},
	\end{align}
	where $d_{TV}(P, Q)$ denotes the total-variation distance between probability measures $P$ and $Q$.
\end{assum}
Assumption \ref{assum_TV_exp} has been widely adopted in the existing literature \cite{bhandari2018finite,xu2019two,xu2020sample,ma20a,ma2021}. 
It assumes that the state distribution converges exponentially to its stationary distribution $\mu_{\pi_b}$. Such an assumption holds for all homogeneous Markov chains with finite state-space and all uniformly ergodic Markov chains.

\begin{assum}\label{assum_ACinv}
	The matrices $A$ and $C$ are invertible. 
\end{assum}

\begin{assum}\label{assum_phi_le1}
	The feature vectors are bounded, i.e., $\|\phi(s)\| \le 1$ for all $s\in\mathcal{S}$.
\end{assum}

\begin{assum}\label{assum_rhobound}
The rewards and importance sampling ratios are bounded, i.e.,  
there exist $R_{\max}, \rho_{\max} >0$ such that for all $m$:
	$\max_{s,a,s'} R^{(m)}(s,a,s')<R_{\max}$ and
	$\max_{s,a^{(m)}} \rho^{(m)}(s,a^{(m)}),\max_{s,a} \rho(s,a)<\rho_{\max}$.
\end{assum}

\begin{assum}\label{assum_doubly}
	The communication matrix $V\in\mathbb{R}^{M\times M}$ is doubly stochastic, i.e., all the entries of $V$ are nonnegative (i.e., $V_{ij}\ge 0$) and $V\mathbf{1}=\mathbf{1}$, $\mathbf{1}^{\top}V=\mathbf{1}^{\top}$. Also, $V_{ij}>0$ iff $i,j\in\mathcal{E}$. Moreover, the second largest singular value of $V$ satisfies $\sigma_2\in[0,1)$. 
\end{assum}

Assumptions \ref{assum_ACinv} -- \ref{assum_rhobound} are standard and widely adopted in the analysis of TD learning algorithms \cite{xu2019two,xu2020sample}. As a result of Assumption \ref{assum_ACinv}, the matrix $C$ is negative definite and thus we have $\lambda_{1} :=-\lambda_{\max}(A^{\top}C^{-1}A)>0$, $\lambda_{2} := -\lambda_{\max}(C)>0$. In particular, when $\mathcal{S}$ is finite, Assumption \ref{assum_ACinv} is equivalent to that the feature matrix $\Phi\in\mathbb{R}^{d\times |\mathcal{S}|}$ (each column is a feature vector) has full row rank.
Assumption \ref{assum_phi_le1} can always hold by normalizing the feature vectors $\phi(s)$. Assumption \ref{assum_rhobound} implies that $R_{t}^{(m)}\le R_{\max}$ and $\rho_{t}^{(m)}, \rho_t\le \rho_{\max}$ for all $m,t$. Assumption \ref{assum_doubly} is standard and has been widely adopted in decentralized optimization \cite{singh2020squarm,saha2020decentralized} and decentralized TD learning \cite{sun2020finite,wang2020decentralized}. It ensures that all agents can reach a global consensus on the parameters via local communication.

\subsection{Finite-Time Analysis of \Cref{alg: 1}}

We obtain the following finite-time error bound for \Cref{alg: 1} with exact global importance sampling ratio and Markovian samples. Please refer to \Cref{supp: notation} for the definitions of the universal constants $c_4, c_{12}$, etc.

\begin{restatable}{thm}{thmsync}\label{thm_sync}
		Let Assumptions \ref{assum_TV_exp}--\ref{assum_doubly} hold. Run \Cref{alg: 1} for $T$ iterations with learning rates $\alpha\le \min \{\mathcal{O} (1), \mathcal{O} (\beta)\}$, $\beta\le\mathcal{O}(1)$ and batch size $N\ge \max \{\mathcal{O}(1), \mathcal{O}(\beta/\alpha)\}$ (see eqs. \eqref{alpha_cond1},\eqref{beta_cond1}\&\eqref{N_cond1} for details). Then, the model average $\overline{\theta}_T=\frac{1}{M}\sum_{m=1}^{M} \theta_{T}^{(m)}$ satisfies
		\begin{align}
		\mathbb{E} \big[ \|\overline{\theta}_{T}-\theta^*\|^2\big] &\!\le\!  \Big(1\!-\!\frac{\alpha\lambda_{1}}{4}\Big)^{T} \!\big(\|\overline{\theta}_{0}\!-\!\theta^*\|^2\!+\!\|\overline{w}_{0}\!-\!w_{0}^*\|^2\big) \nonumber\\
		&\quad+ \frac{12c_4\beta }{\lambda_1 N\alpha},\label{sum_err}
		\end{align}
		where $\theta^*=A^{-1}b$, $w_0^*=-C^{-1}(A\overline{\theta}_0+b)$. Furthermore, after $T'$ iterations of local averaging, the local models of all agents satisfy that: for all $m=1,...,M$,
		\begin{align}
		\mathbb{E}\big[\|\theta_{T+T'}^{(m)}-\overline{\theta}_{T}\|^2\big] &\le \sigma_2^{2T'}c_{12}^2. \label{consensus_err1} 
		\end{align}
\end{restatable}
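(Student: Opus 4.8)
The plan is to split the dynamics into a \emph{consensus component} (the model averages $\overline{\theta}_t,\overline{w}_t$) and a \emph{deviation component} (the per-agent disagreements $\theta_t^{(m)}-\overline{\theta}_t$), and treat the two bounds \eqref{sum_err} and \eqref{consensus_err1} separately. The crucial simplification is that $A_i,B_i,C_i$ depend only on the shared features and the \emph{global} ratio $\rho_i$, hence are identical across agents; only $b_i^{(m)}$ is agent-specific. Averaging \eqref{TDC1-theta}--\eqref{TDC1-w} over $m$ and invoking double stochasticity $\sum_m V_{m,m'}=1$ (Assumption \ref{assum_doubly}) annihilates the network coupling and leaves the purely centralized recursion
\begin{align}
\overline{\theta}_{t+1}&=\overline{\theta}_t+\frac{\alpha}{N}\sum_{i=tN}^{(t+1)N-1}\big(A_i\overline{\theta}_t+\overline{b}_i+B_i\overline{w}_t\big),\nonumber\\
\overline{w}_{t+1}&=\overline{w}_t+\frac{\beta}{N}\sum_{i=tN}^{(t+1)N-1}\big(A_i\overline{\theta}_t+\overline{b}_i+C_i\overline{w}_t\big).\nonumber
\end{align}
This is exactly centralized two-timescale mini-batch TDC, so \eqref{sum_err} reduces to a statement about the centralized average iterate.

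For \eqref{sum_err} I would run the standard two-timescale Lyapunov argument on this averaged recursion. Introduce the fast-variable tracking target $w_t^*:=-C^{-1}(A\overline{\theta}_t+\overline{b})$, which vanishes as $\overline{\theta}_t\to\theta^*$ since $A\theta^*+\overline{b}=0$. First I bound the fast error $\mathbb{E}\|\overline{w}_t-w_t^*\|^2$: because $-C$ is positive definite with $\lambda_2=-\lambda_{\max}(C)>0$, the $w$-update contracts toward $w_t^*$ at a linear rate governed by $\beta\lambda_2$, up to (a) an $O(\alpha)$ drift from the moving target $w_t^*$ and (b) the Markovian bias and variance of the batch. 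Substituting this fast error into the slow recursion, the effective slow dynamics contracts $\|\overline{\theta}_t-\theta^*\|^2$ at a rate controlled by $\lambda_1=-\lambda_{\max}(A^\top C^{-1}A)>0$, producing the factor $1-\alpha\lambda_1/4$ once $\beta/\alpha$ and $N$ are large enough that the fast timescale is genuinely faster and the cross terms are dominated (this is precisely the role of the step-size and batch conditions \eqref{alpha_cond1}--\eqref{N_cond1}). The additive neighborhood term comes from the per-batch noise: conditioning on the state at the start of each batch and applying Assumption \ref{assum_TV_exp} bounds the Markovian bias geometrically, while averaging the $N$ samples supplies the $1/N$ variance reduction, yielding the residual $\frac{12c_4\beta}{\lambda_1 N\alpha}$. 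Unrolling the recursion $\Omega_{t+1}\le(1-\alpha\lambda_1/4)\Omega_t+O(\beta/N)$ over $t=0,\dots,T-1$ gives \eqref{sum_err}.

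For the consensus bound \eqref{consensus_err1} I would work with the stacked deviation $E_t:=\Theta_t-\mathbf{1}\overline{\theta}_t^\top$, where $\Theta_t$ stacks the $\theta_t^{(m)}$ as rows. Since $V\mathbf{1}=\mathbf{1}$ and $\mathbf{1}^\top V=\mathbf{1}^\top$, the average $\overline{\theta}_t$ is invariant under the pure-averaging phase and $\|VE_t\|_F\le\sigma_2\|E_t\|_F$ (Assumption \ref{assum_doubly}). During the first $T$ TDC iterations the deviation obeys $E_{t+1}=VE_t+\alpha(I-\tfrac1M\mathbf{1}\mathbf{1}^\top)G_t$, with $G_t$ the stacked update directions; by Assumptions \ref{assum_phi_le1}--\ref{assum_rhobound} together with a uniform boundedness argument for $\{\theta_t^{(m)},w_t^{(m)}\}$, $\|G_t\|_F$ is uniformly bounded, so $\|E_{t+1}\|_F\le\sigma_2\|E_t\|_F+O(\alpha)$ keeps $\mathbb{E}\|E_T\|_F^2$ below a universal constant $c_{12}^2$. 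In the final $T'$ averaging iterations $\Theta_{T+T'}=V^{T'}\Theta_T$ with $\overline{\theta}_{T+T'}=\overline{\theta}_T$, whence $E_{T+T'}=V^{T'}E_T$ and $\|\theta_{T+T'}^{(m)}-\overline{\theta}_T\|^2\le\|E_{T+T'}\|_F^2\le\sigma_2^{2T'}\|E_T\|_F^2\le\sigma_2^{2T'}c_{12}^2$, which is \eqref{consensus_err1}.

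The hardest part is closing the coupled two-timescale recursion under \emph{Markovian} (not i.i.d.) batches: the moving target $w_t^*$ injects an $O(\alpha)$ perturbation into the fast error, the fast error feeds back into the slow error, and the batch samples are correlated, so one must simultaneously calibrate $\alpha,\beta,N$ (roughly $\alpha\lesssim\beta$ and $N\gtrsim\beta/\alpha$) so that the contraction constants $\lambda_1,\lambda_2$ survive all cross terms with the stated numerical factor $\alpha\lambda_1/4$. A secondary but essential point is the uniform boundedness of the decentralized iterates $\{\theta_t^{(m)},w_t^{(m)}\}$ that underlies the bound on $\|G_t\|_F$; without it one cannot control the per-agent drive feeding the consensus recursion.
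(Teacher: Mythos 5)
Your treatment of \eqref{sum_err} is essentially the paper's own proof: averaging \eqref{TDC1-theta}--\eqref{TDC1-w} over agents with the doubly stochastic $V$ collapses the network term and leaves exactly the centralized mini-batch TDC recursion for $(\overline{\theta}_t,\overline{w}_t)$; the paper then runs precisely the two-timescale Lyapunov argument you describe, with the fast tracking target $w_t^*=-C^{-1}(A\overline{\theta}_t+b)$, a $(1-\beta\lambda_2/\mathrm{const})$ contraction of the fast error perturbed by the $O(\alpha)$ drift of $w_t^*$, Markovian bias/variance controlled by Assumption \ref{assum_TV_exp} at rate $O(1/N)$, and a summed Lyapunov contraction at rate $1-\alpha\lambda_1/4$ under the conditions \eqref{alpha_cond1}--\eqref{N_cond1}. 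No issues there.

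There is, however, a genuine gap in your argument for \eqref{consensus_err1}: you bound the drive of the deviation recursion by claiming $\|G_t\|_F$ is uniformly bounded ``by a uniform boundedness argument for $\{\theta_t^{(m)},w_t^{(m)}\}$.'' No such uniform bound is available. The iterates are not projected, and the natural recursion $\max_m\|\theta_{t+1}^{(m)}\|\le(1+\alpha\Omega_A)\max_m\|\theta_t^{(m)}\|+\cdots$ only yields growth that is exponential in $t$ (indeed, for \Cref{alg: 2} the paper proves exactly such an exponential-in-$T$ bound in \Cref{lemma_parabound2}, not a uniform one); deriving boundedness from the convergence of $\overline{\theta}_t$ would be circular, since the per-agent deviation is what you are trying to control. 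The paper's proof of \eqref{consensus_err1} never needs any bound on the iterates: because the exact global ratio makes $\overline{A}_t,\overline{B}_t$ \emph{common to all agents} and they multiply the parameters from the right in the stacked form \eqref{Theta1}, the deviation of the update directions factors as $\Delta G_t=(\Delta\Theta_t)\overline{A}_t^{\top}+\Delta\big[\overline{b}_t^{(1)};\ldots;\overline{b}_t^{(M)}\big]^{\top}+(\Delta W_t)\overline{B}_t^{\top}$, so $\|\Delta G_t\|_F\le \Omega_A\|\Delta\Theta_t\|_F+\Omega_b\sqrt{M}+\Omega_B\|\Delta W_t\|_F$ by \Cref{lemma_bound_ABCt}: the parameter magnitudes never appear, only the consensus errors themselves and the bounded reward vectors. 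Feeding this into $\|\Delta\Theta_{t+1}\|_F\le\sigma_2\|\Delta\Theta_t\|_F+\alpha\|\Delta G_t\|_F$ (and its $W$ analogue), and taking $\alpha,\beta$ small enough that $\sigma_2+\beta(\Omega_A+1)\le\frac{1+\sigma_2}{2}<1$, gives the uniform bound $\|\Delta\Theta_T\|_F\le c_{12}$, after which your final step $\|\Delta\Theta_{T+T'}\|_F\le\sigma_2^{T'}\|\Delta\Theta_T\|_F$ is correct. So your skeleton for the consensus phase is right, but the key estimate must exploit this structural cancellation rather than iterate boundedness.
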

\Cref{thm_sync} shows that the average model $\overline{\theta}_T$ converges to a small neighborhood of the optimal solution $\theta^*$ at a linear convergence rate, which matches the convergence rate of the centralized TDC \cite{xu2019two,xu2020sample}. In particular, the convergence error is in the order of $\mathcal{O}(\frac{\beta}{N\alpha})$, which can be driven arbitrarily close to zero by choosing a sufficiently large mini-batch size $N$ and constant-level learning rates $\alpha,\beta$. Moreover, the $T'$ steps of local parameter averaging further help all the agents achieve a small consensus error at a linear convergence rate. \Cref{sum_err,consensus_err1} together ensure the fast convergence of all the local parameters. We want to emphasize that the $T'$ local averaging steps are critical for establishing fast convergence of local parameters. Specifically, without the $T'$ local averaging steps, the consensus error $\mathbb{E}\big[\|\theta_{T}^{(m)}-\overline{\theta}_{T}\|^2\big]$ would be in the order of $\mathcal{O}(\alpha+\beta)=\mathcal{O}(1)$, which is constant-level and hence cannot guarantee the local parameters converge arbitrarily close to the true solution. On the other hand, choosing a sufficiently small $\alpha,\beta$ would solve this problem, but at the cost of slowing down the convergence rate in \cref{sum_err}. We show next that the extra $T'$ local averaging steps help achieve the near-optimal sample complexity while inducing negligible communication overhead. 



Based on \Cref{thm_sync}, we obtain the following complexity results by using the relation $\mathbb{E}\big[\|\theta_{T+T'}^{(m)}-\theta^*\|^2\big]\le 2\mathbb{E}\big[\|\theta_{T+T'}^{(m)}-\overline{\theta}_T\|^2\big] + 2\mathbb{E} \big[ \|\overline{\theta}_{T}-\theta^*\|^2\big]$.

\begin{restatable}{proposition}{propsync}\label{prop_sync}
	Under the same conditions as those of \Cref{thm_sync} and choosing $N=\mathcal{O}(\epsilon^{-1})$, $T,T'=\mathcal{O}(\ln\epsilon^{-1})$, 
	we have that $\mathbb{E}[\|\theta_{T+T'}^{(m)}-\theta^*\|^2]\le \epsilon$ for all $m$. Moreover, the overall communication complexity for model parameters is $T+T'=\mathcal{O}(\ln\epsilon^{-1})$, the overall communication complexity for importance sampling ratio is $MT=\mathcal{O}(M\ln \epsilon^{-1})$, and the total sample complexity is $NT=\mathcal{O}(\epsilon^{-1}\ln\epsilon^{-1})$.
\end{restatable}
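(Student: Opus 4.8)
The plan is to combine the two error bounds in \Cref{thm_sync} through the stated decomposition and then solve for the smallest admissible values of $N$, $T$, and $T'$. Starting from
\[
\mathbb{E}\big[\|\theta_{T+T'}^{(m)}-\theta^*\|^2\big]\le 2\mathbb{E}\big[\|\theta_{T+T'}^{(m)}-\overline{\theta}_T\|^2\big] + 2\mathbb{E}\big[\|\overline{\theta}_{T}-\theta^*\|^2\big],
\]
I would substitute the consensus bound \eqref{consensus_err1} into the first term and the average-model bound \eqref{sum_err} into the second. This produces three error contributions: an exponentially decaying optimization error $2(1-\alpha\lambda_1/4)^T(\|\overline{\theta}_0-\theta^*\|^2+\|\overline{w}_0-w_0^*\|^2)$, a variance floor $24c_4\beta/(\lambda_1 N\alpha)$, and a consensus error $2\sigma_2^{2T'}c_{12}^2$.

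The next step is to force each of the three contributions below $\epsilon/3$, which is possible precisely because $\alpha$, $\beta$, $\lambda_1$, $c_4$, $c_{12}$, and $\sigma_2$ are all universal constants independent of $\epsilon$. For the variance floor, $24c_4\beta/(\lambda_1 N\alpha)\le\epsilon/3$ forces $N\ge 72c_4\beta/(\lambda_1\alpha\epsilon)=\mathcal{O}(\epsilon^{-1})$; the key point is that with $\alpha,\beta$ held at constant level the $\epsilon$-dependence enters only through $1/\epsilon$. For the optimization error, taking logarithms gives $T\ge \ln\!\big(6(\|\overline{\theta}_0-\theta^*\|^2+\|\overline{w}_0-w_0^*\|^2)/\epsilon\big)/(-\ln(1-\alpha\lambda_1/4))=\mathcal{O}(\ln\epsilon^{-1})$, using $0<1-\alpha\lambda_1/4<1$. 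Likewise, since $\sigma_2\in[0,1)$ by \Cref{assum_doubly}, the consensus error drops below $\epsilon/3$ once $T'\ge \ln(6c_{12}^2/\epsilon)/(-2\ln\sigma_2)=\mathcal{O}(\ln\epsilon^{-1})$. Summing the three contributions then yields $\mathbb{E}[\|\theta_{T+T'}^{(m)}-\theta^*\|^2]\le\epsilon$ uniformly in $m$.

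With the parameter orders established, the complexity claims follow by bookkeeping over the two phases of \Cref{alg: 1}. Each of the $T$ main iterations consumes $N$ fresh Markovian samples while the $T'$ averaging iterations consume none, so the total sample complexity is $NT=\mathcal{O}(\epsilon^{-1})\cdot\mathcal{O}(\ln\epsilon^{-1})=\mathcal{O}(\epsilon^{-1}\ln\epsilon^{-1})$. For model-parameter communication, each main iteration uses two vector rounds (for $\theta_t^{(m)}$ and $w_t^{(m)}$) and each averaging iteration uses one, giving $2T+T'=\mathcal{O}(T+T')=\mathcal{O}(\ln\epsilon^{-1})$; for importance-sampling-ratio communication, each main iteration needs at most $M-1$ broadcast rounds and the averaging phase needs none, giving $(M-1)T=\mathcal{O}(M\ln\epsilon^{-1})$.

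The only genuinely delicate point is consistency of the parameter choices with the hypotheses of \Cref{thm_sync}, namely $\alpha\le\min\{\mathcal{O}(1),\mathcal{O}(\beta)\}$, $\beta\le\mathcal{O}(1)$, and $N\ge\max\{\mathcal{O}(1),\mathcal{O}(\beta/\alpha)\}$. I would fix $\alpha,\beta$ to any constants meeting the first two requirements and then observe that the accuracy-driven requirement $N=\mathcal{O}(\epsilon^{-1})$ automatically dominates the constant-level lower bound $\max\{\mathcal{O}(1),\mathcal{O}(\beta/\alpha)\}$ once $\epsilon$ is small enough, so all constraints are simultaneously satisfiable. Everything else is routine substitution and logarithmic inversion; there is no real analytical obstacle once \Cref{thm_sync} is in hand, since the hard work of controlling the coupled two-timescale Markovian dynamics has already been absorbed into that theorem.
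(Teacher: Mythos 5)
Your proposal is correct and follows essentially the same route as the paper's own proof: the same decomposition $\mathbb{E}\big[\|\theta_{T+T'}^{(m)}-\theta^*\|^2\big]\le 2\mathbb{E}\big[\|\theta_{T+T'}^{(m)}-\overline{\theta}_T\|^2\big]+2\mathbb{E}\big[\|\overline{\theta}_{T}-\theta^*\|^2\big]$, substitution of the two bounds from \Cref{thm_sync}, and logarithmic inversion to set $N=\mathcal{O}(\epsilon^{-1})$ and $T,T'=\mathcal{O}(\ln\epsilon^{-1})$. The only differences are cosmetic (an $\epsilon/3$ split instead of the paper's $\epsilon/8$ and $\epsilon/4$ allocations, and working directly with $-\ln(1-\alpha\lambda_1/4)$ rather than via $\ln(1+\alpha\lambda_1/4)$), so no further changes are needed.
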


Therefore, with an exact global importance sampling ratio, \Cref{alg: 1} achieves the sample complexity $\mathcal{O}(\epsilon^{-1}\ln\epsilon^{-1})$, which matches that of centralized TDC for Markovian samples \cite{xu2019two} and nearly matches the theoretical lower bound $\calO(\epsilon^{-1})$ given in \cite{kaledin2020finite}. 
Importantly, the overall communication complexity for synchronizing model parameters is in the order of $\mathcal{O}(\ln\epsilon^{-1})$, which is significantly smaller than the communication complexity $\mathcal{O}(\epsilon^{-1}\ln\epsilon^{-1})$ required by the decentralized TD(0) \footnote{The two papers do not report sample complexity and communication complexity, we calculated them based on their finite-time error bounds.} \cite{sun2020finite,wang2020decentralized}. Intuitively, this is because \Cref{alg: 1} adopts mini-batch stochastic updates, which significantly reduce both the stochastic variance and the required number of communication rounds. 


\subsection{Finite-Time Analysis of \Cref{alg: 2}}
{For the finite-time analysis of \Cref{alg: 2}, we make the following additional assumption. 
\begin{assum}\label{assum_rhobound2}
	The importance sampling ratios are lower bounded, i.e., there exists $\rho_{\min}>0$ such that $\min_{s, a^{(m)}} \rho^{(m)}(s,a^{(m)}),\min_{s,a} \rho(s,a)>\rho_{\min}$ for all $m$.
\end{assum}
This is equivalent to say that all policies take all possible actions with non-zero probability. We obtain the following finite-time error bound for \Cref{alg: 2} with inexact global importance sampling ratio and Markovian samples. Please refer to \Cref{supp: notation} for the definitions of $c_9, c_{10}, c_{11}$, etc.

\begin{restatable}{thm}{thmlocal}\label{thm_local}
	Let Assumptions \ref{assum_TV_exp}--\ref{assum_rhobound2} hold. Run \Cref{alg: 2} for $T$ iterations with learning rates $\alpha\le \min \{\mathcal{O} (1), \mathcal{O} (\beta)\}$, $\beta\le\mathcal{O}(1)$ and batch size $N\ge \max \{\mathcal{O}(1), \mathcal{O}(\beta/\alpha)\}$
	(see eqs. \eqref{alpha_cond2},\eqref{beta_cond2}\&\eqref{N_cond2} for details). Then, we have
	\begin{align}
	\mathbb{E}\big[\big\|\overline{\theta}_{T}-&\theta^*\big\|^2\big] \!\le\! \Big(1 \!-\! \frac{\alpha\lambda_1}{6}\Big)^{T} \big(\big\|\overline{\theta}_{0} \!-\! \theta^*\big\|^2 \!+\!  \big\|\overline{w}_{0} \!-\! w_0^*\big\|^2\big) \nonumber\\
	&\quad + \frac{18 c_9\beta}{\lambda_1N\alpha } + \beta \sigma_2^{2L}\Big( c_{10} (3^{T})+\frac{c_{11} }{\lambda_1 \alpha} \Big). \label{sum_err2}
	\end{align}
	Furthermore, after $T'$ iterations of local averaging, the local models of all agents satisfy that: for all $m=1,...,M$,
	\begin{align}
	\mathbb{E}\big[\|\theta_{T+T'}^{(m)}-\overline{\theta}_{T}\|^2\big] 
	&\le \sigma_2^{2T'} \Big(\|\Delta \Theta_{0}\|_F^2 + \|\Delta W_{0}\|_F^2 \nonumber\\
	&+ c_{20}  \beta^2 \sigma_2^{2L} (3^{T}) + \frac{3c_{19}\beta^2 }{1-\sigma_2^2} \Big).\label{consensus_err2} 
	\end{align}
\end{restatable}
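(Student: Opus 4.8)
The plan is to build on the two-timescale Lyapunov analysis behind \Cref{thm_sync} and carefully account for the extra perturbation caused by the inexact importance sampling ratios $\widehat{\rho}_t^{(m)}$. The first step is to quantify the approximation error of these ratios. Since the local-averaging recursion \eqref{rho_iter2}--\eqref{rho_end2} is exactly a consensus iteration on $\ln \rho_t^{(m)}$ driven by $V$, \Cref{coro_rho} gives that $|\widehat{\rho}_t^{(m)} - \rho_t|$ decays like $\sigma_2^{L}$ uniformly over agents and samples, where Assumptions \ref{assum_rhobound} and \ref{assum_rhobound2} ensure that $\ln\rho_t^{(m)}$ stays bounded so the consensus error is well defined. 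Combining this with the bounded features (\Cref{assum_phi_le1}) and rewards (\Cref{assum_rhobound}) yields operator-norm perturbation bounds $\|A_i^{(m)}-A_i\|$, $\|B_i^{(m)}-B_i\|$, $\|\widetilde{b}_i^{(m)}-b_i^{(m)}\| \le C\sigma_2^{L}$ for a universal constant $C$.

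For the averaged-model error \eqref{sum_err2}, I would work with the mean iterates $\overline{\theta}_t,\overline{w}_t$; because $V$ is doubly stochastic (\Cref{assum_doubly}), averaging the updates \eqref{theta_iter2}--\eqref{w_iter2} over agents leaves the mean unaffected by $V$. I would then split each agent's drift as $A_i^{(m)}\theta_t^{(m)} = A_i\overline{\theta}_t + A_i(\theta_t^{(m)}-\overline{\theta}_t) + (A_i^{(m)}-A_i)\theta_t^{(m)}$, and analogously for the $B$ and $b$ terms. The first piece reproduces the exact centralized-style dynamics already analyzed for \Cref{thm_sync}; the second is controlled by the consensus error; the third is controlled by the $\sigma_2^{L}$ bound above. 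Plugging this into the same potential function used for \Cref{thm_sync} and absorbing the new cross terms into the contraction budget gives a one-step rate $(1-\alpha\lambda_1/6)$, slightly smaller than the $\lambda_1/4$ of the exact case. Unrolling the recursion produces the geometric term, the variance floor $\frac{18c_9\beta}{\lambda_1 N\alpha}$, and the inexactness term $\beta\sigma_2^{2L}\big(c_{10}3^{T}+\frac{c_{11}}{\lambda_1\alpha}\big)$.

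For the consensus error \eqref{consensus_err2}, I would write \eqref{theta_iter2}--\eqref{w_iter2} in matrix form $\Theta_{t+1}=V\Theta_t+\frac{\alpha}{N}\sum_{i} G_{t,i}$ and track the deviation from the mean $\Delta\Theta_t=\Theta_t-\mathbf{1}\overline{\theta}_t^{\top}$ (and likewise $\Delta W_t$). By \Cref{assum_doubly} the multiplication by $V$ contracts $\Delta\Theta_t$ by $\sigma_2$ per step, while the stochastic drift injects a perturbation of order $\beta$ plus an extra $\beta\sigma_2^{L}$ from the inexactness. Solving the resulting geometric recursion bounds $\|\Delta\Theta_T\|_F^2$ by $\|\Delta\Theta_0\|_F^2+\|\Delta W_0\|_F^2 + c_{20}\beta^2\sigma_2^{2L}3^{T}+\frac{3c_{19}\beta^2}{1-\sigma_2^2}$, after which the final $T'$ pure-averaging iterations contract this by $\sigma_2^{2T'}$, giving \eqref{consensus_err2}.

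The main obstacle I anticipate is controlling the accumulation of the importance-sampling approximation error. Because the perturbation $(A_i^{(m)}-A_i)\theta_t^{(m)}$ multiplies the iterates, and the iterate norms can only be crudely bounded before convergence is established (each step amplifies them by a bounded factor via $\|x+y+z\|^2\le 3(\|x\|^2+\|y\|^2+\|z\|^2)$), the error compounds geometrically---this is precisely the source of the $3^{T}$ factors. The delicate point is to show that this growth is benign: the $3^{T}$ always appears multiplied by $\sigma_2^{2L}$, so choosing $L$ proportional to $T$ drives the term to zero. Simultaneously closing the coupled recursion in $\overline{\theta}_t-\theta^*$ and $\overline{w}_t-w_t^*$ while the target $w_t^*=-C^{-1}(A\overline{\theta}_t+b)$ itself drifts, and keeping every cross term inside the $(1-\alpha\lambda_1/6)$ budget, is where the precise conditions \eqref{alpha_cond2}--\eqref{N_cond2} on $\alpha,\beta,N$ are needed.
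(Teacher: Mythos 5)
Your proposal follows essentially the same route as the paper's proof: it invokes \Cref{coro_rho} for the $\sigma_2^{2L}$ importance-ratio error bounds, decomposes the mean dynamics into the exact-ratio dynamics already analyzed for \Cref{thm_sync} plus an inexactness perturbation whose iterate-dependent part is tamed by a crude geometric bound on $\max_m\|\theta_t^{(m)}\|+\max_m\|w_t^{(m)}\|$ (the paper's Lemma \ref{lemma_parabound2}, and exactly the mechanism you identify as the source of the $3^{T}\sigma_2^{2L}$ terms), and closes with the matrix-form consensus recursion contracted by the final $T'$ averaging steps. The only cosmetic difference is that your middle term $A_i(\theta_t^{(m)}-\overline{\theta}_t)$ vanishes identically once averaged over agents (since $A_i$ is common to all $m$), so the mean-iterate analysis requires no consensus-error control at all --- which is precisely how the paper proceeds.
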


Although \Cref{alg: 2} uses inexact global importance sampling ratios $\{\widehat{\rho}_t^{(m)}\}_{m=1}^M$, we show that they converge to the exact ratio exponentially fast through local averaging. Hence, the update rules of \Cref{alg: 2} are close to those of \Cref{alg: 1} and the proof follows.
Based on the above finite-time error bound, we further obtain the following complexity results of \Cref{alg: 2}. 

\begin{restatable}{proposition}{proplocal}\label{prop_local}
	Under the same conditions as those of Theorem \ref{thm_local} and choosing
	$N=\mathcal{O}(\epsilon^{-1})$, $T, T', L=\mathcal{O}(\ln\epsilon^{-1})$, we have  that $\mathbb{E}(\|\theta_{T+T'}^{(m)}-\theta^*\|^2)\le \epsilon$ for all $m$. Moreover, the overall communication complexity for model parameters is $T+T'=\mathcal{O}(\ln\epsilon^{-1})$, the overall communication complexity for importance ratio is $LT=\mathcal{O}(\ln^2 \epsilon^{-1})$, and the total sample complexity is $NT=\mathcal{O}(\epsilon^{-1}\ln\epsilon^{-1})$.
\end{restatable}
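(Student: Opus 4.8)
The plan is to derive the complexity bounds directly from the finite-time error bound of \Cref{thm_local}, combining the optimization error \eqref{sum_err2} with the consensus error \eqref{consensus_err2} via
\[
\mathbb{E}\big[\|\theta_{T+T'}^{(m)}-\theta^*\|^2\big]\le 2\mathbb{E}\big[\|\theta_{T+T'}^{(m)}-\overline{\theta}_T\|^2\big] + 2\mathbb{E}\big[\|\overline{\theta}_{T}-\theta^*\|^2\big].
\]
It then suffices to drive each right-hand term below $\epsilon/2$. With $\alpha,\beta$ fixed at the constant level prescribed by \Cref{thm_local} (and $N=\mathcal{O}(\epsilon^{-1})$ automatically satisfying the batch-size condition for small $\epsilon$), I would treat $N,T,T',L$ as free parameters and balance the individual summands of the two bounds against $\epsilon$.

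For the optimization error \eqref{sum_err2} I would handle its three summands separately. The geometric factor $(1-\alpha\lambda_1/6)^T$ multiplying the constant initial error decays linearly in $T$, so $T=\mathcal{O}(\ln\epsilon^{-1})$ pushes it below $\epsilon/\mathrm{const}$. The neighborhood term $18 c_9\beta/(\lambda_1 N\alpha)$ is inversely proportional to $N$, so $N=\mathcal{O}(\epsilon^{-1})$ makes it below $\epsilon/\mathrm{const}$. The third summand $\beta\sigma_2^{2L}\big(c_{10}3^T+c_{11}/(\lambda_1\alpha)\big)$ is the crux: under $T=\Theta(\ln\epsilon^{-1})$ the factor $3^T$ is \emph{polynomially large}, namely $3^T=\epsilon^{-\Theta(1)}$, so the term cannot be ignored. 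To suppress it I would exploit the exponentially decaying $\sigma_2^{2L}$: requiring $\sigma_2^{2L}3^T=\mathcal{O}(\epsilon)$ forces $L\ge\Theta\big(T/\ln(1/\sigma_2)\big)=\mathcal{O}(\ln\epsilon^{-1})$, after which this summand is also below $\epsilon/\mathrm{const}$.

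For the consensus error \eqref{consensus_err2}, the overall prefactor $\sigma_2^{2T'}$ multiplies a bracket containing the constant initial disagreement $\|\Delta\Theta_0\|_F^2+\|\Delta W_0\|_F^2$, the bounded term $3c_{19}\beta^2/(1-\sigma_2^2)$, and the term $c_{20}\beta^2\sigma_2^{2L}3^T$. The first two are constant-level, so $T'=\mathcal{O}(\ln\epsilon^{-1})$ makes their contribution below $\epsilon/\mathrm{const}$; the third is already $\mathcal{O}(\epsilon)$ by the same choice of $L$ above, and the extra $\sigma_2^{2T'}\le 1$ only helps. Summing the pieces gives $\mathbb{E}[\|\theta_{T+T'}^{(m)}-\theta^*\|^2]\le\epsilon$ for every $m$, with $N=\mathcal{O}(\epsilon^{-1})$ and $T,T',L=\mathcal{O}(\ln\epsilon^{-1})$.

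The complexity counts then follow by bookkeeping with the per-iteration costs listed after \eqref{w_iter2}: model-parameter communication is $T+T'=\mathcal{O}(\ln\epsilon^{-1})$; importance-ratio communication uses $L$ scalar rounds in each of the $T$ outer iterations, giving $LT=\mathcal{O}(\ln^2\epsilon^{-1})$; and sample complexity is $NT=\mathcal{O}(\epsilon^{-1}\ln\epsilon^{-1})$. The main obstacle is exactly the $3^T$ blow-up in the inexact-sampling terms: unlike \Cref{prop_sync}, where exact ratios remove this factor entirely, here $L$ must be coupled to $T$ to cancel it, and it is this coupling that produces the extra logarithmic factor $\mathcal{O}(\ln^2\epsilon^{-1})$ (versus $\mathcal{O}(\ln\epsilon^{-1})$ for \Cref{alg: 1}) in the importance-ratio communication. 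I would finally verify that the constant-level $\alpha,\beta$ keep every $\mathrm{const}$ absorbed into the $\mathcal{O}(\cdot)$, so that no hidden $\epsilon$-dependence reenters through the rate constants $\lambda_1,\sigma_2$.
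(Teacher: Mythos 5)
Your proposal is correct and follows essentially the same route as the paper's proof: the same decomposition $\mathbb{E}\|\theta_{T+T'}^{(m)}-\theta^*\|^2\le 2\mathbb{E}\|\theta_{T+T'}^{(m)}-\overline{\theta}_T\|^2+2\mathbb{E}\|\overline{\theta}_T-\theta^*\|^2$, term-by-term balancing of \eqref{sum_err2} and \eqref{consensus_err2} with constant $\alpha,\beta$, and—crucially—the same coupling of $L$ to $T$ (the paper's choice of $L$ contains exactly the $T\ln 3$ terms, e.g.\ $L\ge \frac{1}{2\ln(1/\sigma_2)}[\ln(16\beta c_{10}/\epsilon)+T\ln 3]$, to cancel the $3^T$ blow-up you identified). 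The only cosmetic difference is that the paper also uses the milder requirement $\sigma_2^{2L}c_{20}3^T\le c_{19}$ for the consensus bracket, whereas you impose the stronger $\sigma_2^{2L}3^T=\mathcal{O}(\epsilon)$ throughout, which still yields the same $L=\mathcal{O}(\ln\epsilon^{-1})$ order.
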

}

Compared with \Cref{alg: 1}, \Cref{alg: 2} achieves orderwise the same sample complexity and the same communication complexity for model parameters. Moreover, \Cref{alg: 2} does not require full synchronization of the global importance sampling ratio, and hence is simpler and more effective for large networks (when $M\ge \mathcal{O}(\ln\epsilon^{-1})$). 



\section{Experiments}
We simulate a multi-agent MDP with 10 fully decentralized agents. The shared state space contains 10 states and each agent can take 2 actions. All behavior policies are uniform policies (i.e., each agent takes all actions with equal probability), and the target policies are obtained by first perturbing the corresponding behavior policies with Gaussian noises sampled from $\mathcal{N}(0, 0.05)$ and then performing a proper normalization. The entries of the transition kernel and the reward functions are independently generated from the uniform distribution on $[0,1]$ (with proper normalization for the transition kernel). We generate all state features with dimension 5 independently from the standard Gaussian distribution and normalize them to have unit norm. The discount factor is $\gamma=0.95$. 

We consider two types of network topologies: a fully connected network with communication matrix $V$ having diagonal entries $0.8$ and off-diagonal entries $1/45$, and a ring network with communication matrix $V$ having diagonal entries $0.8$ and entries 0.1 for adjacent agents. We implement and compare two algorithms in these networks: the decentralized TD(0) with batch size $N=1$ \cite{sun2020finite} and our decentralized TDC with batch sizes $N=10,20,50,100$.

\subsection{Exact Global Importance Sampling}
We first test these algorithms with exact global importance sampling ratios and compare their sample complexities and communication complexities. We set learning rate $\alpha=0.2$ for the decentralized TD(0) and $\alpha=0.2*N$, $\beta=0.002*N$ for our decentralized TDC with varying batch sizes $N=10,20,50,100$. All algorithms are repeated 100 times using the same set of 100 MDP trajectories, each of which has 20k Markovian samples.

We first implement these algorithms in the fully connected network. 
Figure \ref{fig_exact} plots the relative convergence error $\|\overline{\theta}_t-\theta^*\|/\|\theta^*\|$ v.s. sample complexity ($tN$) and communication complexity ($t$). For each curve, its upper and lower envelopes denote the 95\% and 5\% percentiles of the 100 convergence errors, respectively. It can be seen that our decentralized TDC with different batch sizes achieve almost the same sample complexity as that of the decentralized TD(0), demonstrating the sample-efficiency of our algorithms. On the other hand, our decentralized TDCs require much less communication complexities than the decentralized TD(0), and the required communication becomes lighter as batch size increases. All these results match our theoretical analysis well. 
\begin{figure}[tbh]
	\begin{minipage}{.23\textwidth}
		\centering
		\includegraphics[width=\textwidth]{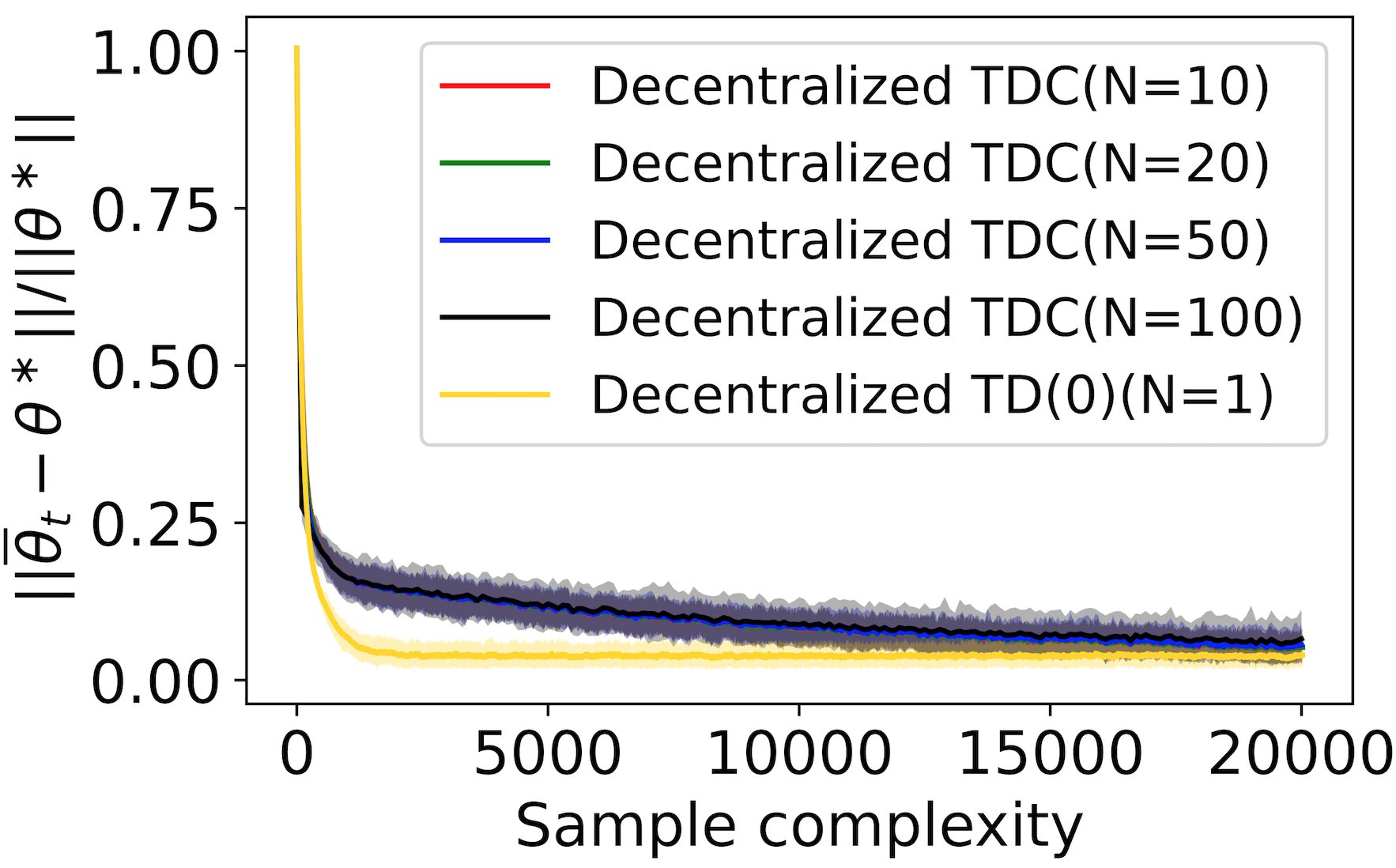}
	\end{minipage} 
	\begin{minipage}{.23\textwidth}
		\centering
		\includegraphics[width=\textwidth]{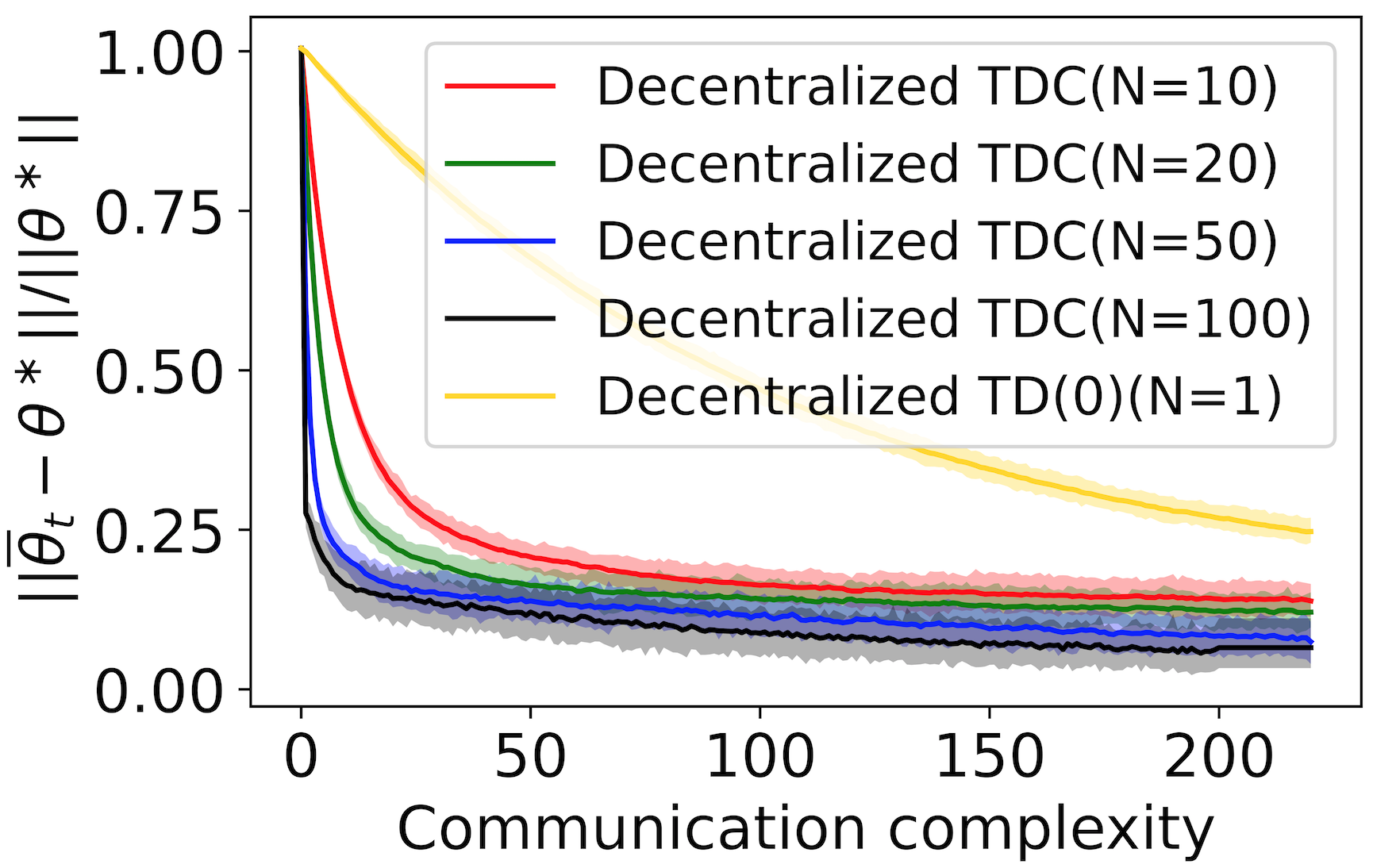}
	\end{minipage} 
	\caption{Comparison between decentralized TDC with varying batch sizes and decentralized TD(0) under exact global importance sampling.}
	\label{fig_exact}
\end{figure}

We further implement these algorithms in the ring network. The comparison results are exactly the same as those in \Cref{fig_exact}, since the update rule of $\overline{\theta}_t$ does not rely on the network topology under exact global importance sampling (See eqs. \eqref{theta_iter_avg1}\&\eqref{w_iter_avg1} in Appendix \ref{supp: notation}).

\subsection{Inexact Global Importance Sampling}
In the second experiment, we test our decentralized TDC with inexact global importance sampling ratios using varying communication rounds $L=1,3,5,7$. We use a fixed batch size $N=100$ and set learning rates $\alpha=5$, $\beta=0.05$, and repeat each algorithm 100 times using the set of 100 MDP trajectories. We also implement the decentralized TDC with exact global importance sampling ratios as a baseline. Figure \ref{fig_inexact1} plots the relative convergence error v.s. communication complexity in the fully-connected network (Left) and ring network (Right). It can be seen that in both networks, the asymptotic convergence error of the decentralized TDC with inexact $\rho$ decreases as the number of communication rounds $L$ for synchronizing the global importance sampling ratio increases. In particular, with a single communication round $L=1$, decentralized TDC diverges asymptotically due to inaccurate estimation of the global importance sampling ratio. As $L$ increases to more than 5, the convergence error is as small as that under exact global importance sampling. 
\begin{figure}[tbh]
	\begin{minipage}{.23\textwidth}
		\centering
		\includegraphics[width=\textwidth]{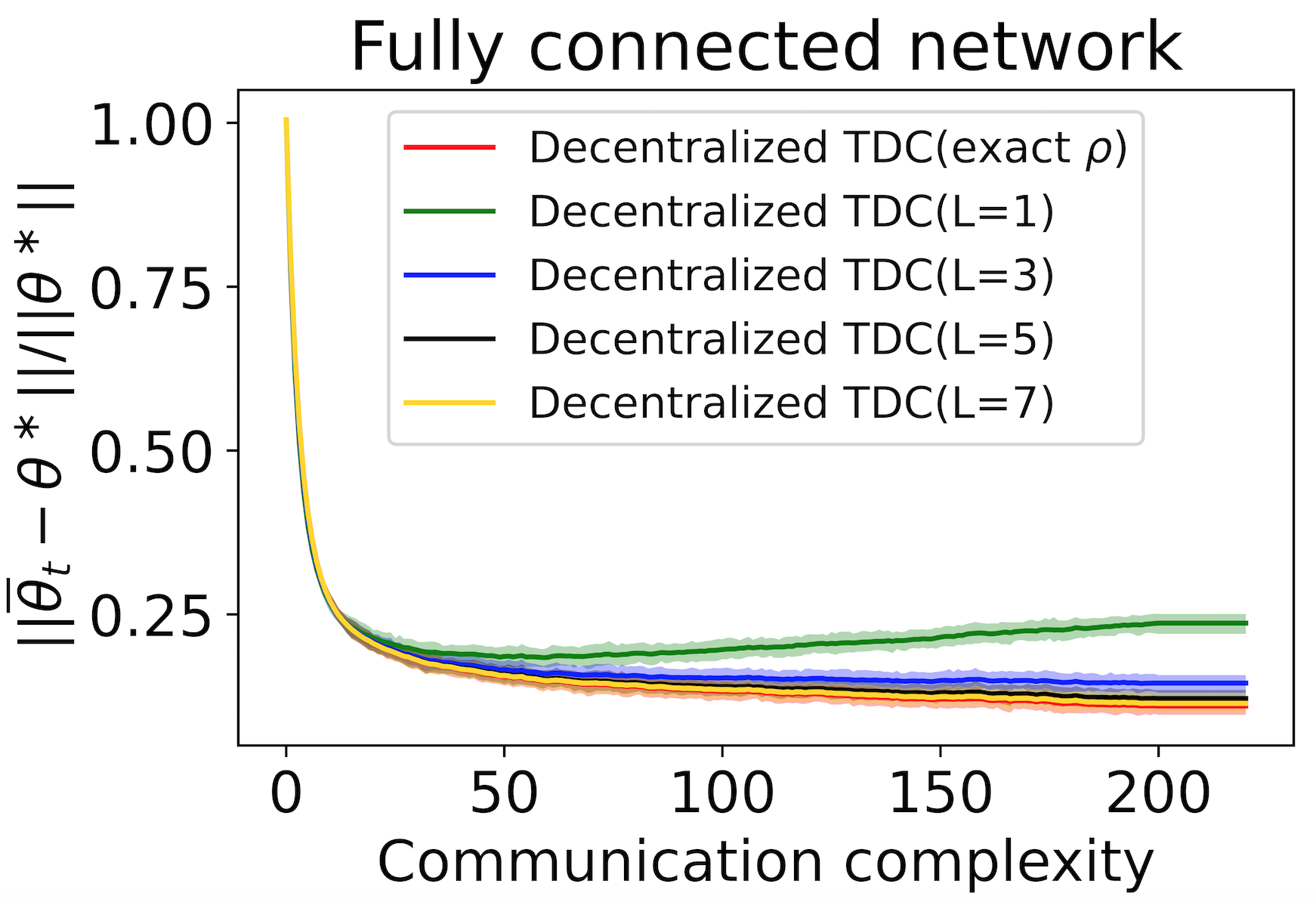}	
	\end{minipage} 
	\begin{minipage}{.23\textwidth}
		\centering
		\includegraphics[width=\textwidth]{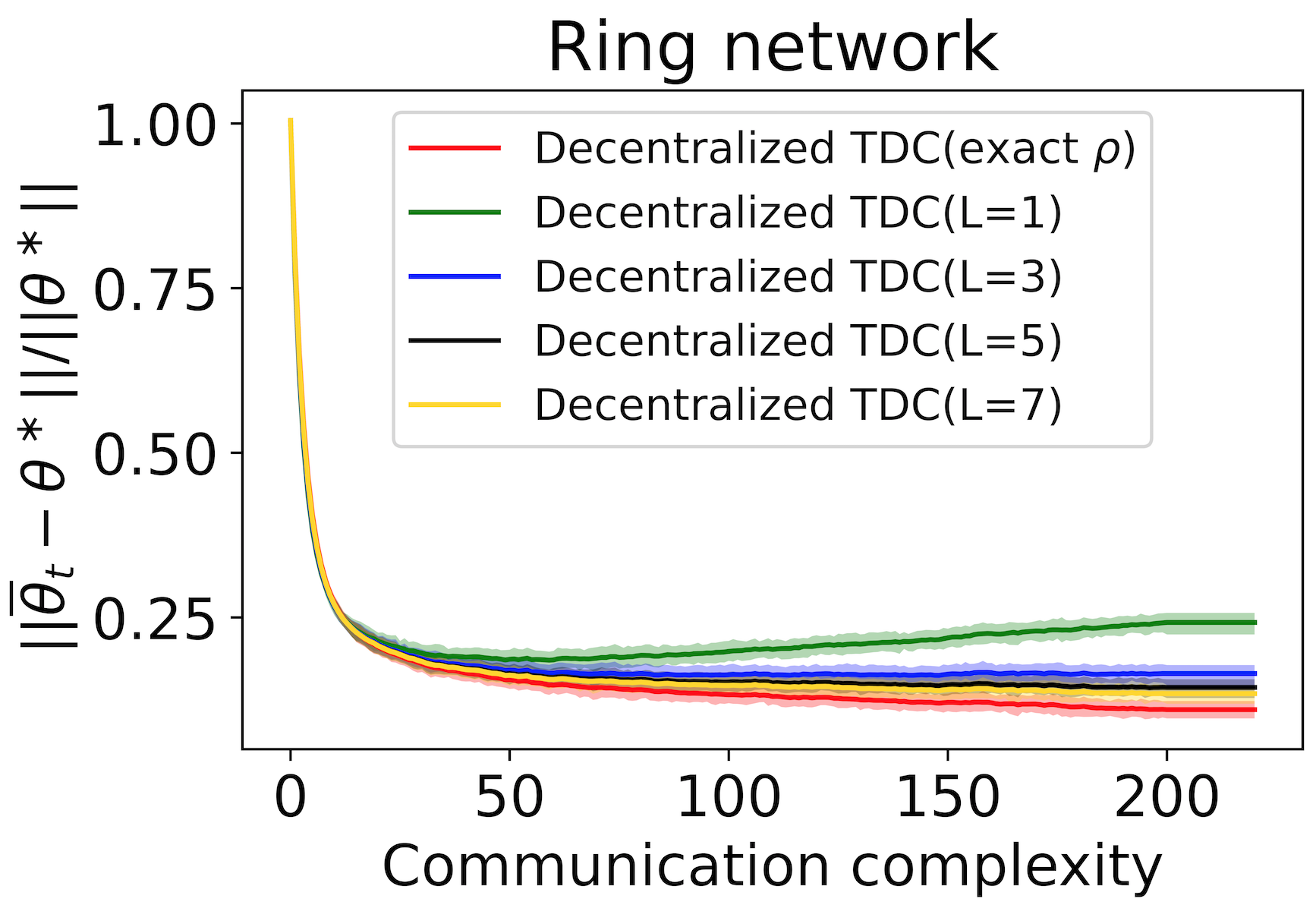}
	\end{minipage} 
	\caption{Impact of total communication rounds $L$ for estimating inexact global importance sampling ratios on asymptotic convergence error.}
	\label{fig_inexact1}
\end{figure}

We further plot the maximum relative consensus error among all agents $\max_m \|\theta_t^{(m)}-\overline{\theta}_t\|/\|\overline{\theta}^*\|$ v.s. communication complexity ($t$) in the fully-connected network (Left) and ring network (Right) in \Cref{fig_inexact2}, where the tails in both figures correspond to the extra $T'=20$ local model averaging steps. In both networks, one can see that the consensus error decreases as $L$ increases, and the extra local model averaging steps are necessary to achieve consensus. Moreover, it can be seen that the consensus errors achieved in the fully connected network are slightly smaller than those achieved in the ring network, as denser connections facilitate achieving the global consensus.

\begin{figure}[tbh]
	\begin{minipage}{.23\textwidth}
		\centering
		\includegraphics[width=\textwidth]{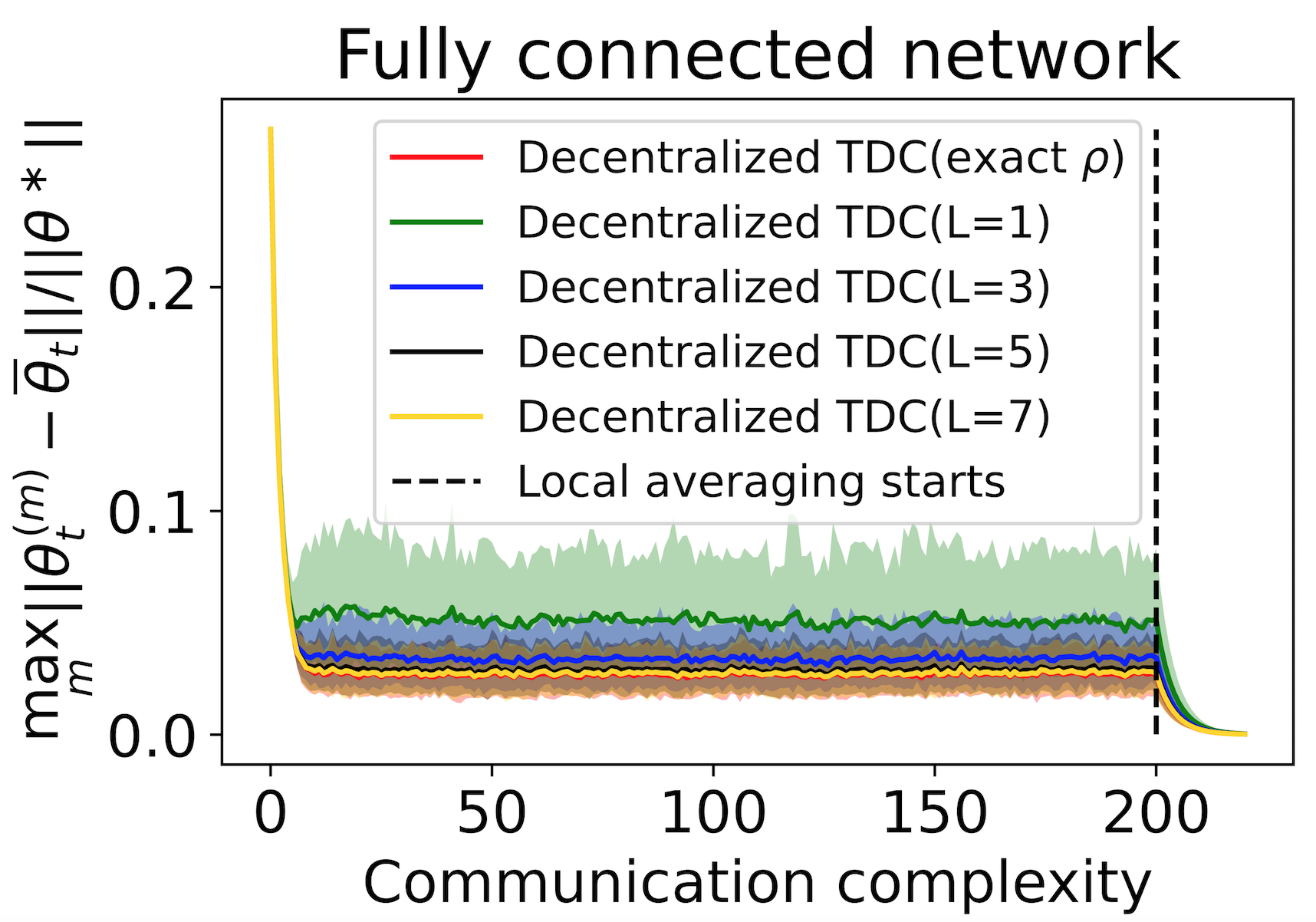}
	\end{minipage} 
	\begin{minipage}{.23\textwidth}
		\centering
		\includegraphics[width=\textwidth]{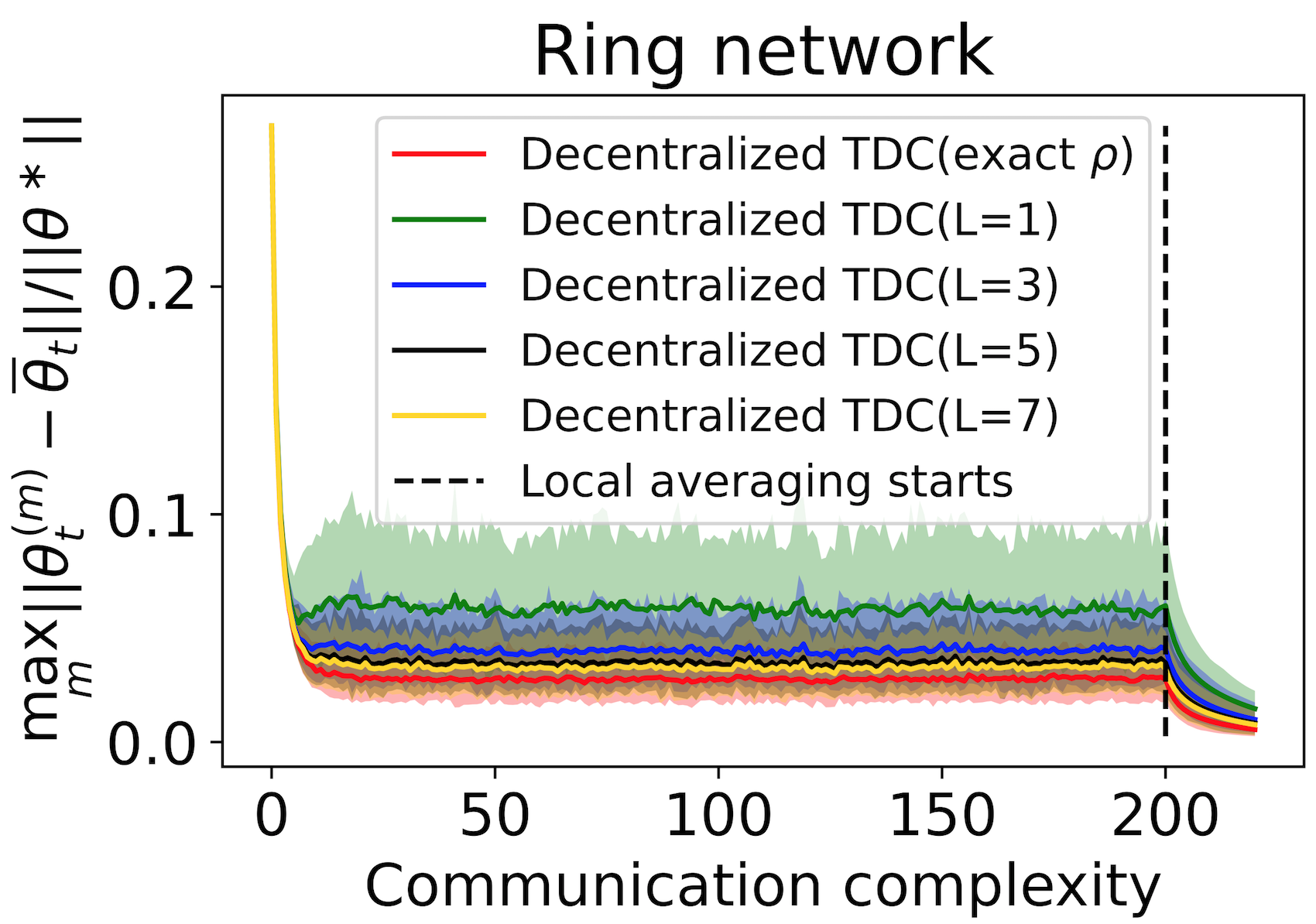}
	\end{minipage} 
	\caption{Impact of total communication rounds $L$ for estimating inexact global importance sampling ratios on consensus error.}
	\label{fig_inexact2}
\end{figure}

\section{Conclusion}
In this paper, we develop two sample-efficient and comm-unication-efficient decentralized TDC algorithms for multi-agent off-policy evaluation. Our algorithms synchronize the local importance sampling ratios among the agents and adopt mini-batch stochastic updates to save communication. In particular, the algorithms keep full privacy of the agents' local information. We prove that the proposed decentralized TDC algorithms achieve a near-optimal sample complexity as well as an optimal communication complexity that improves over the existing decentralized TD(0). In the future, we expect that our algorithms can serve as a fundamental component in the design of advanced policy optimization algorithms for MARL.

\bibliographystyle{icml2020}
\bibliography{decentralTDC.bib} 

\newpage
\onecolumn
\appendix

\addcontentsline{toc}{section}{Appendix} 
\part{Appendix} 
\parttoc 
\allowdisplaybreaks

\section{Notations, Filtration and Summary of Constants}\label{supp: notation}
\subsection*{Notations to rewrite update rules in Algorithm \ref{alg: 1}}
For convenience of convergence analysis of Algorithm \ref{alg: 1}, we define the following notations.
\begin{align}
\overline{\theta}_t=&\frac{1}{M}\sum_{m=1}^{M} \theta_{t}^{(m)}, \quad \overline{w}_t=\frac{1}{M}\sum_{m=1}^{M} w_{t}^{(m)}\nonumber\\
A_t=&\rho_t \phi(s_t) [\gamma \phi(s_{t+1})-\phi(s_t)]^{\top}, \quad B_t=-\gamma \rho_t \phi(s_{t+1}) \phi(s_t)^{\top}, \nonumber\\
C_t=&-\phi(s_t) \phi(s_t)^{\top}, \quad b_{t}^{(m)}=\rho_t R_{t}^{(m)} \phi(s_t), \quad \overline{b}_t=\frac{1}{M}\sum_{m=1}^{M} b_{t}^{(m)} \nonumber\\ 
\overline{A}_t=&\frac{1}{N}\sum_{i=tN}^{(t+1)N-1} A_i, \quad 
\overline{B}_t=\frac{1}{N}\sum_{i=tN}^{(t+1)N-1} B_i, \quad \overline{C}_t=\frac{1}{N}\sum_{i=tN}^{(t+1)N-1} C_i \nonumber\\ 
\overline{b}_{t}^{(m)}=&\frac{1}{N}\sum_{i=tN}^{(t+1)N-1} b_{i}^{(m)}, \quad \overline{\overline{b}}_t=\frac{1}{N}\sum_{i=tN}^{(t+1)N-1} \overline{b}_i=\frac{1}{M}\sum_{m=1}^{M} \overline{b}_{t}^{(m)}.\label{ABCb1} 
\end{align}


With the above notations, the update rules in \eqref{TDC1-theta}\&\eqref{TDC1-w} can be rewritten as follows.
\begin{align}
\theta_{t+1}^{(m)}&=\sum_{m'\in \mathcal{N}_m}V_{m,m'} \theta_{t}^{(m')} + \alpha\big(\overline{A}_t \theta_{t}^{(m)}+\overline{b}_{t}^{(m)}+\overline{B}_t w_{t}^{(m)}\big),\label{theta_iter1b}\\
w_{t+1}^{(m)}&=\sum_{m'\in \mathcal{N}_m}V_{m,m'}w_{t}^{(m')} + \beta\big(\overline{A}_t \theta_{t}^{(m)}+\overline{b}_{t}^{(m)}+\overline{C}_t w_{t}^{(m)}\big).\label{w_iter1b}
\end{align}
Then, by averaging the update rules in \eqref{theta_iter1b}\&\eqref{w_iter1b} over all the agents and using the notations above, we obtain the following update rules of the model average $\overline{\theta}_t, \overline{w}_t$.
\begin{align}
	\overline{\theta}_{t+1}=&\overline{\theta}_t + \alpha\big(\overline{A}_t\overline{\theta}_t+\overline{\overline{b}}_t+\overline{B}_t\overline{w}_t\big), \label{theta_iter_avg1}\\
	\overline{w}_{t+1}=&\overline{w}_t + \beta\big(\overline{A}_t\overline{\theta}_t+\overline{\overline{b}}_t+\overline{C}_t\overline{w}_t\big). \label{w_iter_avg1}
\end{align}

\subsection*{Notations to rewrite update rules in Algorithm \ref{alg: 2}}
Similarly, for convenience of analyzing the Algorithm \ref{alg: 2}, we introduce the following notations.
\begin{align}
A_{t}^{(m)}=&\widehat{\rho}_{t}^{(m)} \phi(s_t) [\gamma \phi(s_{t+1})-\phi(s_t)]^{\top}, \quad B_{t}^{(m)}=-\gamma \widehat{\rho}_{t}^{(m)} \phi(s_{t+1}) \phi(s_t)^{\top}, \nonumber\\
C_t=&-\phi(s_t) \phi(s_t)^{\top}, \quad \widetilde{b}_{t}^{(m)}=\widehat{\rho}_{t}^{(m)} R_{t}^{(m)} \phi(s_t), \nonumber\\ 
\overline{A}_{t}^{(m)}=&\frac{1}{N}\sum_{i=tN}^{(t+1)N-1} A_{i}^{(m)}, \quad 
\overline{B}_{t}^{(m)}=\frac{1}{N}\sum_{i=tN}^{(t+1)N-1} B_{i}^{(m)}, \nonumber\\
\overline{C}_{t}=&\frac{1}{N}\sum_{i=tN}^{(t+1)N-1} C_{i}, \quad,
\overline{\widetilde{b}}_{t}^{(m)}=\frac{1}{N}\sum_{i=tN}^{(t+1)N-1} \widetilde{b}_{i}^{(m)}.\label{ABCb2}
\end{align}

With the above notations, the update rules in \eqref{theta_iter2}\&\eqref{w_iter2} can be rewritten as follows.
\begin{align}
\theta_{t+1}^{(m)}&=\sum_{m'\in \mathcal{N}_m}V_{m,m'} \theta_{t}^{(m')} + \alpha\big(\overline{A}_{t}^{(m)} \theta_{t}^{(m)}+\overline{\widetilde{b}}_{t}^{(m)}+\overline{B}_{t}^{(m)} w_{t}^{(m)}\big),\label{theta_iter2b}\\
w_{t+1}^{(m)}&=\sum_{m'\in \mathcal{N}_m}V_{m,m'}w_{t}^{(m')} +\beta\big(\overline{A}_{t}^{(m)} \theta_{t}^{(m)}+\overline{\widetilde{b}}_{t}^{(m)}+\overline{C}_{t} w_{t}^{(m)}\big).\label{w_iter2b}
\end{align}

Then, by averaging the update rules in \eqref{theta_iter2b}\&\eqref{w_iter2b} over all the agents and using the notations above, we obtain the following update rules of the model average $\overline{\theta}_t,\overline{w}_t$:
\begin{align}
\overline{\theta}_{t+1}=&\overline{\theta}_t+\frac{\alpha}{M}\sum_{m=1}^{M} \big(\overline{A}_{t}^{(m)} \theta_{t}^{(m)}+\overline{\widetilde{b}}_{t}^{(m)}+\overline{B}_{t}^{(m)} w_{t}^{(m)}\big), \label{theta_iter_avg2}\\
\overline{w}_{t+1}=&\overline{w}_t+\frac{\beta}{M}\sum_{m=1}^{M} \big(\overline{A}_{t}^{(m)} \theta_{t}^{(m)}+\overline{\widetilde{b}}_{t}^{(m)}+\overline{C}_{t} w_{t}^{(m)}\big). \label{w_iter_avg2}
\end{align}

\subsection*{Filtration}
We define the filtration $\mathcal{F}_t=\sigma\big(\{s_{t'},a_{t'}\}_{t'=1}^{tN-1}\cup\{s_{tN}\}\big)$. It can be verified that
\begin{align}
	\overline{A}_t, \overline{B}_t, \overline{C}_t, \overline{b}_t^{(m)}, \overline{\overline{b}}_t, 	\overline{A}_t^{(m)}, \overline{B}_t^{(m)}, \overline{\widetilde{b}}_t^{(m)} \in& \mathcal{F}_{t+1}/\mathcal{F}_t. \label{ABCb_filter}
\end{align}
Hence, for both Algorithm \ref{alg: 1} and Algorithm \ref{alg: 2}, their parameters satisfy
\begin{align}
	\theta_{t}^{(m)}, \overline{\theta}_{t}, w_{t}^{(m)}, \overline{w}_t, w_t^* \in& \mathcal{F}_t/\mathcal{F}_{t-1}, \label{para_filter}
\end{align}
where $w_t^* :=-C^{-1}(A\overline{\theta}_t+b)$.

\subsection*{Additional Notations and Constants}
\begin{itemize}
\item Agent index set: $\mathcal{M}=\{1,2,\ldots,M\}$.
\item Collections of actions and policies among agents: $a_t=\{a_{t}^{(m)}\}_{m=1}^M$, $\pi=\{\pi^{(m)}\}_{m=1}^M$, $\pi_b=\{\pi_{b}^{(m)}\}_{m=1}^M$.
\item Local importance sampling ratio: $\rho^{(m)}(s,a)=\pi^{(m)}(a|s)/\pi_{b}^{(m)}(a|s)\in [\rho_{\min},\rho_{\max}]$, \\
$\rho_{t}^{(m)} = \rho^{(m)}(s_t,a_{t}^{(m)})=\pi^{(m)}(a_{t}^{(m)}|s_t)/\pi_{b}^{(m)}(a_{t}^{(m)}|s_t)\in [\rho_{\min},\rho_{\max}]$.
\item Global importance sampling ratio: $\rho(s,\{a^{(m)}\}_{m})=\prod_{m=1}^{M}\rho^{(m)}(s,a^{(m)})$,\\
$\rho_{t}=\rho(s_t,a_t)=\prod_{m=1}^{M}\rho_{t}^{(m)}=\prod_{m=1}^{M}\frac{\pi^{(m)}(a_{t}^{(m)}|s_t)}{\pi_{b}^{(m)}(a_{t}^{(m)}|s_t)}\in [\rho_{\min},\rho_{\max}]$, \\
$\widehat{\rho}_t^{(m)}$ is the estimated global importance sampling ratio obtained by agent $m$.
\item Expected values: $A\red{\stackrel{\triangle}{=}}\mathbb{E}_{\pi_b}[A_t]\red{=\mathbb{E}_{\pi_b}[\overline{A}_t]}$, $B\stackrel{\triangle}{=}\mathbb{E}_{\pi_b}[B_t]\red{=\mathbb{E}_{\pi_b}[\overline{B}_t]}$, $C\stackrel{\triangle}{=}\mathbb{E}_{\pi_b}[C_t]\red{=\mathbb{E}_{\pi_b}[\overline{C}_t]}$, $b^{(m)}\red{\stackrel{\triangle}{=}\mathbb{E}_{\pi_b}\big[b_{t}^{(m)}\big]}=\mathbb{E}_{\pi_b}\big[\overline{b}_{t}^{(m)}\big]$, $b\stackrel{\triangle}{=}\red{\mathbb{E}_{\pi_b}\big[b_{t}\big]=}\mathbb{E}_{\pi_b}\big[\overline{b}_{t}\big]=\mathbb{E}_{\pi_b}\big[\overline{\overline{b}}_{t}\big]=\frac{1}{M}\sum_{m=1}^{M} b^{(m)}$, 
where $\mathbb{E}_{\pi_b}$ denotes the expectation when $s_t\sim\mu_{\pi_{b}}$ and $a_t\sim \pi_b(\cdot|s_t)$. We will use the relationship $C=A^{\top}+B=A+B^{\top}$ later. 
\item Target parameter values: $\theta^*=-A^{-1}b$ and $w_t^*=-C^{-1}(A\overline{\theta}_{t}+b)$.
\item Parameter matrices: 
$\Theta_t=[\theta_t^{(1)};\theta_t^{(2)};\ldots;\theta_t^{(M)}]^{\top}\red{\in \mathbb{R}^{M\times d}}$, $W_t=[w_t^{(1)};w_t^{(2)};\ldots;w_t^{(M)}]^{\top}\red{\in \mathbb{R}^{M\times d}}$.
\item $\Omega_A=\rho_{\max}(1+\gamma)$, $\Omega_B=\rho_{\max}\gamma$, 
$\Omega_b=\rho_{\max}R_{\max}$.
\item \red{$D_A=(1+\gamma)^2(\rho_{\max}^2/\rho_{\min})\ln^2 (\rho_{\max}/\rho_{\min})$, $D_B=\gamma^2(\rho_{\max}^2/\rho_{\min})\ln^2 (\rho_{\max}/\rho_{\min})$, \\
$D_b=R_{\max}^2 (\rho_{\max}^2/\rho_{\min})\ln^2 (\rho_{\max}/\rho_{\min})$.}
\item \red{$\widetilde{\Omega}_A=\Omega_A+\sqrt{D_A}$, $\widetilde{\Omega}_B=\Omega_B+\sqrt{D_B}$, $\widetilde{\Omega}_b=\Omega_b+\sqrt{D_b}$.}
\item $\lambda_{1}=-\lambda_{\max}(A^{\top}C^{-1}A)>0$, $\lambda_{2}=-\lambda_{\max}(C)>0$. $\sigma_2\in(0,1)$ is the second largest singular value of $V$. 
\item Difference matrix: $\Delta=I-\frac{1}{M}\mathbf{1}\mathbf{1}^{\top} \in \mathbb{R}^{M\times M}$. 
\item $c_{\text{sd}}=\frac{2\nu}{1-\delta}$, $c_{\text{var}}=\frac{8(\nu+1-\delta)}{1-\delta}$, $c_{\text{var,2}}=\frac{8(1+\|C^{-1}\|_F^2)(\nu+1-\delta)}{1-\delta}$,  $c_{\text{var,3}}=\frac{8(1+\Omega_{B}^2\|C^{-1}\|_F^2)(\nu+1-\delta)}{1-\delta}$.
\item  $c_{\text{para}}=\max_{m\in\mathcal{M}}\|\theta_{0}^{(m)}\| + \max_{m\in\mathcal{M}}\|w_{0}^{(m)}\| + \red{\frac{2\widetilde{\Omega}_{b}}{\widetilde{\Omega}_{A}+1}}$
\item $c_1=\frac{20}{\lambda_2}\Omega_{A}^2\Omega_{B}^2\|C^{-1}\|_F^2$, $c_2=\frac{14 c_{\text{var,2}}\Omega_{A}^2}{\lambda_{2}}$, $c_3= \frac{40}{\lambda_2}\Omega_{A}^4\|C^{-1}\|_F^2 \big(1+\Omega_{B}^2\|C^{-1}\|_F^2\big)$, $c_4=\frac{14 c_{\text{var,2}}}{\lambda_{2}} \big(\Omega_{A}^2\big\|\theta^*\big\|^2 + \Omega_{b}^2\big)$,\\ $c_5=\frac{20}{\lambda_2}\Omega_{A}^2\|C^{-1}\|_F^2 \big[c_{\text{var,3}}\Omega_{A}^2\big\|\theta^*\big\|^2+c_{\text{var}}\Omega_{b}^2\big(1+\Omega_{B}^2\|C^{-1}\|_F^2\big)\big]$, $c_6=\frac{9\Omega_{B}^2}{\lambda_{1}}$,\\ $c_7=\frac{9}{\lambda_{1}}  \big[c_{\text{var,3}}\Omega_{A}^2\big\|\theta^*\big\|^2+c_{\text{var}}\Omega_{b}^2\big(1+\Omega_{B}^2\|C^{-1}\|_F^2\big)\big]$, $c_8=\frac{28c_{\text{var,2}}\Omega_A^2}{\lambda_2}$, $c_9=\frac{28 c_{\text{var,2}}}{\lambda_{2}} \big(\Omega_{A}^2\big\|\theta^*\big\|^2 + \Omega_{b}^2\big)$,\\
$c_{10}=\frac{156M^2}{\lambda_2}(D_A+D_B)c_{\text{para}}^2$, $c_{11}=\frac{488M^2D_b}{\lambda_2}$, $c_{12}=\|\Delta \Theta_{0}\|_F+\|\Delta W_{0}\|_F + \frac{4\beta\Omega_b\sqrt{M}}{1-\sigma_2}$, \\
\red{$c_{13}=2\max(\widetilde{\Omega}_{A},\widetilde{\Omega}_{B},1)$, $c_{14}=2M \big[\widetilde{\Omega}_A +(\widetilde{\Omega}_B+1)(1+\Omega_A\|C^{-1}\|_F)\big]$,\\ $c_{15}=2M\big[\widetilde{\Omega}_{b}+\Omega_{b}(\widetilde{\Omega}_B+1)\|C^{-1}\|_F+\big(\widetilde{\Omega}_A+\Omega_{A}(\widetilde{\Omega}_{B}+1)\|C^{-1}\|_F\big)\|\theta^*\|\big]$, $c_{16}=\frac{12Mc_{13}^2(1+\sigma_2^2)}{1-\sigma_2^2}$,\\ $c_{17}=\frac{12c_{14}^2(1+\sigma_2^2)}{1-\sigma_2^2}$, $c_{18}=\frac{6c_{15}^2(1+\sigma_2^2)}{1-\sigma_2^2}$, $c_{19}=c_{18}+c_{17}\Big( \big\|\overline{\theta}_{0}-\theta^*\big\|^2+\big\|\overline{w}_{0}-w_0^*\big\|^2 + \frac{2c_9}{c_8} +\frac{c_{11}}{\lambda_1}\Big)$, $c_{20}=\frac{\lambda_2 c_{10}c_{17}}{16}$.}
\end{itemize}

\section{Proof of Theorem \ref{thm_sync}} \label{appendix_thm_sync}

\thmsync*

\begin{proof}
	We first bound the tracking error. Note that
	\begin{align}
	&\mathbb{E}\big[\|\overline{w}_{t+1}-w_t^*\|^2\big|\mathcal{F}_t\big]\nonumber\\
	&\red{\stackrel{(i)}{=}} \mathbb{E}\big[\|\overline{w}_{t}+\beta(\overline{A}_{t}\overline{\theta}_{t}+\overline{\overline{b}}_{t}+\overline{C}_{t}\overline{w}_{t})-w_t^*\|^2 \big|\mathcal{F}_t\big] \nonumber\\
	&= \|\overline{w}_{t}-w_t^*\|^2 + 2\beta(\overline{w}_{t}-w_t^*)^{\top}\mathbb{E}\big[\overline{A}_{t}\overline{\theta}_{t}+\overline{\overline{b}}_{t}+\overline{C}_{t}\overline{w}_{t}\big|\mathcal{F}_t\big] + \beta^2 \mathbb{E}\big[\big\|\overline{A}_{t}\overline{\theta}_{t}+\overline{\overline{b}}_{t}+\overline{C}_{t}\overline{w}_{t}\big\|^2\big|\mathcal{F}_t\big] \nonumber\\
	&\stackrel{(ii)}{\le} \|\overline{w}_{t}-w_t^*\|^2 + \beta\Big(\frac{2c_{\text{sd}}}{N}-\lambda_{2}\Big) \big\|\overline{w}_{t}-w_t^*\big\|^2 + \frac{3\beta c_{\text{var,2}}\Omega_{A}^2}{N\lambda_{2}} \big\|\overline{\theta}_{t}-\theta^*\big\|^2 + \frac{3\beta c_{\text{var,2}}}{N\lambda_{2}} \big(\Omega_{A}^2\big\|\theta^*\big\|^2 + \Omega_{b}^2\big) \nonumber\\
	&\quad +\frac{4\beta^2 c_{\text{var,2}}\Omega_{A}^2}{N} \|\overline{\theta}_t-\theta^*\|^2 + 4\beta^2 \big\|\overline{w}_{t}-w_t^*\big\|^2  + \frac{4\beta^2 c_{\text{var,2}}}{N} (\Omega_{A}^2\|\theta^*\|^2 + \Omega_{b}^2) \nonumber\\
	&\le \Big[1+\beta\Big(\frac{2c_{\text{sd}}}{N}-\lambda_{2}+4\beta\Big)\Big] \|\overline{w}_{t}-w_t^*\|^2 + \frac{\beta c_{\text{var,2}}\Omega_{A}^2}{N} \Big(\frac{3}{\lambda_{2}}+4\beta\Big) \big\|\overline{\theta}_{t}-\theta^*\big\|^2 + \frac{\beta c_{\text{var,2}}}{N} \Big(\frac{3}{\lambda_{2}}+4\beta\Big) \big(\Omega_{A}^2\big\|\theta^*\big\|^2 + \Omega_{b}^2\big) \nonumber\\
	&\stackrel{(iii)}{\le} \Big(1-\frac{\beta\lambda_{2}}{2}\Big) \|\overline{w}_{t}-w_t^*\|^2 + \frac{7\beta c_{\text{var,2}}\Omega_{A}^2}{N\lambda_{2}} \big\|\overline{\theta}_{t}-\theta^*\big\|^2 + \frac{7\beta c_{\text{var,2}}}{N\lambda_{2}} \big(\Omega_{A}^2\big\|\theta^*\big\|^2 + \Omega_{b}^2\big) \label{wdelay_err}
	\end{align}
	where \red{(i) uses eq. \eqref{w_iter_avg1}}, (ii) uses eqs. \eqref{SGbias_w} \& \eqref{SGvar_w}, and (iii) uses the conditions that $N\ge\frac{8c_{\text{sd}}}{\lambda_2}$, and $\beta\le\min\Big(\frac{\lambda_2}{16}, \frac{1}{\lambda_{2}}\Big)$. Hence, we further obtain that
	\begin{align}
	&\mathbb{E}\big[\|\overline{w}_{t+1}-w_{t+1}^*\|^2\big|\mathcal{F}_t\big]\nonumber\\
	&\stackrel{(i)}{\le} \Big(1+\frac{1}{2[2/(\beta\lambda_{2})-1]}\Big) \mathbb{E}\big[\|\overline{w}_{t+1}-w_t^*\|^2\big|\mathcal{F}_t\big] + \big[1+2\big(2/(\beta\lambda_{2})-1\big)\big] \mathbb{E}\big[\|w_{t+1}^*-w_t^*\|^2\big|\mathcal{F}_t\big] \nonumber\\
	&\stackrel{(ii)}{\le} \frac{4/(\beta\lambda_{2})-1}{2[2/(\beta\lambda_{2})-1]} \Big[\Big(1-\frac{\beta\lambda_{2}}{2}\Big) \|\overline{w}_{t}-w_t^*\|^2 + \frac{7\beta c_{\text{var,2}}\Omega_{A}^2}{N\lambda_{2}} \big\|\overline{\theta}_{t}-\theta^*\big\|^2 + \frac{7\beta c_{\text{var,2}}}{N\lambda_{2}} \big(\Omega_{A}^2\big\|\theta^*\big\|^2 + \Omega_{b}^2\big) \Big]\nonumber\\ 
	&\quad+\Big(\frac{4}{\beta\lambda_2}-1\Big) \mathbb{E}\big[\|C^{-1}A(\overline{\theta}_{t+1}-\overline{\theta}_{t})\|^2\big|\mathcal{F}_t\big] \nonumber\\
	&\stackrel{(iii)}{\le} \Big(1-\frac{\beta\lambda_{2}}{4}\Big)\|\overline{w}_{t}-w_t^*\|^2 + \frac{14\beta c_{\text{var,2}}\Omega_{A}^2}{N\lambda_{2}} \big\|\overline{\theta}_{t}-\theta^*\big\|^2 + \frac{14\beta c_{\text{var,2}}}{N\lambda_{2}} \big(\Omega_{A}^2\big\|\theta^*\big\|^2 + \Omega_{b}^2\big) \nonumber\\
	&\quad +\Omega_{A}^2\|C^{-1}\|_F^2 \Big(\frac{4}{\beta\lambda_2}-1\Big) \mathbb{E}\big[ \big\|\alpha\big(\overline{A}_t\overline{\theta}_t+\overline{\overline{b}}_t+\overline{B}_t\overline{w}_t\big)\big\|^2 \big|\mathcal{F}_t\big] \nonumber\\
	&\stackrel{(iv)}{=} \Big(1-\frac{\beta\lambda_{2}}{4}\Big)\|\overline{w}_{t}-w_t^*\|^2 + \frac{14\beta c_{\text{var,2}}\Omega_{A}^2}{N\lambda_{2}} \big\|\overline{\theta}_{t}-\theta^*\big\|^2 + \frac{14\beta c_{\text{var,2}}}{N\lambda_{2}} \big(\Omega_{A}^2\big\|\theta^*\big\|^2 + \Omega_{b}^2\big) \nonumber\\
	&\quad +\frac{4\alpha^2}{\beta\lambda_2}\Omega_{A}^2\|C^{-1}\|_F^2 \Big[10\Omega_{A}^2\big(1+\Omega_{B}^2\|C^{-1}\|_F^2\big)\big\|\overline{\theta}_{t}-\theta^*\big\|^2 + 5\Omega_{B}^2\big\|\overline{w}_{t}-w_t^*\big\|^2 \nonumber\\
	&\quad +\frac{5}{N} 
	\big[c_{\text{var,3}}\Omega_{A}^2\big\|\theta^*\big\|^2+c_{\text{var}}\Omega_{b}^2\big(1+\Omega_{B}^2\|C^{-1}\|_F^2\big)\big] \Big] \nonumber\\
	&\stackrel{(v)}{=} \Big(1-\frac{\beta\lambda_{2}}{4}+\frac{\alpha^2 c_1}{\beta}\Big)\|\overline{w}_{t}-w_t^*\|^2 + \Big(\frac{\beta c_2}{N} + \frac{\alpha^2 c_3}{\beta}\Big) \big\|\overline{\theta}_{t}-\theta^*\big\|^2 + \Big(\frac{\beta c_4}{N} + \frac{\alpha^2 c_5}{N\beta}\Big) \nonumber\\
	&\stackrel{(vi)}{\le} \Big(1-\frac{\beta\lambda_{2}}{8}\Big)\|\overline{w}_{t}-w_t^*\|^2 + \Big(\frac{\beta c_2}{N} + \frac{\alpha^2 c_3}{\beta}\Big) \big\|\overline{\theta}_{t}-\theta^*\big\|^2 + \frac{2\beta c_4}{N} \label{w_err_iter}
	\end{align}
	where (i) uses the inequality that $\|a_1+a_2\|^2\le (1+\sigma)\|a_1\|^2+(1+\sigma^{-1})\|a_2\|^2$ for any $a,b\in\mathbb{R}^d$ and $\sigma>0$, (ii) uses eq. \eqref{wdelay_err} as well as the notation that $w_t^*=-C^{-1}(A\overline{\theta}_t+b)$, (iii) uses $\beta\le\frac{1}{\lambda_{2}}$ (this implies that $\frac{4/(\beta\lambda_{2})-1}{2[2/(\beta\lambda_{2})-1]}\le\frac{3}{2}<2$) and eqs. \eqref{theta_iter_avg1} \& \eqref{bound_A}, (iv) uses eq. \eqref{SGvar_theta}, and (v) denotes that $c_1=\frac{20}{\lambda_2}\Omega_{A}^2\Omega_{B}^2\|C^{-1}\|_F^2$, $c_2=\frac{14 c_{\text{var,2}}\Omega_{A}^2}{\lambda_{2}}$, $c_3= \frac{40}{\lambda_2}\Omega_{A}^4\|C^{-1}\|_F^2 \big(1+\Omega_{B}^2\|C^{-1}\|_F^2\big)$, $c_4=\frac{14 c_{\text{var,2}}}{\lambda_{2}} \big(\Omega_{A}^2\big\|\theta^*\big\|^2 + \Omega_{b}^2\big)$, $c_5=\frac{20}{\lambda_2}\Omega_{A}^2\|C^{-1}\|_F^2 \big[c_{\text{var,3}}\Omega_{A}^2\big\|\theta^*\big\|^2+c_{\text{var}}\Omega_{b}^2\big(1+\Omega_{B}^2\|C^{-1}\|_F^2\big)\big]$, and (vi) uses the conditions that $\alpha\le \beta \min\Big(\frac{1}{3}\sqrt{\frac{\lambda_2}{c_1}},  \sqrt{\frac{c_4}{c_5}}\Big)$. 
	
	On the other hand, we can bound the convergence error of the model parameters as follows.
	\begin{align}
	&\mathbb{E}\big[\|\overline{\theta}_{t+1}-\theta^*\|^2\big|\mathcal{F}_t\big]\nonumber\\
	&\red{\stackrel{(i)}{=}} \mathbb{E}\big[\|\overline{\theta}_{t}+\alpha\big(\overline{A}_{t}\overline{\theta}_{t}+\overline{\overline{b}}_{t}+\overline{B}_{t}\overline{w}_{t}\big)-\theta^*\|^2 \big|\mathcal{F}_t\big] \nonumber\\
	&= \|\overline{\theta}_{t}-\theta^*\|^2 + 2\alpha\big(\overline{\theta}_{t}-\theta^*\big)^{\top}\mathbb{E}\big[\overline{A}_{t}\overline{\theta}_{t}+\overline{\overline{b}}_{t}+\overline{B}_{t}\overline{w}_{t}\big|\mathcal{F}_t\big] + \alpha^2 \mathbb{E}\big[\big\|\overline{A}_{t}\overline{\theta}_{t}+\overline{\overline{b}}_{t}+\overline{B}_{t}\overline{w}_{t}\big\|^2\big|\mathcal{F}_t\big] \nonumber\\
	&\stackrel{(ii)}{\le} \Big[1+\alpha\Big( \frac{2c_{\text{var,3}}\Omega_{A}^2}{N\lambda_{1}}-\lambda_{1} \Big)+10\alpha^2\Omega_{A}^2\big(1+\Omega_{B}^2\|C^{-1}\|_F^2\big)\Big]\|\overline{\theta}_{t}-\theta^*\|^2 +\alpha\Omega_{B}^2 \Big(\frac{4}{\lambda_{1}}+5\alpha\Big)\big\|\overline{w}_{t}-w_t^*\big\|^2 \nonumber\\ 
	&\quad + \frac{\alpha}{N} \Big(\frac{4}{\lambda_{1}}+5\alpha\Big)
	\big[c_{\text{var,3}}\Omega_{A}^2\big\|\theta^*\big\|^2+c_{\text{var}}\Omega_{b}^2\big(1+\Omega_{B}^2\|C^{-1}\|_F^2\big)\big] \nonumber\\
	&\stackrel{(iii)}{\le} \Big(1-\frac{\alpha\lambda_{1}}{2}\Big)\|\overline{\theta}_{t}-\theta^*\|^2 + \frac{9\alpha\Omega_{B}^2}{\lambda_{1}} \big\|\overline{w}_{t}-w_t^*\big\|^2 + \frac{9\alpha}{N\lambda_{1}} \big[c_{\text{var,3}}\Omega_{A}^2\big\|\theta^*\big\|^2+c_{\text{var}}\Omega_{b}^2\big(1+\Omega_{B}^2\|C^{-1}\|_F^2\big)\big] \nonumber\\
	&\stackrel{(iv)}{\le} \Big(1-\frac{\alpha\lambda_{1}}{2}\Big)\|\overline{\theta}_{t}-\theta^*\|^2 + \alpha c_6 \big\|\overline{w}_{t}-w_t^*\big\|^2 + \frac{\alpha c_7}{N} \label{theta_err_iter}
	\end{align}
	where \red{(i) uses eq. \eqref{theta_iter_avg1},} (ii) uses eqs. \eqref{SGbias_theta} \& \eqref{SGvar_theta}, (iii) uses the conditions that $N\ge \frac{8c_{\text{var,3}}\Omega_{A}^2}{\lambda_{1}^2}$ and that $\alpha\le \min\Big[\frac{\lambda_1}{40\Omega_{A}^2\big(1+\Omega_{B}^2\|C^{-1}\|_F^2\big)},\frac{1}{\lambda_{1}}\Big]$, and (iv) uses the notations that $c_6=\frac{9\Omega_{B}^2}{\lambda_{1}}$ and that $c_7=\frac{9}{\lambda_{1}}  \big[c_{\text{var,3}}\Omega_{A}^2\big\|\theta^*\big\|^2+c_{\text{var}}\Omega_{b}^2\big(1+\Omega_{B}^2\|C^{-1}\|_F^2\big)\big]$.
	
	Taking expectation on both sides of eqs. \eqref{w_err_iter}\&\eqref{theta_err_iter} and then summing them up yields that 
	\begin{align}
	&{\mathbb{E}(\|\overline{\theta}_{t+1}-\theta^*\|^2)} + {\mathbb{E}(\|\overline{w}_{t+1}-w_{t+1}^*\|^2)} \nonumber\\
	&\le \Big(1-\frac{\alpha\lambda_{1}}{2}+\frac{\beta c_2}{N} + \frac{\alpha^2 c_3}{\beta}\Big) \mathbb{E}(\|\overline{\theta}_{t}-\theta^*\|^2) + \Big(1-\frac{\beta\lambda_{2}}{8}+\alpha c_6\Big) \mathbb{E}(\|\overline{w}_{t}-w_{t}^*\|^2) + \frac{\alpha c_7+2\beta c_4}{N} \nonumber\\
	&\stackrel{(i)}{\le} \Big(1-\frac{\alpha\lambda_{1}}{4}\Big) \mathbb{E}(\|\overline{\theta}_{t}-\theta^*\|^2) + \Big(1-\frac{\beta\lambda_{2}}{16}\Big) \mathbb{E}(\|\overline{w}_{t}-w_{t}^*\|^2) + \frac{3\beta c_4}{N} \nonumber\\
	&\stackrel{(ii)}{\le} \Big(1-\frac{\alpha\lambda_{1}}{4}\Big) \big[\mathbb{E}(\|\overline{\theta}_{t}-\theta^*\|^2)+\mathbb{E}(\|\overline{w}_{t}-w_{t}^*\|^2)\big] + \frac{3\beta c_4}{N}, \nonumber
	\end{align}
	where (i) uses the conditions that $N\ge \frac{8\beta c_2}{\alpha \lambda_1}$ and that $\alpha\le\beta \min \Big(\frac{\lambda_2}{16 c_6}, \frac{\lambda_1}{8c_3}, \frac{c_4}{c_7}\Big)$, and (ii) uses the condition that $\alpha\le \frac{\beta\lambda_2}{4\lambda_1}$. Iterating the inequality above proves eq. \eqref{sum_err}. 
	
Next, we prove eq. \eqref{consensus_err1}. 
Note that the local model averaging iterations can be rewritten into the matrix-vector form as $\Theta_{t+1}=V\Theta_t$ where $T\le t\le T+T'$ and $\Theta_t\stackrel{\triangle}{=}[\theta_{t}^{(1)}; \theta_{t}^{(2)}; \ldots; \theta_{t}^{(M)}]^{\top}$. Hence, it can be derived from Lemma \ref{lemma_V} that
	\begin{align}
		\|\Delta\Theta_{T+T'}\|_F=\|\Delta V^{T'}\Theta_T\|_F=\|V^{T'}\Delta\Theta_T\|_F \le\sigma_2^{T'}\|\Delta\Theta_T\|_F.  \label{post_consensus_decay}
	\end{align}
	To obtain an upper bound of $\|\Delta\Theta_T\|_F$, we rewrite the update rules \eqref{theta_iter1b}\&\eqref{w_iter1b} of Algorithm \ref{alg: 1} into the following matrix-vector form.
	\begin{align}
	\Theta_{t+1}=&V\Theta_t + \alpha\big(\Theta_t \overline{A}_t^{\top}+\big[\overline{b}_t^{(1)};\ldots;\overline{b}_t^{(M)}\big]^{\top}+W_t\overline{B}_t^{\top}\big), \label{Theta1}\\
	W_{t+1}=&VW_t + \beta\big(\Theta_t \overline{A}_t^{\top}+\big[\overline{b}_t^{(1)};\ldots;\overline{b}_t^{(M)}\big]^{\top}+W_t\overline{C}_t^{\top}\big).\label{W1}
	\end{align}
	Hence, eq. \eqref{Theta1} implies that
	\begin{align}
		\|\Delta\Theta_{t+1}\|_F&\stackrel{(i)}{\le} \|V\Delta\Theta_t\|_F+\alpha\big(\|\Delta\Theta_t\|_F \|\overline{A}_t\|_F+\|\Delta\big[\overline{b}_t^{(1)};\ldots;\overline{b}_t^{(M)}\big]^{\top}\|_F+\|\Delta W_t\|_F \|\overline{B}_t\|_F\big) \nonumber\\
		&\stackrel{(ii)}{\le} (\sigma_2+\alpha\Omega_A)\|\Delta\Theta_t\|_F + \alpha\sqrt{\sum_{m=1}^{M} \|\overline{b}_t^{(m)}-\overline{\overline{b}}_t\|^2} + \alpha\Omega_B\|\Delta W_t\|_F \nonumber\\
		&\stackrel{(iii)}{\le} (\sigma_2+\alpha\Omega_A)\|\Delta\Theta_t\|_F + \alpha\sqrt{\sum_{m=1}^{M} \|\overline{b}_t^{(m)}\|^2} + \alpha\Omega_B\|\Delta W_t\|_F \nonumber\\
		&\stackrel{(iv)}{\le} (\sigma_2+\alpha\Omega_A)\|\Delta\Theta_t\|_F + \alpha\Omega_b\sqrt{M} + \alpha\Omega_B\|\Delta W_t\|_F, \label{Theta_consensus1}
	\end{align}
	where (i) uses the item 1 of Lemma \ref{lemma_V} that $\Delta V=V\Delta$, (ii) uses the item 3 of Lemma \ref{lemma_V} as well as eqs \eqref{bound_A}\&\eqref{bound_B}, (iii) uses the inequality that $\mathbb{E}\|X-\mathbb{E}X\|^2 \le \mathbb{E}\|X\|^2$ for any random vector $X$, (iv) uses eq. \eqref{bound_b}. Similarly, we can obtain that
	\begin{align}
		\|\Delta W_{t+1}\|_F&\le \beta\Omega_A\|\Delta\Theta_t\|_F + \beta\Omega_b\sqrt{M} + (\sigma_2+\beta)\|\Delta W_t\|_F. \label{W_consensus1}
	\end{align}
	Summing up eqs. \eqref{Theta_consensus1}\&\eqref{W_consensus1} yields that
	\begin{align}
		\|\Delta \Theta_{t+1}\|_F+\|\Delta W_{t+1}\|_F&\le [\sigma_2+(\alpha+\beta)\Omega_A]\|\Delta\Theta_t\|_F + (\alpha+\beta)\Omega_b\sqrt{M} + (\sigma_2+\beta+\alpha\Omega_B)\|\Delta W_t\|_F \nonumber\\
		&\stackrel{(i)}{\le} [\sigma_2+\beta(\Omega_A+1)]\big(\|\Delta \Theta_{t}\|_F+\|\Delta W_{t}\|_F\big) +2\beta\Omega_b\sqrt{M} \nonumber\\
		&\stackrel{(ii)}{\le} \Big(\frac{1+\sigma_2}{2}\Big)\big(\|\Delta \Theta_{t}\|_F+\|\Delta W_{t}\|_F\big) +2\beta\Omega_b\sqrt{M}, \nonumber
	\end{align}
	where (i) uses the condition that $\alpha\le \min\Big(\frac{\Omega_A \beta}{\Omega_B},\frac{\beta}{\Omega_A},\beta\Big)$, and (ii) uses the condition that $\beta\le\frac{1-\sigma_2}{2(\Omega_A+1)}$. Iterating the above inequality yields that
	\begin{align}
		\|\Delta \Theta_{T}\|_F&\le \|\Delta \Theta_{T}\|_F+\|\Delta W_{T}\|_F \nonumber\\
		&\le \Big(\frac{1+\sigma_2}{2}\Big)^T (\|\Delta \Theta_{0}\|_F+\|\Delta W_{0}\|_F) + \frac{4\beta\Omega_b\sqrt{M}}{1-\sigma_2} \nonumber\\
		&\le \|\Delta \Theta_{0}\|_F+\|\Delta W_{0}\|_F + \frac{4\beta\Omega_b\sqrt{M}}{1-\sigma_2} \stackrel{\triangle}{=} c_{12}. \label{pre_consensus_decay}
	\end{align}
	
	Substituting eq. \eqref{pre_consensus_decay} into eq. \eqref{post_consensus_decay} yields that $\|\Delta\Theta_{T+T'}\|_F \le \sigma_2^{T'}c_{12}$. 
	Then, eq. \eqref{consensus_err1} is proved as follows. 
	\begin{align}
	\mathbb{E}(\|\theta_{T+T'}^{(m)}-\overline{\theta}_{T}\|^2) &\stackrel{(i)}{=} \mathbb{E}(\|\theta_{T+T'}^{(m)}-\overline{\theta}_{T+T'}\|^2) \le \mathbb{E}(\|\Delta\Theta_{T+T'}\|_F^2) \le \sigma_2^{2T'}c_{12}^2, \nonumber
	\end{align}
	where (i) uses the fact that the model average does not change, i.e., $\overline{\theta}_{t+1}=\overline{\theta}_t$ for all $T\le t\le T+T'-1$. 
	
	To summarize, the following conditions of the hyperparameters have been used in the proof of Theorem \ref{thm_sync}. 
	\begin{align}
	\alpha&\le \min\Big[\frac{\lambda_1}{40\Omega_{A}^2\big(1+\Omega_{B}^2\|C^{-1}\|_F^2\big)}, \frac{1}{\lambda_{1}}, \frac{\beta\lambda_2}{4\lambda_1}, \frac{\beta \lambda_2}{16 c_6}, \frac{\beta\lambda_1}{8c_3}, \frac{\beta c_4}{c_7}, \frac{\beta}{3}\sqrt{\frac{\lambda_2}{c_1}}, \beta\sqrt{\frac{c_4}{c_5}},\frac{\beta\Omega_A}{\Omega_B},\frac{\beta}{\Omega_A},\beta\Big]\nonumber\\
	&\red{=\min \{\mathcal{O} (1), \mathcal{O} (\beta)\}},\label{alpha_cond1}\\ \beta&\le\min\Big(\frac{\lambda_2}{16},\frac{1}{\lambda_2},\frac{1-\sigma_2}{2(\Omega_A+1)}\Big)\red{=\mathcal{O} (1)},\label{beta_cond1}\\
	N&\ge \max\Big[\frac{8c_{\text{sd}}}{\lambda_2},\frac{8c_{\text{var,3}}\Omega_{A}^2}{\lambda_{1}^2}, \frac{8\beta c_2}{\alpha \lambda_1}\Big]\red{=\max \{\mathcal{O} (1), \mathcal{O} (\beta/\alpha)\}}.\label{N_cond1}
	\end{align} 
\end{proof}

\section{Proof of Proposition \ref{prop_sync}} \label{appendix_prop_sync}

\propsync*

\begin{proof}
	We choose the following hyperparameter values.
	\begin{align}
		N&=\left\lceil \max\Big[\frac{8c_{\text{sd}}}{\lambda_2},\frac{8c_{\text{var,3}}\Omega_{A}^2}{\lambda_{1}^2}, \frac{8\beta c_2}{\alpha \lambda_1},\frac{96\beta c_4}{\alpha \lambda_1\epsilon}\Big] \right\rceil =\mathcal{O}(\epsilon^{-1}) \ge \frac{96\beta c_4}{\alpha \lambda_1\epsilon}, \label{N1}\\
		T&=\left\lceil \frac{\ln[8\epsilon^{-1}(\|\overline{\theta}_{0}-\theta^*\|^2+\|\overline{w}_{0}-w_{0}^*\|^2)]}{\ln(1+\alpha\lambda_{1}/4)} \right\rceil=\mathcal{O}[\ln(\epsilon^{-1})]. \label{T1}
	\end{align}  
	Then, eq. \eqref{sum_err} implies that
	\begin{align}
	&\mathbb{E}(\|\overline{\theta}_{T}-\theta^*\|^2)\nonumber\\
	&\le \frac{12\beta c_4}{\alpha N\lambda_1} + \Big(1-\frac{\alpha\lambda_{1}}{4}\Big)^{T}\big(\|\overline{\theta}_{0}-\theta^*\|^2+\|\overline{w}_{0}-w_{0}^*\|^2\big)\nonumber\\ 
	&\le \frac{\epsilon}{8}+\exp\Big[\ln\big(\|\overline{\theta}_{0}-\theta^*\|^2+\|\overline{w}_{0}-w_{0}^*\|^2\big)+\frac{\ln[8\epsilon^{-1}(\|\overline{\theta}_{0}-\theta^*\|^2+\|\overline{w}_{0}-w_{0}^*\|^2)]}{\ln(1+\alpha\lambda_{1}/4)} \ln\Big(1-\frac{\alpha\lambda_{1}}{4}\Big)\Big] \nonumber\\
	&\stackrel{(i)}{\le}  \frac{\epsilon}{8}+\exp\Big[\ln\big(\|\overline{\theta}_{0}-\theta^*\|^2+\|\overline{w}_{0}-w_{0}^*\|^2\big)+\frac{\ln[8\epsilon^{-1}(\|\overline{\theta}_{0}-\theta^*\|^2+\|\overline{w}_{0}-w_{0}^*\|^2)]}{-\ln(1-\alpha\lambda_{1}/4)} \ln\Big(1-\frac{\alpha\lambda_{1}}{4}\Big)\Big] \nonumber\\
	&= \frac{\epsilon}{8}+\exp\big[-\ln(8\epsilon^{-1})\big]=\frac{\epsilon}{4}, \nonumber
	\end{align} 
	where (i) uses the inequalities that $1+\alpha\lambda_1/4\le(1-\alpha\lambda_1/4)^{-1}$ and that $\ln(1-\alpha\lambda_1/4)<0$. 
	
	Furthermore, we choose $T'=\left\lceil \frac{\ln(4c_{12}^2/\epsilon)}{2\ln(1/\sigma_2)} \right\rceil=\mathcal{O}\big[\ln(\epsilon^{-1})\big]$, under which eq. \eqref{consensus_err1} implies that
	\begin{align}
		\mathbb{E}(\|\theta_{T+T'}^{(m)}-\overline{\theta}_{T}\|^2) &\le \sigma_2^{2T'}c_{12}^2 = c_{12}^2\exp[2T'\ln(\sigma_2)] \le c_{12}^2\exp\Big[2\ln(\sigma_2)\frac{\ln(4c_{12}^2/\epsilon)}{2\ln(1/\sigma_2)}\Big]=\frac{\epsilon}{4}.
	\end{align}
	Hence, $\mathbb{E}(\|\theta_{T+T'}^{(m)}-\theta^*\|^2) \le 2\mathbb{E}(\|\theta_{T+T'}^{(m)}-\overline{\theta}_{T}\|^2) + 2\mathbb{E}(\|\overline{\theta}_{T}-\theta^*\|^2)\le \epsilon$. 
\end{proof}
\section{Proof of Theorem \ref{thm_local}}\label{appendix_thm_local}
\thmlocal*

\begin{proof}
	We first bound the tracking error. Note that
	\begin{align}
		&\mathbb{E}\big[\|\overline{w}_{t+1}-w_t^*\|^2\big|\mathcal{F}_t\big]\nonumber\\
		&\red{\stackrel{(i)}{=}} \mathbb{E}\Big[\Big\|\overline{w}_{t}+\frac{\beta}{M}\sum_{m=1}^{M} (\overline{A}_{t}^{(m)}\theta_{t}^{(m)}+\overline{\widetilde{b}}_{t}^{(m)}+\overline{C}_{t}w_{t}^{(m)})-w_t^*\Big\|^2 \Big|\mathcal{F}_t\Big] \nonumber\\
		&=\mathbb{E}\Big[\Big\|\overline{w}_{t} +\beta(\overline{A}_{t}\overline{\theta}_{t}+\overline{\overline{b}}_{t}+\overline{C}_{t}\overline{w}_{t})-w_t^* +\frac{\beta}{M}\sum_{m=1}^{M} \big[(\overline{A}_{t}^{(m)}-\overline{A}_{t})\theta_{t}^{(m)}+\overline{\widetilde{b}}_{t}^{(m)}-\overline{\overline{b}}_t\big]\Big\|^2 \Big|\mathcal{F}_t\Big] \nonumber\\
		&\stackrel{(ii)}{\le} \Big(1+\frac{1}{6/(\beta\lambda_2)-3}\Big) \mathbb{E}\big[\|\overline{w}_{t}+\beta(\overline{A}_{t}\overline{\theta}_{t}+\overline{\overline{b}}_{t}+\overline{C}_{t}\overline{w}_{t})-w_t^*\|^2 \big|\mathcal{F}_t\big] \nonumber\\
		&\quad+ \Big(1+\frac{6}{\beta\lambda_2}-3\Big) \mathbb{E}\Big[\Big\|\frac{\beta}{M}\sum_{m=1}^{M} \big[(\overline{A}_{t}^{(m)}-\overline{A}_{t})\theta_{t}^{(m)}+\overline{\widetilde{b}}_{t}^{(m)}-\overline{\overline{b}}_t\big]\Big\|^2 \Big|\mathcal{F}_t\Big] \nonumber\\
		&\stackrel{(iii)}{\le} \Big(1+\frac{1}{6/(\beta\lambda_2)-3}\Big) \Big[\Big(1-\frac{\beta\lambda_{2}}{2}\Big) \|\overline{w}_{t}-w_t^*\|^2 + \frac{7\beta c_{\text{var,2}}\Omega_{A}^2}{N\lambda_{2}} \big\|\overline{\theta}_{t}-\theta^*\big\|^2 + \frac{7\beta c_{\text{var,2}}}{N\lambda_{2}} \big(\Omega_{A}^2\big\|\theta^*\big\|^2 + \Omega_{b}^2\big)\Big] \nonumber\\
		&\quad+ \beta^2\Big(\frac{6}{\beta\lambda_2}-2\Big) \mathbb{E}\Big[\frac{1}{M}\sum_{m=1}^{M} \Big(\big\|(\overline{A}_{t}^{(m)}-\overline{A}_{t}) \theta_{t}^{(m)}+\overline{\widetilde{b}}_{t}^{(m)}-\overline{\overline{b}}_t\big\|^2\Big) \Big|\mathcal{F}_t\Big]\nonumber\\
		&\stackrel{(iv)}{\le} \Big(1-\frac{\beta\lambda_{2}}{3}\Big) \|\overline{w}_{t}-w_t^*\|^2 + \frac{14\beta c_{\text{var,2}}\Omega_{A}^2}{N\lambda_{2}} \big\|\overline{\theta}_{t}-\theta^*\big\|^2 + \frac{14\beta c_{\text{var,2}}}{N\lambda_{2}} \big(\Omega_{A}^2\big\|\theta^*\big\|^2 + \Omega_{b}^2\big) \nonumber\\
		&\quad+ 4\beta\Big(\frac{3}{\lambda_2}-\beta\Big) \mathbb{E}\Big[\frac{1}{M}\sum_{m=1}^{M} \Big(\big\|\overline{A}_{t}^{(m)}-\overline{A}_{t}\big\|_F^2 \big\|\theta_{t}^{(m)}\big\|^2+\big\|\overline{\widetilde{b}}_{t}^{(m)}-\overline{\overline{b}}_t\big\|^2\Big) \Big|\mathcal{F}_t\Big]\nonumber\\
		&\stackrel{(v)}{\le} \Big(1-\frac{\beta\lambda_{2}}{3}\Big) \|\overline{w}_{t}-w_t^*\|^2 + \frac{14\beta c_{\text{var,2}}\Omega_{A}^2}{N\lambda_{2}} \big\|\overline{\theta}_{t}-\theta^*\big\|^2 + \frac{14\beta c_{\text{var,2}}}{N\lambda_{2}} \big(\Omega_{A}^2\big\|\theta^*\big\|^2 + \Omega_{b}^2\big) \nonumber\\
		&\quad+ \frac{12\beta}{\lambda_2} M^2 \sigma_2^{2L} \Big( D_A\big[1+\beta(\red{\widetilde{\Omega}_{A}}+1)\big]^{2t}c_{\text{para}}^2+D_b \Big),\label{wdelay_err2}
	\end{align}
	where \red{(i) uses eq. \eqref{w_iter_avg2},} (ii) uses the inequality that $\|a_1+a_2\|^2\le (1+\sigma)\|a_1\|^2+(1+\sigma^{-1})\|a_2\|^2$ for any $a_1,a_2\in\mathbb{R}^d$ and $\sigma>0$, (iii) applies Jensen's inequality to the convex function $\|\cdot\|^2$ and uses eq. \eqref{wdelay_err} which holds under the conditions that $N\ge\frac{8c_{\text{sd}}}{\lambda_2}$, and $\beta\le\min\Big(\frac{\lambda_2}{16}, \frac{1}{\lambda_{2}}\Big)$, (iv) uses the condition that $\beta\le\frac{1}{\lambda_2}$ which implies that $1+\frac{1}{6/(\beta\lambda_2)-3}\le 2$, as well as the inequality that $\|a_1+a_2\|^2\le 2\|a_1\|^2+2\|a_2\|^2$ for any $a_1,a_2\in\mathbb{R}^d$, and (v) uses eqs. \eqref{Adev2}, \eqref{bdev2}\&\eqref{para_bound2}, \red{which hold under the conditions that $L\ge \frac{3\ln M}{2\ln(\sigma_2^{-1})}$, $\alpha\le \beta\min\Big(\frac{\widetilde{\Omega}_{A}}{\widetilde{\Omega}_{B}},\frac{1}{\widetilde{\Omega}_{A}},1\Big)$}. 
	Then, we obtain that
	\begin{align}
	&\mathbb{E}\big[\|\overline{w}_{t+1}-w_{t+1}^*\|^2\big|\mathcal{F}_t\big]\nonumber\\
	&\stackrel{(i)}{\le} \Big(1+\frac{1}{2[3/(\beta\lambda_{2})-1]}\Big) \mathbb{E}\big[\|\overline{w}_{t+1}-w_t^*\|^2\big|\mathcal{F}_t\big] + \big[1+2\big(3/(\beta\lambda_{2})-1\big)\big] \mathbb{E}\big[\|w_{t+1}^*-w_t^*\|^2\big|\mathcal{F}_t\big] \nonumber\\
	&\stackrel{(ii)}{\le} \frac{6/(\beta\lambda_{2})-1}{2[3/(\beta\lambda_{2})-1]} \Big[\Big(1-\frac{\beta\lambda_{2}}{3}\Big) \|\overline{w}_{t}-w_t^*\|^2 + \frac{14\beta c_{\text{var,2}}\Omega_{A}^2}{N\lambda_{2}} \big\|\overline{\theta}_{t}-\theta^*\big\|^2 + \frac{14\beta c_{\text{var,2}}}{N\lambda_{2}} \big(\Omega_{A}^2\big\|\theta^*\big\|^2 + \Omega_{b}^2\big) \nonumber\\
	&\quad+ \frac{12\beta}{\lambda_2} M^2 \sigma_2^{2L} \Big( D_A\big[1+\beta(\red{\widetilde{\Omega}_A}+1)\big]^{2t}c_{\text{para}}^2+D_b \Big) \Big] +\frac{6}{\beta\lambda_2} \mathbb{E}\big[\|C^{-1}A(\overline{\theta}_{t+1}-\overline{\theta}_{t})\|^2\big|\mathcal{F}_t\big] \nonumber\\
	&\stackrel{(iii)}{\le} \Big(1-\frac{\beta\lambda_{2}}{6}\Big)\|\overline{w}_{t}-w_t^*\|^2 + \frac{28\beta c_{\text{var,2}}\Omega_{A}^2}{N\lambda_{2}} \big\|\overline{\theta}_{t}-\theta^*\big\|^2 + \frac{28\beta c_{\text{var,2}}}{N\lambda_{2}} \big(\Omega_{A}^2\big\|\theta^*\big\|^2 + \Omega_{b}^2\big) \nonumber\\
	&\quad+ \frac{24\beta}{\lambda_2} M^2\sigma_2^{2L} \Big( D_A\big[1+\beta(\widetilde{\Omega}_A+1)\big]^{2t}c_{\text{para}}^2+D_b \Big) \nonumber\\
	&\quad +\frac{6}{\beta\lambda_2}\Omega_{A}^2\|C^{-1}\|_F^2 \mathbb{E}\Big[ \Big\| \frac{\alpha}{M}\sum_{m=1}^{M} \big(\overline{A}_{t}^{(m)} \theta_{t}^{(m)}+\overline{\widetilde{b}}_{t}^{(m)}+\overline{B}_{t}^{(m)} w_{t}^{(m)}\big) \Big\|^2 \Big|\mathcal{F}_t\Big]  \nonumber\\
	&\stackrel{(iv)}{\le} \Big(1-\frac{\beta\lambda_{2}}{6}\Big)\|\overline{w}_{t}-w_t^*\|^2 + \frac{28\beta c_{\text{var,2}}\Omega_{A}^2}{N\lambda_{2}} \big\|\overline{\theta}_{t}-\theta^*\big\|^2 + \frac{28\beta c_{\text{var,2}}}{N\lambda_{2}} \big(\Omega_{A}^2\big\|\theta^*\big\|^2 + \Omega_{b}^2\big) \nonumber\\
	&\quad+ \frac{24\beta}{\lambda_2} M^2\sigma_2^{2L} \Big( D_A\big[1+\beta(\widetilde{\Omega}_A+1)\big]^{2t}c_{\text{para}}^2+D_b \Big) \nonumber\\
	&\quad +\frac{12\alpha^2}{\beta\lambda_2} \Omega_{A}^2\|C^{-1}\|_F^2 \mathbb{E}\Big[ \Big\| \frac{1}{M}\sum_{m=1}^{M} \big[(\overline{A}_{t}^{(m)}-\overline{A}_t)\theta_{t}^{(m)}+(\overline{\widetilde{b}}_{t}^{(m)}-\overline{b}_{t}^{(m)})+(\overline{B}_{t}^{(m)}-\overline{B}_t)w_{t}^{(m)}\big] \Big\|^2 \Big|\mathcal{F}_t\Big] \nonumber\\
	&\quad +\frac{12\alpha^2}{\beta\lambda_2} \Omega_{A}^2\|C^{-1}\|_F^2 \mathbb{E}\big[ \big\| \overline{A}_t\overline{\theta}_{t}+\overline{\overline{b}}_{t}+\overline{B}_t \overline{w}_{t} \big\|^2 \big|\mathcal{F}_t\big] \nonumber\\
	&\stackrel{(v)}{\le} \Big(1-\frac{\beta\lambda_{2}}{6}\Big)\|\overline{w}_{t}-w_t^*\|^2 + \frac{28\beta c_{\text{var,2}}\Omega_{A}^2}{N\lambda_{2}} \big\|\overline{\theta}_{t}-\theta^*\big\|^2 + \frac{28\beta c_{\text{var,2}}}{N\lambda_{2}} \big(\Omega_{A}^2\big\|\theta^*\big\|^2 + \Omega_{b}^2\big) \nonumber\\
	&\quad+ \frac{24\beta}{\lambda_2} M^2\sigma_2^{2L} \Big( D_A\big[1+\beta(\widetilde{\Omega}_A+1)\big]^{2t}c_{\text{para}}^2+D_b \Big) \nonumber\\
	&\quad +\frac{36\alpha^2}{\beta\lambda_2} \Omega_{A}^2\|C^{-1}\|_F^2 \mathbb{E}\Big[ \frac{1}{M}\sum_{m=1}^{M} \big[\|\overline{A}_{t}^{(m)}-\overline{A}_t\|_F^2 \|\theta_{t}^{(m)}\|^2+\big\|\overline{\widetilde{b}}_{t}^{(m)}-\overline{b}_{t}^{(m)}\big\|^2+\|\overline{B}_{t}^{(m)}-\overline{B}_t\|_F^2 \|w_{t}^{(m)}\|^2\big] \Big|\mathcal{F}_t\Big] \nonumber\\
	&\quad +\frac{12\alpha^2}{\beta\lambda_2} \Omega_{A}^2\|C^{-1}\|_F^2 \Big[10\Omega_{A}^2\big(1+\Omega_{B}^2\|C^{-1}\|_F^2\big)\big\|\overline{\theta}_{t}-\theta^*\big\|^2 + 5\Omega_{B}^2\big\|\overline{w}_{t}-w_t^*\big\|^2 \nonumber\\
	&\quad + \frac{5}{N} 
	\big[c_{\text{var,3}}\Omega_{A}^2\big\|\theta^*\big\|^2+c_{\text{var}}\Omega_{b}^2\big(1+\Omega_{B}^2\|C^{-1}\|_F^2\big)\big]\Big] \nonumber\\
	&\stackrel{(vi)}{\le} \Big(1-\frac{\beta\lambda_{2}}{6}+\frac{3c_1\alpha^2}{\beta}\Big)\|\overline{w}_{t}-w_t^*\|^2 + \Big(\frac{\beta c_8}{N} + \frac{3c_3\alpha^2}{\beta}\Big) \big\|\overline{\theta}_{t}-\theta^*\big\|^2 +\frac{\beta c_9}{N} \nonumber\\ 
	&\quad+ \frac{24\beta}{\lambda_2} M^2\sigma_2^{2L} \Big( D_A\big[1+\beta(\widetilde{\Omega}_A+1)\big]^{2t}c_{\text{para}}^2+D_b \Big) \nonumber\\
	&\quad +\frac{36\alpha^2}{\beta\lambda_2} \Omega_{A}^2\|C^{-1}\|_F^2 M^2\sigma_2^{2L} \Big( (D_A+D_B)\big[1+\beta(\widetilde{\Omega}_A+1)\big]^{2t}c_{\text{para}}^2+D_b \Big) + \frac{3\alpha^2 c_5}{N\beta}\nonumber\\
	&\le \Big(1-\frac{\beta\lambda_{2}}{6}+\frac{3c_1\alpha^2}{\beta}\Big)\|\overline{w}_{t}-w_t^*\|^2 + \Big(\frac{\beta c_8}{N} + \frac{3c_3\alpha^2}{\beta}\Big) \big\|\overline{\theta}_{t}-\theta^*\big\|^2 +\frac{\beta c_9}{N} \nonumber\\ 
	&\quad +\Big(\frac{36\alpha^2}{\beta\lambda_2} \Omega_{A}^2\|C^{-1}\|_F^2+ \frac{24\beta}{\lambda_2}\Big) M^2\sigma_2^{2L} \Big( (D_A+D_B)\big[1+\beta(\widetilde{\Omega}_A+1)\big]^{2t}c_{\text{para}}^2+D_b \Big) + \frac{3\alpha^2 c_5}{N\beta}, \label{w_err_iter2}
	\end{align}
	where (i) uses the inequality that $\|a_1+a_2\|^2\le (1+\sigma)\|a_1\|^2+(1+\sigma^{-1})\|a_2\|^2$ for any $a_1,a_2\in\mathbb{R}^d$ and $\sigma>0$, (ii) uses eq. \eqref{wdelay_err2} as well as the notation that $w_t^*=-C^{-1}(A\overline{\theta}_t+b)$, (iii) uses $\beta\le\frac{1}{\lambda_{2}}$ (this implies that $\frac{6/(\beta\lambda_{2})-1}{2[3/(\beta\lambda_{2})-1]}\le 1+\frac{1}{4}<2$) and eqs. \eqref{theta_iter_avg2} \& \eqref{bound_A}, (iv) uses the notation that $\overline{\overline{b}}_t=\frac{1}{M}\sum_{m=1}^{M} \overline{b}_{t}^{(m)}$ as well as the inequality that $\|a_1+a_2\|^2\le 2(\|a_1\|^2+\|a_2\|^2)$ for any $a_1,a_2\in\mathbb{R}^d$, (v) uses eq. \eqref{SGvar_theta} as well as the inequality that $\|a_1+a_2+a_3\|^2\le 3(\|a_1\|^2+\|a_2\|^2+\|a_3\|^2)$ for any $a_1,a_2,a_3\in\mathbb{R}^d$, and (vi) uses \red{eqs. \eqref{Adev2}, \eqref{Bdev2}, \eqref{bdev2}\&\eqref{para_bound2}} as well as the notations that $c_1=\frac{20}{\lambda_2}\Omega_{A}^2\Omega_{B}^2\|C^{-1}\|_F^2$, $c_3= \frac{40}{\lambda_2}\Omega_{A}^4\|C^{-1}\|_F^2 \big(1+\Omega_{B}^2\|C^{-1}\|_F^2\big)$, $c_5=\frac{20}{\lambda_2}\Omega_{A}^2\|C^{-1}\|_F^2 \big[c_{\text{var,3}}\Omega_{A}^2\big\|\theta^*\big\|^2+c_{\text{var}}\Omega_{b}^2\big(1+\Omega_{B}^2\|C^{-1}\|_F^2\big)\big]$, $c_8=\frac{28c_{\text{var,2}}\Omega_A^2}{\lambda_2}$, $c_9=\frac{28 c_{\text{var,2}}}{\lambda_{2}} \big(\Omega_{A}^2\big\|\theta^*\big\|^2 + \Omega_{b}^2\big)$. 
	
	On the other hand, the convergence error of the model parameters can be bounded as follows.
	\begin{align}
	&\mathbb{E}\big[\|\overline{\theta}_{t+1}-\theta^*\|^2\big|\mathcal{F}_t\big]\nonumber\\
	&\red{\stackrel{(i)}{=}} \mathbb{E}\Big[\Big\|\overline{\theta}_{t}+\frac{\alpha}{M}\sum_{m=1}^{M} \big(\overline{A}_{t}^{(m)} \theta_{t}^{(m)}+\overline{\widetilde{b}}_{t}^{(m)}+\overline{B}_{t}^{(m)} w_{t}^{(m)}\big)-\theta^*\Big\|^2 \Big|\mathcal{F}_t\Big] \nonumber\\
	&=\mathbb{E}\Big[\Big\|\overline{\theta}_{t} +\alpha(\overline{A}_{t}\overline{\theta}_{t}+\overline{\overline{b}}_{t}+\overline{B}_{t}\overline{w}_{t})-\theta^* +\frac{\alpha}{M}\sum_{m=1}^{M} \big[(\overline{A}_{t}^{(m)}-\overline{A}_{t})\theta_{t}^{(m)}+\overline{\widetilde{b}}_{t}^{(m)}-\overline{\overline{b}}_t+(\overline{B}_{t}^{(m)}-\overline{B}_{t})w_{t}^{(m)}\big]\Big\|^2 \Big|\mathcal{F}_t\Big] \nonumber\\
	&\stackrel{(ii)}{\le} \Big(1+\frac{1}{6/(\alpha\lambda_1)-3}\Big) \mathbb{E}\big[\|\overline{\theta}_{t} +\alpha(\overline{A}_{t}\overline{\theta}_{t}+\overline{\overline{b}}_{t}+\overline{B}_{t}\overline{w}_{t})-\theta^*\|^2 \big|\mathcal{F}_t\big] \nonumber\\
	&\quad+ \Big(1+\frac{6}{\alpha\lambda_1}-3\Big) \mathbb{E}\Big[\Big\|\frac{\alpha}{M}\sum_{m=1}^{M} \big[(\overline{A}_{t}^{(m)}-\overline{A}_{t})\theta_{t}^{(m)}+\overline{\widetilde{b}}_{t}^{(m)}-\overline{\overline{b}}_t+(\overline{B}_{t}^{(m)}-\overline{B}_{t})w_{t}^{(m)}\big]\Big\|^2 \Big|\mathcal{F}_t\Big] \nonumber\\
	&\stackrel{(iii)}{\le} \Big(1+\frac{1}{6/(\alpha\lambda_1)-3}\Big) \Big[\Big(1-\frac{\alpha\lambda_{1}}{2}\Big)\|\overline{\theta}_{t}-\theta^*\|^2 + \alpha c_6 \big\|\overline{w}_{t}-w_t^*\big\|^2 + \frac{\alpha c_7}{N}\Big] \nonumber\\
	&\quad+ \alpha^2\Big(\frac{6}{\alpha\lambda_1}-2\Big) \mathbb{E}\Big[\frac{1}{M}\sum_{m=1}^{M} \Big[\big\|(\overline{A}_{t}^{(m)}-\overline{A}_{t}) \theta_{t}^{(m)}+\overline{\widetilde{b}}_{t}^{(m)}-\overline{\overline{b}}_t+(\overline{B}_{t}^{(m)}-\overline{B}_{t})w_{t}^{(m)}\big\|^2\Big] \Big|\mathcal{F}_t\Big]\nonumber\\
	&\stackrel{(iv)}{\le} \Big(1-\frac{\alpha\lambda_{1}}{3}\Big) \big\|\overline{\theta}_{t}-\theta^*\big\|^2 + 2\alpha c_6 \|\overline{w}_{t}-w_t^*\|^2 + \frac{2\alpha c_{7}}{N} \nonumber\\
	&\quad+ \frac{18\alpha}{\lambda_1} \mathbb{E}\Big[\frac{1}{M}\sum_{m=1}^{M} \Big(\big\|\overline{A}_{t}^{(m)}-\overline{A}_{t}\big\|_F^2 \big\|\theta_{t}^{(m)}\big\|^2+\big\|\overline{\widetilde{b}}_{t}^{(m)}-\overline{\overline{b}}_t\big\|^2 + \big\|\overline{B}_{t}^{(m)}-\overline{B}_{t}\big\|_F^2 \big\|w_{t}^{(m)}\big\|^2\Big) \Big|\mathcal{F}_t\Big]\nonumber\\
	&\stackrel{(v)}{\le} \Big(1-\frac{\alpha\lambda_{1}}{3}\Big) \big\|\overline{\theta}_{t}-\theta^*\big\|^2 + 2\alpha c_6 \|\overline{w}_{t}-w_t^*\|^2 + \frac{2\alpha c_{7}}{N} \nonumber\\
	&\quad+ \frac{18\alpha}{\lambda_1} M^2\sigma_2^{2L} \Big( (D_A+D_B)\big[1+\beta(\widetilde{\Omega}_A+1)\big]^{2t}c_{\text{para}}^2+D_b \Big),\label{theta_err_iter2}
	\end{align}
	where \red{(i) uses eq. \eqref{theta_iter_avg2},} (ii) uses the inequality that $\|a_1+a_2\|^2\le (1+\sigma)\|a_1\|^2+(1+\sigma^{-1})\|a_2\|^2$ for any $a_1,a_2\in\mathbb{R}^d$ and $\sigma>0$, (iii) applies Jensen's inequality to the convex function $\|\cdot\|^2$ and uses eq. \eqref{theta_err_iter}, \red{which holds under the conditions \eqref{alpha_cond1}-\eqref{N_cond1}}, (iv) uses the inequality that $\|a_1+a_2+a_3\|^2\le 3(\|a_1\|^2+\|a_2\|^2+\|a_3\|^2)$ for any $a_1,a_2,a_3\in\mathbb{R}^d$ as well as the condition that $\alpha\le \frac{1}{\lambda_1}$ which implies $1+\frac{1}{6/(\alpha\lambda_1)-3}\le 2$, and (v) uses \red{eqs. \eqref{Adev2}, \eqref{Bdev2}, \eqref{bdev2}\&\eqref{para_bound2}}.
	
	Taking expectation on both sides of eqs. \eqref{w_err_iter2}\&\eqref{theta_err_iter2} and summing up the two inequalities yields that 
	\begin{align}
	&\mathbb{E}(\|\overline{\theta}_{t+1}-\theta^*\|^2) + \mathbb{E}(\|\overline{w}_{t+1}-w_{t+1}^*\|^2) \nonumber\\
	&\le \Big(1-\frac{\alpha\lambda_{1}}{3}\Big) \mathbb{E}\big\|\overline{\theta}_{t}-\theta^*\big\|^2 + 2\alpha c_6 \mathbb{E}\|\overline{w}_{t}-w_t^*\|^2 + \frac{2\alpha c_{7}}{N} \nonumber\\
	&\quad+ \frac{18\alpha}{\lambda_2} M^2\sigma_2^{2L} \Big( (D_A+D_B)\big[1+\beta(\widetilde{\Omega}_A+1)\big]^{2t}c_{\text{para}}^2+D_b \Big) \nonumber\\
	&\quad+ \Big(1-\frac{\beta\lambda_{2}}{6}+\frac{3c_1\alpha^2}{\beta}\Big)\mathbb{E}\|\overline{w}_{t}-w_t^*\|^2 + \Big(\frac{\beta c_8}{N} + \frac{3c_3\alpha^2}{\beta}\Big) \mathbb{E}\big\|\overline{\theta}_{t}-\theta^*\big\|^2 +\frac{\beta c_9}{N} \nonumber\\ 
	&\quad +\Big(\frac{36\alpha^2}{\beta\lambda_2}\Omega_{A}^2\|C^{-1}\|_F^2+ \frac{24\beta}{\lambda_2}\Big) M^2\sigma_2^{2L} \Big( (D_A+D_B)\big[1+\beta(\widetilde{\Omega}_A+1)\big]^{2t}c_{\text{para}}^2+D_b \Big) + \frac{3\alpha^2 c_5}{N\beta}\nonumber\\
	&= \Big(1-\frac{\alpha\lambda_1}{3}+\frac{\beta c_8}{N} + \frac{3c_3\alpha^2}{\beta}\Big) \mathbb{E}\big\|\overline{\theta}_{t}-\theta^*\big\|^2 + \Big(1-\frac{\beta\lambda_{2}}{6}+\frac{3c_1\alpha^2}{\beta} + 2\alpha c_6\Big) \mathbb{E}\|\overline{w}_{t}-w_t^*\|^2 +\frac{2\alpha c_7+\beta c_9}{N} + \frac{3\alpha^2 c_5}{N\beta}\nonumber\\ 
	&\quad +\Big(\frac{36\alpha^2}{\beta\lambda_2}\Omega_{A}^2\|C^{-1}\|_F^2 + \frac{24\beta}{\lambda_2} + \frac{18\alpha}{\lambda_2}\Big) M^2\sigma_2^{2L} \Big( (D_A+D_B)\big[1+\beta(\widetilde{\Omega}_A+1)\big]^{2t}c_{\text{para}}^2+D_b \Big)\nonumber\\
	&\stackrel{(i)}{\le} \Big(1-\frac{\alpha\lambda_1}{6}\Big)\big[\mathbb{E}(\|\overline{\theta}_{t}-\theta^*\|^2) + \mathbb{E}(\|\overline{w}_{t}-w_{t}^*\|^2)\big] + \frac{3\beta c_9}{N}\nonumber\\
	&\quad +\frac{78\beta}{\lambda_2} M^2\sigma_2^{2L} \Big( (D_A+D_B)\big[1+\beta(\widetilde{\Omega}_A+1)\big]^{2t}c_{\text{para}}^2+D_b \Big),\nonumber
	\end{align}
	where (i) uses the conditions that $N\ge \frac{12\beta c_8}{\alpha \lambda_1}$ and that $\alpha\le\min\Big(\frac{\beta\lambda_1}{36c_3},\frac{\beta c_6}{3c_1},\frac{\beta\lambda_2}{18c_6+\lambda_1},\frac{\beta c_9}{2c_7},\beta\sqrt{\frac{c_9}{3c_5}},\frac{\beta}{\Omega_{A}\|C^{-1}\|_F},\beta\Big)$ \red{($\alpha\le \frac{\beta c_6}{3c_1}\Rightarrow \frac{3c_1\alpha^2}{\beta}\le \alpha c_6$, $\alpha\le \frac{\beta\lambda_2}{18c_6+\lambda_1} \Rightarrow -\frac{\beta\lambda_2}{6}\le -\alpha\big(3c_6+\frac{\lambda_1}{6}\big)$) (Explain: This is not easy to see, so I added here)}. Iterating the inequality above yields that
	\begin{align}
		\mathbb{E}(\big\|\overline{\theta}_{T}-\theta^*\big\|^2) &\le \mathbb{E}(\big\|\overline{\theta}_{T}-\theta^*\big\|^2+\big\|\overline{w}_{T}-w^*\big\|^2) \nonumber\\
		&\le \Big(1-\frac{\alpha\lambda_{1}}{6}\Big)^{T} \big(\big\|\overline{\theta}_{0}-\theta^*\big\|^2 +  \|\overline{w}_{0}-w_0^*\|^2\big) \nonumber\\
		&\quad + \sum_{k=0}^{T-1} \Big(1-\frac{\alpha\lambda_{1}}{6}\Big)^{T-1-k} \Big[\frac{3\beta c_9}{N} + \frac{78\beta}{\lambda_2} M^2\sigma_2^{2L} \Big( (D_A+D_B)[1+\beta(\Omega_{A}+1)]^{2k}c_{\text{para}}^2+D_b \Big)\Big] \nonumber\\
		&\stackrel{(i)}{\le} \Big(1-\frac{\alpha\lambda_{1}}{6}\Big)^{T} \big(\big\|\overline{\theta}_{0}-\theta^*\big\|^2 +  \|\overline{w}_{0}-w_0^*\|^2\big) + \frac{18\beta c_9}{\alpha N\lambda_1} \nonumber\\
		&\quad + \beta \sigma_2^{2L} \sum_{k=1}^{T} \Big(1-\frac{\alpha\lambda_{1}}{6}\Big)^{T-k}  \Big(\frac{c_{10}}{2}[1+\beta(\Omega_{A}+1)]^{2k}+\frac{c_{11}}{6} \Big) \nonumber\\
		&\stackrel{(ii)}{\le} \Big(1-\frac{\alpha\lambda_{1}}{6}\Big)^{T} \big(\big\|\overline{\theta}_{0}-\theta^*\big\|^2 +  \|\overline{w}_{0}-w_0^*\|^2\big) + \frac{18\beta c_9}{\alpha N\lambda_1} \nonumber\\
		&\quad + \beta \sigma_2^{2L} \Big[\frac{c_{10}}{2}\Big(1-\frac{\alpha\lambda_{1}}{6}\Big)^{T}\sum_{k=1}^{T} 3^{k} +\frac{c_{11}}{\alpha\lambda_1} \Big] \nonumber\\
		&{\le} \Big(1-\frac{\alpha\lambda_1}{6}\Big)^{T} \big(\big\|\overline{\theta}_{0}-\theta^*\big\|^2 +  \big\|\overline{w}_{0}-w_0^*\big\|^2\big) + \frac{18\beta c_9}{\alpha N\lambda_1} + \beta \sigma_2^{2L}\Big( c_{10} (\red{3}^{T})+\frac{c_{11}}{\alpha\lambda_1} \Big) \nonumber
	\end{align}
	where (i) uses the notations that $c_{10}=\frac{156M^2}{\lambda_2}(D_A+D_B)c_{\text{para}}^2$, $c_{11}=\frac{488M^2D_b}{\lambda_2}$, (ii) uses the conditions that $\alpha\le \frac{1}{\lambda_1}$ and that $\beta\le\frac{1-\sigma_2}{2(\Omega_A+1)}$ which respectively imply that $1-\frac{\alpha\lambda_1}{6}\ge \frac{5}{6}$ and that $1+\beta(\Omega_A+1)\le \frac{3}{2}$. This proves eq. \eqref{sum_err2}. 
	
	To prove eq. \eqref{consensus_err2}, notice that eq. \eqref{post_consensus_decay} still holds as a result of local model averaging, so we only need to obtain an upper bound of $\mathbb{E}\|\Delta\Theta_T\|^2$. Subtracting eq. \eqref{theta_iter_avg2} from \eqref{theta_iter2b} yields that for any $0\le t\le T-1$, 
	\begin{align}
		\theta_{t+1}^{(m)}-\overline{\theta}_{t+1}&=\sum_{m'\in \mathcal{N}_m}V_{m,m'} (\theta_{t}^{(m')}-\overline{\theta}_t)+\frac{M-1}{M}\alpha\big(\overline{A}_{t}^{(m)} \theta_{t}^{(m)}+\overline{\widetilde{b}}_{t}^{(m)}+\overline{B}_{t}^{(m)} w_{t}^{(m)}\big)\nonumber\\
		&\quad -\frac{\alpha}{M}\sum_{m'=1,m'\ne m}^{M} \big(\overline{A}_{t}^{(m')} \theta_{t}^{(m')}+\overline{\widetilde{b}}_{t}^{(m')}+\overline{B}_{t}^{(m')} w_{t}^{(m')}\big). \nonumber
	\end{align}
	\red{This can be rewritten into the following matrix-vector form,}
	\begin{align}
		\red{\Delta\Theta_{t+1}=V\Delta\Theta_{t}+[h_1;h_2;\ldots;h_M]^{\top},}
	\end{align}
	\red{where $h_m\stackrel{\triangle}{=} \frac{M-1}{M}\alpha\big(\overline{A}_{t}^{(m)} \theta_{t}^{(m)}+\overline{\widetilde{b}}_{t}^{(m)}+\overline{B}_{t}^{(m)} w_{t}^{(m)}\big)-\frac{\alpha}{M}\sum_{m'=1,m'\ne m}^{M} \big(\overline{A}_{t}^{(m')} \theta_{t}^{(m')}+\overline{\widetilde{b}}_{t}^{(m')}+\overline{B}_{t}^{(m')} w_{t}^{(m')}\big)$.}
	
	Using the item 3 of Lemma \ref{lemma_V} yields that for any $0\le t\le T-1$, 
	\begin{align}
		\|\Delta \Theta_{t+1}\|_F&\le \sigma_2 \|\Delta \Theta_{t}\|_F + \sqrt{\sum_{m=1}^{M} \|h_m\|^2} \le \sigma_2 \|\Delta \Theta_{t}\|_F + \sum_{m=1}^{M}  \|h_m\|.\label{consensus_iter}
	\end{align}
	Then, using \red{eqs. \eqref{bound_A2}, \eqref{bound_B2} \& \eqref{bound_b2}} yields that
	\begin{align}
		\sum_{m=1}^{M}\|h_m\|&\le \sum_{m=1}^{M} \Big[ \frac{M-1}{M}\alpha\big(\red{\widetilde{\Omega}_A} \|\theta_{t}^{(m)}\|+\red{\widetilde{\Omega}_b}+\red{\widetilde{\Omega}_B}\|w_{t}^{(m)}\|\big) + \frac{\alpha}{M}\sum_{m'=1,m'\ne m}^{M} \big(\widetilde{\Omega}_A \|\theta_{t}^{(m')}\|+\widetilde{\Omega}_b+\widetilde{\Omega}_B \|w_{t}^{(m')}\|\big) \Big] \nonumber\\
		&= \frac{M-1}{M}(2\alpha)\sum_{m=1}^{M} \big(\widetilde{\Omega}_A \|\theta_{t}^{(m)}\|+\widetilde{\Omega}_{b}+\widetilde{\Omega}_B \|w_{t}^{(m)}\|\big) \nonumber\\
		&\stackrel{(i)}{\le} 2M\alpha\widetilde{\Omega}_{b} + 2\alpha\sum_{m=1}^{M} \big(\widetilde{\Omega}_A \|\theta_{t}^{(m)}-\overline{\theta}_t\|+\widetilde{\Omega}_B \|w_{t}^{(m)}-\overline{w}_t\|\big) \nonumber\\
		&\quad + 2M\alpha \big(\widetilde{\Omega}_A \|\overline{\theta}_t-\theta^*\|+\widetilde{\Omega}_B \|\overline{w}_t-w_t^*\|\big) +2M\alpha \big(\widetilde{\Omega}_A \|\theta^*\|+\widetilde{\Omega}_B \|C^{-1}(A\overline{\theta}_t+b)\|\big) \nonumber\\
		&\stackrel{(ii)}{\le} 2M\alpha\big(\widetilde{\Omega}_{b}+\Omega_{b}\widetilde{\Omega}_B\|C^{-1}\|_F+\widetilde{\Omega}_A\|\theta^*\|\big) + 2\alpha\sum_{m=1}^{M} \big(\widetilde{\Omega}_A \|\theta_{t}^{(m)}-\overline{\theta}_t\|+\widetilde{\Omega}_B \|w_{t}^{(m)}-\overline{w}_t\|\big) \nonumber\\
		&\quad +2M\alpha \big(\widetilde{\Omega}_A \|\overline{\theta}_t-\theta^*\|+\widetilde{\Omega}_B \|\overline{w}_t-w_t^*\|\big) + 2M\alpha \Omega_A\widetilde{\Omega}_{B}\|C^{-1}\|_F (\|\overline{\theta}_t-\theta^*\| + \|\theta^*\|) \nonumber\\
		&\red{\le \alpha c_{13}\sum_{m=1}^{M} \big( \|\theta_{t}^{(m)}-\overline{\theta}_t\|+ \|w_{t}^{(m)}-\overline{w}_t\|\big) +\alpha c_{14} \big(\|\overline{\theta}_t-\theta^*\|+\|\overline{w}_t-w_t^*\|\big) + \alpha c_{15}}, \nonumber
	\end{align}
	where (i) uses the notations that  $w_t^*=-C^{-1}(A\overline{\theta}_t+b)$,  \red{(ii) uses eqs. \eqref{bound_A} \& \eqref{bound_b}, and (iii) uses the notations that $c_{13}=2\max(\widetilde{\Omega}_{A},\widetilde{\Omega}_{B},1)$, $c_{14}=2M \big[\widetilde{\Omega}_A +(\widetilde{\Omega}_B+1)(1+\Omega_A\|C^{-1}\|_F)\big]$, $c_{15}=2M\big[\widetilde{\Omega}_{b}+\Omega_{b}(\widetilde{\Omega}_B+1)\|C^{-1}\|_F+\big(\widetilde{\Omega}_A+\Omega_{A}(\widetilde{\Omega}_{B}+1)\|C^{-1}\|_F\big)\|\theta^*\|\big]$.} Hence, we obtain that
	\begin{align}
	\mathbb{E}(\|\Delta \Theta_{t+1}\|_F^2) &\stackrel{(i)}{\le} \Big(1+\frac{\sigma_2^{-2}-1}{2}\Big)\sigma_2^2 	\mathbb{E}\big(\|\Delta \Theta_{t}\|_F^2\big) + \Big(1+\frac{2}{\sigma_2^{-2}-1}\Big) 	\mathbb{E}\left[\left(\sum_{m=1}^{M} \|h_m\|\right)^2\right], \nonumber\\
	&\stackrel{(ii)}{\le} \frac{1+\sigma_2^{2}}{2} \mathbb{E}\big(\|\Delta \Theta_{t}\|_F^2\big) + \frac{\red{3\alpha^2}(1+\sigma_2^2)}{1-\sigma_2^2} \nonumber\\
	&\quad \red{\Big[ 2Mc_{13}^2\sum_{m=1}^{M} \big( \|\theta_{t}^{(m)}-\overline{\theta}_t\|^2+ \|w_{t}^{(m)}-\overline{w}_t\|^2\big) + 2c_{14}^2 \big(\|\overline{\theta}_t-\theta^*\|^2+\|\overline{w}_t-w_t^*\|^2\big) + c_{15}^2\Big],} \label{theta_consensus_decay2}
	\end{align}
	where (i) \red{uses eq. \eqref{consensus_iter}} and the inequality that $(u+v)^2\le (1+\sigma)u^2+(1+\sigma^{-1})v^2$ for any $u,v,\sigma\ge 0$, (ii) uses the inequality that $(\sum_{i=1}^{n} q_i)^2 \le n\sum_{i=1}^{n} q_i^2$ for any $q_i\in\mathbb{R}$ \red{and $n\in \mathbb{N}^+$}. Similarly, it can be obtained from eqs. \eqref{w_iter2b} and \eqref{w_iter_avg2} that 
	\begin{align}
	\mathbb{E}(\|\Delta W_{t+1}\|_F^2) &\le \frac{1+\sigma_2^{2}}{2} \mathbb{E}\big(\|\Delta W_{t}\|_F^2\big) + \frac{\red{3\alpha^2}(1+\sigma_2^2)}{1-\sigma_2^2} \nonumber\\
	&\quad \red{\Big[ 2Mc_{13}^2\sum_{m=1}^{M} \big( \|\theta_{t}^{(m)}-\overline{\theta}_t\|^2+ \|w_{t}^{(m)}-\overline{w}_t\|^2\big) + 2c_{14}^2 \big(\|\overline{\theta}_t-\theta^*\|^2+\|\overline{w}_t-w_t^*\|^2\big) + c_{15}^2\Big]}, \label{w_consensus_decay2}
	\end{align}
	Summing up eqs. \eqref{theta_consensus_decay2}\&\eqref{w_consensus_decay2} yields that
	\begin{align}
		&\mathbb{E}\big(\|\Delta \Theta_{t+1}\|_F^2 + \|\Delta W_{t+1}\|_F^2\big)\nonumber\\ 
		&\stackrel{(i)}{\le} \Big(\frac{1+\sigma_2^{2}}{2} + \alpha^2 \red{c_{16}}\Big) \mathbb{E}\big(\|\Delta \Theta_{t}\|_F^2 + \|\Delta W_{t}\|_F^2\big) \nonumber\\
		&\quad + \alpha^2  \red{c_{17}\Big[\Big(1 \!-\! \frac{\alpha\lambda_1}{6}\Big)^{t} \big(\big\|\overline{\theta}_{0} \!-\! \theta^*\big\|^2 \!+\!  \big\|\overline{w}_{0} \!-\! w_0^*\big\|^2\big) + \frac{18 c_9\beta}{\lambda_1N\alpha } + \beta \sigma_2^{2L}\Big( c_{10} (3^{t})+\frac{c_{11} }{\lambda_1 \alpha} \Big)\Big]} + \alpha^2 \red{c_{18}} \nonumber\\
		&\stackrel{(ii)}{\le} \frac{2+\sigma_2^2}{3} \mathbb{E}\big(\|\Delta \Theta_{t}\|_F^2 + \|\Delta W_{t}\|_F^2\big) + \beta^2 c_{18} \nonumber\\
		&\quad + \beta^2 c_{17}\Big[ \big\|\overline{\theta}_{0}-\theta^*\big\|^2+\big\|\overline{w}_{0}-w_0^*\big\|^2 + \frac{2c_9}{c_8} + \beta \sigma_2^{2L} c_{10} (3^{t})\Big] + \alpha\beta \frac{c_{11}c_{17}}{\lambda_1} \nonumber\\
		&\stackrel{(iii)}{\le} \frac{2+\sigma_2^2}{3} \mathbb{E}\big(\|\Delta \Theta_{t}\|_F^2 + \|\Delta W_{t}\|_F^2\big) + \beta^2 c_{18} \nonumber\\
		&\quad + \beta^2 c_{17}\Big[ \big\|\overline{\theta}_{0}-\theta^*\big\|^2+\big\|\overline{w}_{0}-w_0^*\big\|^2 + \frac{2c_9}{c_8} + \frac{\lambda_2}{16} \sigma_2^{2L} c_{10} (3^{t})\Big]+\beta^2 \frac{c_{11}c_{17}}{\lambda_1} \nonumber\\
		&\stackrel{(iv)}{\le} \frac{2+\sigma_2^2}{3} \mathbb{E}\big(\|\Delta \Theta_{t}\|_F^2 + \|\Delta W_{t}\|_F^2\big) + \beta^2 c_{19} +\beta^2 \sigma_2^{2L} c_{20} (3^{t}), \label{sum_preconsensus_decay}
	\end{align}
	where (i) uses \red{eq. \eqref{sum_err2} and the notations that $c_{16}=\frac{12Mc_{13}^2(1+\sigma_2^2)}{1-\sigma_2^2}$, $c_{17}=\frac{12c_{14}^2(1+\sigma_2^2)}{1-\sigma_2^2}$, $c_{18}=\frac{6c_{15}^2(1+\sigma_2^2)}{1-\sigma_2^2}$,} (ii) uses the conditions that $\alpha\le\min\Big(\sqrt{\frac{1-\sigma_2^2}{6\red{c_{16}}}},\beta\Big)$, $N\ge \frac{12\beta c_8}{\alpha \lambda_1}$, (iii) uses the conditions that $\alpha\le \beta\le\frac{\lambda_2}{16}$, (iv) uses the notations that \red{$c_{19}=c_{18}+c_{17}\Big( \big\|\overline{\theta}_{0}-\theta^*\big\|^2+\big\|\overline{w}_{0}-w_0^*\big\|^2 + \frac{2c_9}{c_8} +\frac{c_{11}}{\lambda_1}\Big)$, $c_{20}=\frac{\lambda_2 c_{10}c_{17}}{16}$}.
	Iterating eq. \eqref{sum_preconsensus_decay} yields that
	\begin{align}
	\mathbb{E}\big(\|\Delta \Theta_{T}\|_F^2\big)&\le\mathbb{E}\big(\|\Delta \Theta_{T}\|_F^2 + \|\Delta W_{T}\|_F^2\big) \nonumber\\
	&\le \Big(\frac{2+\sigma_2^2}{3}\Big)^T \mathbb{E}\big(\|\Delta \Theta_{0}\|_F^2 + \|\Delta W_{0}\|_F^2\big) + \sum_{k=1}^{T} \Big(\frac{2+\sigma_2^2}{3}\Big)^{T-k} \big[\beta^2 c_{19} +\beta^2 \sigma_2^{2L} c_{20} (3^{k-1})\big] \nonumber\\
	&\le \mathbb{E}\big(\|\Delta \Theta_{0}\|_F^2 + \|\Delta W_{0}\|_F^2\big) + \beta^2 \sigma_2^{2L} c_{20} (3^{T}) + \frac{3\beta^2 c_{19}}{1-\sigma_2^2}.  \nonumber
	\end{align}
	Substituting the above inequality into eq. \eqref{post_consensus_decay} yields that
	\begin{align}
	\mathbb{E}(\|\theta_{T+T'}^{(m)}-\overline{\theta}_{T}\|^2)&\stackrel{(i)}{=}\mathbb{E}(\|\theta_{T+T'}^{(m)}-\overline{\theta}_{T+T'}\|^2)\nonumber\\
	&\le\mathbb{E}\big(\|\Delta\Theta_{T+T'}\|_F^2\big)\nonumber\\ &\le\sigma_2^{2T'}\mathbb{E}\big(\|\Delta\Theta_T\|_F^2\big) \nonumber\\
	&\le \sigma_2^{2T'} \Big[\mathbb{E}\big(\|\Delta \Theta_{0}\|_F^2 + \|\Delta W_{0}\|_F^2\big) + \beta^2 \sigma_2^{2L} c_{20} (3^{T}) + \frac{3\beta^2 c_{19}}{1-\sigma_2^2} \Big], \nonumber
	\end{align}
	where (i) uses the fact that $\overline{\theta}_{T+T'}=\overline{\theta}_T$. This proves eq. \eqref{consensus_err2}. 
	
	To summarize, the following conditions of the hyperparameters are used in the proof of Theorem \ref{thm_local}. \red{Since Theorem \ref{thm_sync} and Lemma \ref{lemma_parabound2} are used, all their conditions of hyperparameters are also included.} 
	
	\begin{align}
	\alpha&\le \min\Big[\frac{\lambda_1}{40\Omega_{A}^2\big(1+\Omega_{B}^2\|C^{-1}\|_F^2\big)}, \frac{1}{\lambda_{1}}, \sqrt{\frac{1-\sigma_2^2}{6c_{16}}}, \frac{\beta\lambda_2}{4\lambda_1}, \frac{\beta \lambda_2}{16 c_6}, \frac{\beta\lambda_1}{36c_3}, \frac{\beta c_4}{c_7},\frac{\beta c_6}{3c_1},\frac{\beta\lambda_2}{18c_6+\lambda_1},\nonumber\\
	&\quad\quad\quad\quad\frac{\beta c_9}{2c_7},\frac{\beta\Omega_A}{\Omega_B},\frac{\beta}{\Omega_A},\red{\frac{\beta\widetilde{\Omega}_A}{\widetilde{\Omega}_B},\frac{\beta}{\widetilde{\Omega}_A}},\beta,\beta\sqrt{\frac{c_9}{3c_5}}, \frac{\beta}{3}\sqrt{\frac{\lambda_2}{c_1}}, \beta\sqrt{\frac{c_4}{c_5}},\red{\frac{\beta}{\Omega_{A}\|C^{-1}\|_F}}\Big]\red{=\min \{\mathcal{O} (1), \mathcal{O} (\beta)\}},\label{alpha_cond2}\\
	\beta&\le\min\Big(\frac{\lambda_2}{16},\frac{1}{\lambda_2},\frac{1-\sigma_2}{2(\Omega_A+1)}\Big)\red{=\mathcal{O}(1)},\label{beta_cond2}\\ 
	N&\ge \max\Big(\frac{8c_{\text{sd}}}{\lambda_2},\frac{8c_{\text{var,3}}\Omega_{A}^2}{\lambda_{1}^2}, \frac{8\beta c_2}{\alpha \lambda_1},\frac{12\beta c_8}{\alpha \lambda_1}\Big)\red{=\max \{\mathcal{O} (1), \mathcal{O} (\beta/\alpha)\}}.\label{N_cond2}
	\end{align}

\end{proof}

\section{Proof of Proposition \ref{prop_local}}\label{appendix_prop_local}

\proplocal*

\begin{proof}

We choose the following hyperparameter values.
\begin{align}
N&=\left\lceil \max\Big(\frac{8c_{\text{sd}}}{\lambda_2},\frac{8c_{\text{var,3}}\Omega_{A}^2}{\lambda_{1}^2}, \frac{8\beta c_2}{\alpha \lambda_1},\frac{12\beta c_8}{\alpha \lambda_1}, \frac{288\beta c_9}{\alpha\epsilon\lambda_1}\Big) \right\rceil=\mathcal{O}\Big(\frac{1}{\epsilon}\Big) \ge \frac{288\beta c_9}{\alpha\epsilon\lambda_1}, \label{N2}\\
T&=\left\lceil \frac{\ln[16\epsilon^{-1}(\|\overline{\theta}_{0}-\theta^*\|^2+\|\overline{w}_{0}-w_{0}^*\|^2)]}{\ln(1+\alpha\lambda_{1}/6)} \right\rceil=\mathcal{O}\Big[\ln\Big(\frac{1}{\epsilon}\Big)\Big], \label{T2} \\
L&=\left\lceil \frac{1}{2\ln(1/\sigma_2)} \max\Big[ \ln\Big(\frac{16\beta c_{10}}{\epsilon}\Big) + T\ln 3, \ln\Big(\frac{16\beta c_{11}}{\alpha\epsilon\lambda_1}\Big),T\ln 3+\ln\Big(\frac{c_{20}}{c_{19}}\Big)\Big] \right\rceil =\mathcal{O}\Big[\ln\Big(\frac{1}{\epsilon}\Big)\Big]. \label{L2}
\end{align}
Then, eq. \eqref{sum_err2} implies that
\begin{align}
\mathbb{E}(\big\|\overline{\theta}_{T}-\theta^*\big\|^2) &\le \Big(1-\frac{\alpha\lambda_1}{6}\Big)^{T} \big(\big\|\overline{\theta}_{0}-\theta^*\big\|^2 +  \big\|\overline{w}_{0}-w_0^*\big\|^2\big) + \frac{18 c_9\beta}{\lambda_1N\alpha } + \beta \sigma_2^{2L}\Big( c_{10} (\red{3}^{T})+\frac{c_{11}}{\lambda_1 \alpha} \Big) \nonumber\\
&\le \exp\Big[T\ln\Big(1-\frac{\alpha\lambda_1}{6}\Big)+\ln\big(\big\|\overline{\theta}_{0}-\theta^*\big\|^2 +  \big\|\overline{w}_{0}-w_0^*\big\|^2\big) \Big] + \frac{\epsilon}{16} \nonumber\\
&\quad+ \beta c_{10} (3^{T}) \exp[2L\ln(\sigma_2)] + \frac{\beta c_{11}}{\alpha\lambda_1}\exp[2L\ln(\sigma_2)] \nonumber\\
&\stackrel{(i)}{\le} \exp\Big[-\frac{\ln[16\epsilon^{-1}(\|\overline{\theta}_{0}-\theta^*\|^2+\|\overline{w}_{0}-w_{0}^*\|^2)]}{\ln(1+\alpha\lambda_{1}/6)} \ln\Big(1+\frac{\alpha\lambda_1}{6}\Big)+\ln\big(\big\|\overline{\theta}_{0}-\theta^*\big\|^2 +  \big\|\overline{w}_{0}-w_0^*\big\|^2\big) \Big] \nonumber\\
&\quad + \frac{\epsilon}{16} + \beta c_{10} (3^{T}) \exp\big[-\ln(16\beta c_{10}/\epsilon) - T\ln 3\big] + \frac{\beta c_{11}}{\alpha\lambda_1}\exp\Big[-\ln\Big(\frac{16\beta c_{11}}{\alpha\epsilon\lambda_1}\Big)\Big]= \frac{\epsilon}{4},
\end{align}
where (i) uses the inequality that $1-\frac{\alpha\lambda_1}{6}\le\Big(1+\frac{\alpha\lambda_1}{6}\Big)^{-1}$. Furthermore, we choose
\begin{align}
	T'=\left\lceil \frac{1}{2\ln(1/\sigma_2)}\ln\Big[4\epsilon^{-1}\Big(\mathbb{E}\big(\|\Delta \Theta_{0}\|_F^2 + \|\Delta W_{0}\|_F^2\big) + \frac{4\beta^2 c_{19}}{1-\sigma_2^2} \Big)\Big] \right\rceil=\mathcal{O}\big[\ln(1/\epsilon)\big], \label{num_post2}
\end{align}
and then eq. \eqref{consensus_err2} implies that
\begin{align}
&\mathbb{E}(\|\theta_{T+T'}^{(m)}-\overline{\theta}_{T}\|^2) \nonumber\\
&\le \exp(2T'\ln\sigma_2) \Big[\mathbb{E}\big(\|\Delta \Theta_{0}\|_F^2 + \|\Delta W_{0}\|_F^2\big) + \beta^2 \sigma_2^{2L} c_{20} (3^{T}) + \frac{3\beta^2 c_{19}}{1-\sigma_2^2} \Big] \nonumber\\
&\le \exp \Big(- \ln\Big[4\epsilon^{-1}\Big(\mathbb{E}\big(\|\Delta \Theta_{0}\|_F^2 + \|\Delta W_{0}\|_F^2\big) + \frac{4\beta^2 c_{19}}{1-\sigma_2^2} \Big)\Big]\Big) \nonumber\\
&\quad\Big[\mathbb{E}\big(\|\Delta \Theta_{0}\|_F^2 + \|\Delta W_{0}\|_F^2\big) + \beta^2 c_{20} (3^T) \exp(2L\ln\sigma_2) + \frac{3\beta^2 c_{19}}{1-\sigma_2^2} \Big] \nonumber\\
&\stackrel{(i)}{\le} \frac{\epsilon}{4}\Big[\mathbb{E}\big(\|\Delta \Theta_{0}\|_F^2 + \|\Delta W_{0}\|_F^2\big) + \frac{4\beta^2 c_{19}}{1-\sigma_2^2} \Big]^{-1} \nonumber\\
&\quad\Big[\mathbb{E}\big(\|\Delta \Theta_{0}\|_F^2 + \|\Delta W_{0}\|_F^2\big) + \beta^2 c_{20} (3^T) \exp[-T\ln 3-\ln(c_{20}/c_{19})] + \frac{3\beta^2 c_{19}}{1-\sigma_2^2} \Big] \nonumber\\
&\le \frac{\epsilon}{4}\Big[\mathbb{E}\big(\|\Delta \Theta_{0}\|_F^2 + \|\Delta W_{0}\|_F^2\big) + \frac{4\beta^2 c_{19}}{1-\sigma_2^2} \Big]^{-1} \Big[\mathbb{E}\big(\|\Delta \Theta_{0}\|_F^2 + \|\Delta W_{0}\|_F^2\big) + \beta^2 c_{19} + \frac{3\beta^2 c_{19}}{1-\sigma_2^2} \Big] \le \frac{\epsilon}{4},
\end{align}
where (i) uses the inequality that $L\ge 
\frac{T\ln 3+\ln(c_{20}/c_{19})}{2\ln(1/\sigma_2)}$ based on eq. \eqref{L2}. Hence, \\
$\mathbb{E}(\|\theta_{T+T'}^{(m)}-\theta^*\|^2) \le 2\mathbb{E}(\|\theta_{T+T'}^{(m)}-\overline{\theta}_{T}\|^2) + 2\mathbb{E}(\|\overline{\theta}_{T}-\theta^*\|^2)\le \epsilon$. 
\end{proof}
\section{Supporting Lemmas}\label{supp: Lemmas}
In this section, we prove some supporting lemmas that are used throughout the analysis of Algorithms \ref{alg: 1} \& \ref{alg: 2}. 
\begin{lemma}\label{lemma_bound_ABCt}
	Regarding the terms defined in \Cref{supp: notation}, they have the following upper bounds.
	\begin{align}
	&\|A_t\|_F, \|\overline{A}_t\|_F, \|A\|_F\le \Omega_A\stackrel{\triangle}{=}\rho_{\max}(1+\gamma), \label{bound_A}\\
	&\|B_t\|_F, \|\overline{B}_t\|_F, \|B\|_F\le \Omega_B\stackrel{\triangle}{=}\rho_{\max}\gamma, \label{bound_B}\\
	&\|C_t\|_F, \|\overline{C}_t\|_F, \|C\|_F\le 1, \label{bound_C}\\
	&\|b_{t}^{(m)}\|, \|\overline{b}_{t}^{(m)}\|, \|\overline{b}_t\|, \|\overline{\overline{b}}_{t}\|, \|b\|\le \Omega_b\stackrel{\triangle}{=}\rho_{\max}R_{\max}. \label{bound_b}
	\end{align}
\end{lemma}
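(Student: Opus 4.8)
The plan is to exploit the rank-one structure of each of the matrices $A_t, B_t, C_t$ together with the fact that $b_t^{(m)}$ is a scalar multiple of a single feature vector, so that every Frobenius norm factors into a product of Euclidean norms of feature vectors. The one identity I rely on is that for a rank-one matrix $uv^\top$ one has $\|uv^\top\|_F = \|u\|\,\|v\|$, and I would invoke it repeatedly in combination with Assumption \ref{assum_phi_le1} (giving $\|\phi(s)\|\le 1$) and Assumption \ref{assum_rhobound} (giving $\rho_t\le\rho_{\max}$ and $R_t^{(m)}\le R_{\max}$).

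First I would bound the per-sample quantities. For $A_t=\rho_t\phi(s_t)[\gamma\phi(s_{t+1})-\phi(s_t)]^\top$, the rank-one identity gives $\|A_t\|_F=\rho_t\,\|\phi(s_t)\|\,\|\gamma\phi(s_{t+1})-\phi(s_t)\|$, and the triangle inequality bounds the second factor by $\gamma\|\phi(s_{t+1})\|+\|\phi(s_t)\|\le 1+\gamma$, whence $\|A_t\|_F\le\rho_{\max}(1+\gamma)=\Omega_A$. The same computation yields $\|B_t\|_F=\gamma\rho_t\,\|\phi(s_{t+1})\|\,\|\phi(s_t)\|\le\gamma\rho_{\max}=\Omega_B$, then $\|C_t\|_F=\|\phi(s_t)\|^2\le 1$, and finally $\|b_t^{(m)}\|=\rho_t\,|R_t^{(m)}|\,\|\phi(s_t)\|\le\rho_{\max}R_{\max}=\Omega_b$.

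Next I would pass to the mini-batch averages and the stationary expectations using only convexity of the norm. Since $\overline{A}_t=\frac{1}{N}\sum_{i=tN}^{(t+1)N-1}A_i$ is an average of terms each bounded by $\Omega_A$, the triangle inequality gives $\|\overline{A}_t\|_F\le\frac{1}{N}\sum_{i}\|A_i\|_F\le\Omega_A$; identically $\|\overline{B}_t\|_F\le\Omega_B$, $\|\overline{C}_t\|_F\le 1$, and $\|\overline{b}_t^{(m)}\|\le\Omega_b$. The further averages $\overline{b}_t$ and $\overline{\overline{b}}_t$, being averages over agents and/or samples of quantities bounded by $\Omega_b$, are bounded by $\Omega_b$ for the same reason. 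Finally, Jensen's inequality applied to the convex map $X\mapsto\|X\|_F$ gives $\|A\|_F=\|\mathbb{E}_{\pi_b}[A_t]\|_F\le\mathbb{E}_{\pi_b}\|A_t\|_F\le\Omega_A$, and likewise $\|B\|_F\le\Omega_B$, $\|C\|_F\le 1$, and $\|b\|\le\Omega_b$.

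There is essentially no obstacle here: the only thing worth noticing is the rank-one factorization of the Frobenius norm, after which every bound is a one-line application of the triangle inequality and the uniform bounds from Assumptions \ref{assum_phi_le1} and \ref{assum_rhobound}. The averaging and expectation steps contribute nothing beyond convexity of the norm, so the entire lemma reduces to four short, parallel computations.
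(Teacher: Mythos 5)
Your proposal is correct and follows essentially the same route as the paper: the rank-one identity $\|uv^{\top}\|_F=\|u\|\|v\|$, the triangle inequality with Assumptions \ref{assum_phi_le1} and \ref{assum_rhobound} for the per-sample terms, and convexity (triangle/Jensen) to pass to batch averages and stationary expectations. No gaps.
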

\begin{proof}
	Consider any two vectors $u, v\in\mathbb{R}^d$, we have that $\|uv^{\top}\|_F=\sqrt{\text{tr}(vu^{\top}uv^{\top})}=\|u\|\|v\|$. Therefore, \red{by \Cref{assum_phi_le1}, we obtain that}
	\begin{align}
	\|A_t\|_F\le& \rho_t \|\phi(s_t)\| \|\gamma \phi(s_{t+1})-\phi(s_t)\| \nonumber\\
	\le& \rho_{\max} \big[\gamma \|\phi(s_{t+1})\|+\|\phi(s_t)\|\big]\nonumber\\
	\le& \rho_{\max}(1+\gamma) = \Omega_A, \nonumber
	\end{align}
	\begin{align}
	\|B_t\|_F\le& \gamma \rho_t \|\phi(s_{t+1})\| \|\phi(s_t)\|\le \rho_{\max}\gamma = \Omega_B, \nonumber
	\end{align}
	\begin{align}
	\|C_t\|_F\le& \|\phi(s_t)\|^2 \le 1, \nonumber
	\end{align}
	\begin{align}
	\|b_{t}^{(m)}\|\le& \rho_t R_{t}^{(m)} \|\phi(s_t)\|\le \rho_{\max} R_{\max}=\Omega_b. \nonumber
	\end{align}
	
	The proof for $\|A_{t}^{(m)}\|_F$, $\|\widetilde{b}_{t}^{(m)}\|$, etc. is similar. 
	
	On the other hand, by Jensen's inequality, we obtain that
	\begin{align}
		\|A\|_F=&\|\mathbb{E}_{\pi_b}[A_{t}]\|_F\le \mathbb{E}_{\pi_b}\|A_{t}\|_F\le \Omega_A, \nonumber
	\end{align}
	\begin{align}
		\|\overline{A}_{t}\|_F=&\Big\|\frac{1}{N}\sum_{i=tN}^{(t+1)N-1} A_{i}\Big\|_F \le \frac{1}{N}\sum_{i=tN}^{(t+1)N-1} \|A_{i}\|_F\le\Omega_{A}. \nonumber
	\end{align} 
	The proof for the other remaining matrices is similar by using the Jensen's inequality.
\end{proof}


\begin{lemma}\label{lemma_var}
	Suppose the MDP trajectory $\{s_t,a_t\}_{t\ge 0}$ is generated following a behavioral policy $\pi_b$ where $a_t\stackrel{\triangle}{=}\{a_t^{(m)}\}_m$. For any deterministic mappings $X:\mathcal{S}\times \mathcal{A}_1\times\ldots\times\mathcal{A}_M\times\mathcal{S}\to\mathbb{R}^d$ and $Y:\mathcal{S}\times \mathcal{A}_1\times\ldots\times\mathcal{A}_M\times\mathcal{S}\to\mathbb{R}^{d\times d}$ such that $\|X(s,a,s')\|\le C_x, \|Y(s,a,s')\|_F\le C_y, \forall s,s'\in\mathcal{S}, a^{(m)}\in\mathcal{A}_m$ where $a=\{a^{(m)}\}_{m}$, we have
	\begin{align}
		\Big\|\mathbb{E}\Big[\frac{1}{N}\sum_{i=tN}^{(t+1)N-1} X(s_i,a_i,s_{i+1})\Big|\mathcal{F}_t\Big]-\overline{X}\Big\| \le& \frac{2\nu C_x}{N(1-\delta)},\label{vec_bias}\\
		\Big\|\mathbb{E}\Big[\frac{1}{N}\sum_{i=tN}^{(t+1)N-1} Y(s_i,a_i,s_{i+1})\Big|\mathcal{F}_t\Big]-\overline{Y}\Big\|_F \le& \frac{2\nu C_y}{N(1-\delta)},\label{matrix_bias}\\
		\mathbb{E}\Big[\Big\|\frac{1}{N}\sum_{i=tN}^{(t+1)N-1} X(s_i,a_i,s_{i+1})-\overline{X}\Big\|^2\Big|\mathcal{F}_t\Big]\le& \frac{8C_x^2(\nu+1-\delta)}{N(1-\delta)}, \label{vec_var} \\
		\mathbb{E}\Big[\Big\|\frac{1}{N}\sum_{i=tN}^{(t+1)N-1} Y(s_i,a_i,s_{i+1})-\overline{Y}\Big\|_F^2\Big|\mathcal{F}_t\Big]\le& \frac{8C_y^2(\nu+1-\delta)}{N(1-\delta)}, \label{matrix_var}
	\end{align}
	where $\overline{X}=\mathbb{E} X(s_i,a_i,s_{i+1})$, $\overline{Y}=\mathbb{E} Y(s_i,a_i,s_{i+1})$.
\end{lemma}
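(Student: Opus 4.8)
The plan is to treat the two bias estimates \eqref{vec_bias}--\eqref{matrix_bias} and the two variance estimates \eqref{vec_var}--\eqref{matrix_var} by a single mixing argument, since the vector and matrix cases differ only by replacing $\|\cdot\|$ with the Frobenius norm $\|\cdot\|_F$ and the inner product $u^\top v$ with the trace inner product $\langle A,B\rangle_F=\mathrm{tr}(A^\top B)$. The starting observation is that, because the chain is Markov and $\mathcal{F}_t$ determines $s_{tN}$, the ``averaged-out'' map $g_X(s):=\mathbb{E}_{a\sim\pi_b(\cdot|s),\,s'\sim P(\cdot|s,a)}[X(s,a,s')]$ satisfies $\|g_X(s)\|\le C_x$ for every $s$, and one has $\mathbb{E}[X(s_i,a_i,s_{i+1})\mid\mathcal{F}_t]=\sum_{s}g_X(s)\,\mathbb{P}_{\pi_b}(s_i=s\mid s_{tN})$ while $\overline{X}=\sum_s g_X(s)\mu_{\pi_b}(s)$. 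Their difference is thus a bounded functional tested against the signed measure $\mathbb{P}_{\pi_b}(s_i=\cdot\mid s_{tN})-\mu_{\pi_b}$.

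For the bias bounds I would first apply the triangle inequality to pull the norm inside the average over $i$, then bound each term by $\|g_X\|_\infty\cdot\|\mathbb{P}_{\pi_b}(s_i=\cdot\mid s_{tN})-\mu_{\pi_b}\|_1\le 2C_x\,d_{TV}(\mathbb{P}_{\pi_b}(s_i\mid s_{tN}),\mu_{\pi_b})$. Applying \Cref{assum_TV_exp} with the chain restarted from $s_{tN}$ gives the per-term estimate $2C_x\nu\delta^{\,i-tN}$, and summing the geometric series $\sum_{j=0}^{N-1}\delta^{j}\le(1-\delta)^{-1}$ and dividing by $N$ yields exactly $\tfrac{2\nu C_x}{N(1-\delta)}$. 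The matrix case \eqref{matrix_bias} is identical with $C_y$ and $\|\cdot\|_F$.

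For the variance bounds I would expand $\mathbb{E}[\|\tfrac1N\sum_i Z_i\|^2\mid\mathcal{F}_t]=\tfrac1{N^2}\sum_{i,j}\mathbb{E}[Z_i^\top Z_j\mid\mathcal{F}_t]$, where $Z_i:=X(s_i,a_i,s_{i+1})-\overline{X}$ obeys $\|Z_i\|\le 2C_x$. The $N$ diagonal terms are each at most $4C_x^2$, contributing $\le 4C_x^2/N$. For an off-diagonal pair $i<j$ I would condition on the history $\mathcal{G}_{i+1}$ up to time $i+1$: since $Z_i$ is $\mathcal{G}_{i+1}$-measurable, the tower property gives $\mathbb{E}[Z_i^\top Z_j\mid\mathcal{F}_t]=\mathbb{E}[Z_i^\top\mathbb{E}[Z_j\mid\mathcal{G}_{i+1}]\mid\mathcal{F}_t]$, and the Markov property collapses the inner expectation to $\mathbb{E}[Z_j\mid s_{i+1}]$, whose norm is at most $2C_x\nu\delta^{\,j-i-1}$ by the same total-variation argument as above. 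Then $|Z_i^\top v|\le\|Z_i\|\,\|v\|$ yields $|\mathbb{E}[Z_i^\top Z_j\mid\mathcal{F}_t]|\le 4C_x^2\nu\delta^{\,j-i-1}$; grouping off-diagonal pairs by $k=j-i\ge1$ (at most $N$ pairs each) and summing the geometric series bounds their total by $\tfrac{8C_x^2\nu}{N(1-\delta)}$. Adding the diagonal part and relaxing $4C_x^2/N\le 8C_x^2/N=8C_x^2(1-\delta)/[N(1-\delta)]$ collapses everything into $\tfrac{8C_x^2(\nu+1-\delta)}{N(1-\delta)}$, which is \eqref{vec_var}; \eqref{matrix_var} follows verbatim after replacing $u^\top v$ by $\mathrm{tr}(A^\top B)$ throughout.

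The step I expect to be the main obstacle is the decoupling of the off-diagonal covariances: one must make the measurability bookkeeping precise so that conditioning on $\mathcal{G}_{i+1}$ freezes $Z_i$ while leaving $Z_j$ to be controlled purely through the intermediate state $s_{i+1}$ via the Markov property, after which the mixing bound applies. Once that reduction is in place, every remaining estimate is a geometric-series computation, and the only remaining care is checking the constants so that the diagonal and off-diagonal contributions combine into the stated $(\nu+1-\delta)$ numerator.
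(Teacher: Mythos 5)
Your proposal is correct and follows essentially the same route as the paper's proof: the same per-term total-variation bound $2C_x\nu\delta^{\,i-j}$ obtained by averaging out $(a,s')$ and testing the bounded map against the signed measure $\mathbb{P}_{\pi_b}(s_i=\cdot\mid s_j)-\mu_{\pi_b}$, followed by the same diagonal/off-diagonal decomposition of the variance with the tower-property conditioning on the history through $s_{i+1}$ and geometric-series summation, yielding the identical constants. The measurability step you flagged as the main obstacle is handled in the paper exactly as you describe (conditioning on $\mathcal{F}_t\cup\sigma(\{s_j,a_j,s_{j+1}\})$ so that the earlier term freezes and the later one collapses to $\mathbb{E}[\,\cdot\mid s_{j+1}]$ by the Markov property), so no gap remains.
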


\textbf{Note:} A simplified version of the above lemma has been proposed and proved in \cite{xu2020improving}, where $a_i$ and $s_{i+1}$ are omitted in the eqs.  \eqref{vec_var}\&\eqref{matrix_var}. We add $a_i$ and $s_{i+1}$ so that this lemma can be better applied to the quantities $A_i$, $B_i$, $C_i$ and $b_{i}^{(m)}$ which rely on $s_i$ as well as $a_i$ and $s_{i+1}$. The proof logic is similar to that of \cite{xu2020improving}.
\begin{proof}
	Assume $i\ge j$, we obtain that
	\begin{align}
		&\big\|\mathbb{E}  \big[X(s_i,a_i,s_{i+1})\big|s_j\big]-\overline{X}\big\| \nonumber\\
		&=\Big\|\int \mathbb{E}\big[X(s_i,a_i,s_{i+1})\big|s_i=s\big] d\mu_{i|j}(s) - \int \mathbb{E} \big[X(s_i,a_i,s_{i+1})\big|s_i=s\big] d\mu_{\pi_b}(s)\Big\| \nonumber\\
		&\le \int \big\|\mathbb{E}\big[X(s_i,a_i,s_{i+1})\big|s_i=s\big]\big\| d|\mu_{i|j}(s)-\mu_{\pi_b}(s)| \nonumber\\
		&\le 2C_x d_{TV}(\mu_{i|j},\mu_{\pi_b}) \nonumber\\
		&\stackrel{(i)}{\le} 2\nu C_x\delta^{i-j}, \label{Ei_fromj}
	\end{align}
	where $\mu_{i|j}$ is the conditional probability distribution of $s_i$ given $s_j$, and (i) uses Assumption \ref{assum_TV_exp}. We also obtain that
	\begin{align}
		& \Big\|\mathbb{E}\Big[\frac{1}{N}\sum_{i=tN}^{(t+1)N-1} X(s_i,a_i,s_{i+1})\Big|\mathcal{F}_t\Big]-\overline{X}\Big\|\nonumber\\
		&\le \frac{1}{N} \sum_{i=tN}^{(t+1)N-1} \big\|\mathbb{E}  \big[X(s_i,a_i,s_{i+1})\big|\mathcal{F}_t\big]-\overline{X}\big\| \nonumber\\
		&\stackrel{(i)}{=} \frac{1}{N} \sum_{i=tN}^{(t+1)N-1} \big\|\mathbb{E}  \big[X(s_i,a_i,s_{i+1})\big|s_{tN}\big]-\overline{X}\big\| \nonumber\\
		&\stackrel{(ii)}{\le} \frac{2\nu C_x}{N}\sum_{i=tN}^{(t+1)N-1} \delta^{i-tN}\nonumber\\
		&\le \frac{2\nu C_x}{N(1-\delta)}, \nonumber
	\end{align}
	where (i) uses the Markovian property and (ii) uses eq. \eqref{Ei_fromj}. This proves eq. \eqref{vec_bias}. The eq. \eqref{matrix_bias} can be proved in the same way and we omit the proof. Next, we obtain that
	\begin{align}
		&\mathbb{E}\Big[\Big\|\frac{1}{N}\sum_{i=tN}^{(t+1)N-1} X(s_i,a_i,s_{i+1})-\overline{X}\Big\|^2\Big|\mathcal{F}_t\Big]\nonumber\\
		&=\mathbb{E}\Big[ \Big<\frac{1}{N}\sum_{i=tN}^{(t+1)N-1} \big[X(s_i,a_i,s_{i+1})-\overline{X}\big], \frac{1}{N}\sum_{j=tN}^{(t+1)N-1} \big[X(s_j,a_j,s_{j+1})-\overline{X}\big]\Big>\Big|\mathcal{F}_t\Big] \nonumber\\
		&=\frac{1}{N^2} \mathbb{E} \Big[ \sum_{i=tN}^{(t+1)N-1} \big\|X(s_i,a_i,s_{i+1})-\overline{X}\big\|^2 \Big|\mathcal{F}_t \Big] \nonumber\\
		&\quad+ \frac{2}{N^2} \sum_{j=tN}^{(t+1)N-2} \sum_{i=j+1}^{(t+1)N-1} \mathbb{E} \Big[ \big<X(s_i,a_i,s_{i+1})-\overline{X}, X(s_j,a_j,s_{j+1})-\overline{X}\big> \Big|\mathcal{F}_t \Big] \nonumber\\
		&\le \frac{N(2C_x)^2}{N^2} + \frac{2}{N^2} \sum_{j=tN}^{(t+1)N-2} \sum_{i=j+1}^{(t+1)N-1} \mathbb{E} \Big[ \mathbb{E} \big[\big<X(s_i,a_i,s_{i+1})-\overline{X}, X(s_j,a_j,s_{j+1})-\overline{X}\big> \big| \mathcal{F}_t \cup \sigma\big(\{s_j,a_j,s_{j+1}\}\big) \big] \Big|\mathcal{F}_t \Big] \nonumber\\
		&=\frac{4C_x^2}{N} + \frac{2}{N^2} \sum_{j=tN}^{(t+1)N-2} \sum_{i=j+1}^{(t+1)N-1} \mathbb{E} \Big[\big<X(s_j,a_j,s_{j+1})-\overline{X}, \mathbb{E} \big[X(s_i,a_i,s_{i+1})-\overline{X} \big| \mathcal{F}_t \cup \sigma\big(\{s_j,a_j,s_{j+1}\}\big) \big] \big> \Big|\mathcal{F}_t \Big] \nonumber\\
		&\stackrel{(i)}{\le} \frac{4C_x^2}{N} + \frac{2}{N^2} \sum_{j=tN}^{(t+1)N-2} \sum_{i=j+1}^{(t+1)N-1} \mathbb{E} \Big[\big\|X(s_j,a_j,s_{j+1})-\overline{X}\big\| \big\|\mathbb{E} \big[X(s_i,a_i,s_{i+1})\big| s_{j+1} \big]-\overline{X} \big\| \Big|\mathcal{F}_t \Big] \nonumber\\
		&\stackrel{(ii)}{\le} \frac{4C_x^2}{N} + \frac{2}{N^2} \sum_{j=tN}^{(t+1)N-2} \sum_{i=j+1}^{(t+1)N-1} \mathbb{E} \Big[(2C_x)(2\nu C_x \delta^{i-j-1}) \Big|\mathcal{F}_t \Big] \nonumber\\
		&\le \frac{4C_x^2}{N} + \frac{8\nu C_x^2}{N^2} \sum_{j=tN}^{(t+1)N-2} \frac{1}{1-\delta} \nonumber\\
		&\le \frac{8C_x^2(\nu+1-\delta)}{N(1-\delta)},
	\end{align}
	where (i) uses the Markovian property as well as Cauchy-Schwartz inequality, and (ii) uses eq. \eqref{Ei_fromj}. This proves eq. \eqref{vec_var}. The \cref{matrix_var} can be proved in the same way and we omit the proof.
\end{proof}

\begin{coro}\label{coro_Ediff_ABC}
	Regarding the terms defined in \Cref{supp: notation}, they have the following upper bounds.
	\begin{align}
		\mathbb{E}\big[\|\overline{A}_t-A\|_F \big|\mathcal{F}_t\big]\le&\frac{2\nu\Omega_{A}}{N(1-\delta)} \stackrel{\triangle}{=} \frac{c_{\text{sd}}\Omega_{A}}{N} \label{avgAbias}\\
		\mathbb{E}\big[\|\overline{B}_t-B\|_F \big|\mathcal{F}_t\big]\le&\frac{2\nu\Omega_{B}}{N(1-\delta)} \stackrel{\triangle}{=} \frac{c_{\text{sd}}\Omega_{B}}{N} \label{avgBbias}\\
		\mathbb{E}\big[\|\overline{C}_t-C\|_F \big|\mathcal{F}_t\big]\le&\frac{2\nu}{N(1-\delta)} \stackrel{\triangle}{=} \frac{c_{\text{sd}}}{N} \label{avgCbias}\\
		\mathbb{E}\big[\big\|\overline{A}_t-A\big\|_F^2\big|\mathcal{F}_t\big]\le&
		\frac{8\Omega_{A}^2(\nu+1-\delta)}{N(1-\delta)} \stackrel{\triangle}{=}\frac{c_{\text{var}}\Omega_A^2}{N} \label{avgAvar}\\
		\mathbb{E}\big[\big\|\overline{B}_t-B\big\|_F^2\big|\mathcal{F}_t\big]\le& \frac{8\Omega_{B}^2(\nu+1-\delta)}{N(1-\delta)} \stackrel{\triangle}{=} \frac{c_{\text{var}}\Omega_B^2}{N} \label{avgBvar}\\ 
		\mathbb{E}\big[\big\|\overline{C}_t-C\big\|_F^2\big|\mathcal{F}_t\big]\le& \frac{8(\nu+1-\delta)}{N(1-\delta)} \stackrel{\triangle}{=} \frac{c_{\text{var}}}{N} \label{avgCvar}\\
		\mathbb{E}\big[\big\|\overline{\overline{b}}_t-b\big\|^2\big|\mathcal{F}_t\big]\le& \frac{8\Omega_{b}^2(\nu+1-\delta)}{N(1-\delta)} \stackrel{\triangle}{=} \frac{c_{\text{var}}\Omega_b^2}{N} \label{avgbvar}\\
		\mathbb{E}\big[\big\|\overline{A}_t-\overline{C}_tC^{-1}A\big\|_F^2\big|\mathcal{F}_t\big]\le& \frac{8\Omega_{A}^2(1+\|C^{-1}\|_F^2)(\nu+1-\delta)}{N(1-\delta)} \stackrel{\triangle}{=} \frac{c_{\text{var},2}\Omega_{A}^2}{N} \label{avg2Avar}\\
		\mathbb{E}\big[\big\|\overline{\overline{b}}_t -\overline{C}_tC^{-1} b\big\|^2\big|\mathcal{F}_t\big]\le& \frac{8\Omega_{b}^2(1+\|C^{-1}\|_F^2)(\nu+1-\delta)}{N(1-\delta)} \stackrel{\triangle}{=} \frac{c_{\text{var},2}\Omega_{b}^2}{N}
		 \label{avg2bvar}\\
		\mathbb{E}\big[\big\|\overline{A}_t-\overline{B}_tC^{-1}A-A^{\top}C^{-1}A\big\|_F^2\big|\mathcal{F}_t\big]\le& \frac{8\Omega_{A}^2(1+\Omega_{B}^2\|C^{-1}\|_F^2)(\nu+1-\delta)}{N(1-\delta)} \stackrel{\triangle}{=} \frac{c_{\text{var},3}\Omega_{A}^2}{N} \label{avg3Avar}
	\end{align}
\end{coro}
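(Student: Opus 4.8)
The plan is to obtain every inequality in \Cref{coro_Ediff_ABC} as a direct instance of \Cref{lemma_var}, so that all the work reduces to three mechanical steps for each line: (i) identify the correct per-sample mapping $X$ or $Y$, (ii) compute its expectation $\overline{X}$ or $\overline{Y}$, and (iii) read off the uniform norm bound $C_x$ or $C_y$ from \Cref{lemma_bound_ABCt}. Once these ingredients are in place, the bias estimates \eqref{avgAbias}--\eqref{avgCbias} follow from \eqref{matrix_bias}, and the variance estimates \eqref{avgAvar}--\eqref{avg3Avar} follow from \eqref{vec_var} or \eqref{matrix_var}. No fresh probabilistic argument is needed, since the mini-batch Markovian correlations are already absorbed into \Cref{lemma_var}.

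For the elementary bounds I would proceed term by term. Taking the matrix mapping $Y(s,a,s')=\rho\,\phi(s)(\gamma\phi(s')-\phi(s))^\top$ gives batch average $\overline{A}_t$ with $\mathbb{E}_{\pi_b}[Y]=A$ and, by \eqref{bound_A}, uniform bound $\|Y\|_F\le\Omega_A=:C_y$; substituting into \eqref{matrix_bias} and \eqref{matrix_var} yields \eqref{avgAbias} and \eqref{avgAvar} with the constants $c_{\text{sd}}$ and $c_{\text{var}}$. The identical recipe with $C_y=\Omega_B$ (via \eqref{bound_B}) and $C_y=1$ (via \eqref{bound_C}) produces \eqref{avgBbias}, \eqref{avgCbias}, \eqref{avgBvar}, and \eqref{avgCvar}. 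For \eqref{avgbvar} I would use the vector mapping $X(s,a,s')=\rho\big(\tfrac{1}{M}\sum_{m}R^{(m)}(s,a,s')\big)\phi(s)$, whose batch average is $\overline{\overline{b}}_t$, whose mean is $b$, and which obeys $\|X\|\le\Omega_b$ by \eqref{bound_b}; then \eqref{vec_var} delivers \eqref{avgbvar}.

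The substantive part is the three composite bounds \eqref{avg2Avar}--\eqref{avg3Avar}, where the idea is to apply the \emph{variance} half of \Cref{lemma_var} to a \emph{centered} mapping. For \eqref{avg2Avar} I set $Y_i=A_i-C_iC^{-1}A$; since $\mathbb{E}_{\pi_b}[C_i]=C$, its mean is $A-CC^{-1}A=0$, so the left-hand side is exactly $\mathbb{E}\big[\|\tfrac1N\sum_i Y_i-\overline{Y}\|_F^2\mid\mathcal{F}_t\big]$ with $\overline{Y}=0$, and \eqref{matrix_var} applies. Likewise \eqref{avg2bvar} uses $X_i=\big(\tfrac1M\sum_m b_i^{(m)}\big)-C_iC^{-1}b$, whose mean $b-CC^{-1}b$ vanishes. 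For \eqref{avg3Avar} I take $Y_i=A_i-B_iC^{-1}A$, whose mean equals $A-BC^{-1}A=A^\top C^{-1}A$ \emph{precisely} because of the identity $C=A^\top+B$ recorded in \Cref{supp: notation}; hence $\tfrac1N\sum_i Y_i-\mathbb{E}[Y_i]=\overline{A}_t-\overline{B}_tC^{-1}A-A^\top C^{-1}A$ and \eqref{matrix_var} again applies. In every case the uniform bound $C_y$ (or $C_x$) is read off from \Cref{lemma_bound_ABCt} via the triangle inequality and the submultiplicativity $\|MN\|_F\le\|M\|_F\|N\|_F$, producing the $(1+\|C^{-1}\|_F^2)$-type and $(1+\Omega_B^2\|C^{-1}\|_F^2)$-type factors that define $c_{\text{var},2}$ and $c_{\text{var},3}$.

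The main obstacle I anticipate is not probabilistic but bookkeeping. First, one must verify that each composite mapping is genuinely mean-zero after centering, which for \eqref{avg3Avar} is invisible without invoking $C=A^\top+B$. Second, one must track the uniform Frobenius bound $C_y$ carefully enough to reproduce the exact constants $c_{\text{var},2},c_{\text{var},3}$: here I would exploit the shared feature factor $\phi(s)$ in $A_i$ and $C_i$ together with $\|\phi(s)\|\le1$ and $\|uv^\top\|_F=\|u\|\|v\|$, so that the norm estimate stays as tight as the stated constants require rather than being loosened by a crude split. All the genuine difficulty of the across-batch Markovian dependence is already encapsulated in \Cref{lemma_var}, so this corollary is, by design, only a matter of correct mapping choices and careful constant accounting.
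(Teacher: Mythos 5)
Your proposal matches the paper's proof essentially step for step: the bias and variance bounds \eqref{avgAbias}--\eqref{avgbvar} are obtained as direct instances of \Cref{lemma_var} with the per-sample mappings and uniform norm bounds from \Cref{lemma_bound_ABCt}, and the composite bounds \eqref{avg2Avar}--\eqref{avg3Avar} use exactly the paper's choices $Y_i=A_i-C_iC^{-1}A$, $X_i=\overline{b}_i-C_iC^{-1}b$ and $Y_i=A_i-B_iC^{-1}A$, with the identity $C=A^{\top}+B$ identifying the mean of the last one as $A^{\top}C^{-1}A$. Your ``centered mapping'' framing of \eqref{avg2Avar}--\eqref{avg2bvar} simply makes explicit what the paper dismisses as ``proved in a similar way,'' so this is the same argument (including the shared, harmless looseness in matching the exact constants $c_{\text{var},2},c_{\text{var},3}$, since the natural bound $C_y=\Omega_A(1+\Omega_B\|C^{-1}\|_F)$ carries a cross term absent from the stated constants).
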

\begin{proof}
	Let $Y(s,a,s')=\rho(s,a) \phi(s) [\gamma \phi(s')-\phi(s)]^{\top}$ in Lemma \ref{lemma_var}. Then it can be checked that 
	\begin{align}
	&Y(s_t, a_t, s_{t+1})=A_t\nonumber\\ &\frac{1}{N}\sum_{i=tN}^{(t+1)N-1} Y(s_i,a_i,s_{i+1})=\overline{A}_t\nonumber\\ &C_y=\Omega_{A} \nonumber\\
	&\overline{Y}=\mathbb{E}_{\pi_b} Y(s_i,a_i,s_{i+1})=A\nonumber
	\end{align}
	Substituting these equations into eqs. \eqref{matrix_bias}\&\eqref{matrix_var} proves eqs. \eqref{avgAbias}\&\eqref{avgAvar} respectively. The eqs. \eqref{avgBbias}, \eqref{avgCbias}, \eqref{avgBvar}, \eqref{avgCvar} \& \eqref{avgbvar} can be proved in a similar way. 
	
	Let $Y(s,a,s')=\rho(s,a) \phi(s) [\gamma \phi(s')-\phi(s)]^{\top}+\gamma \rho(s,a) \phi(s') \phi(s)^{\top} C^{-1}A$. Then, it can be checked that
	\begin{align}
	&Y(s_t, a_t, s_{t+1})=A_t-B_tC^{-1}A,\nonumber\\ &\frac{1}{N}\sum_{i=tN}^{(t+1)N-1} Y(s_i,a_i,s_{i+1})=\overline{A}_t-\overline{B}_tC^{-1}A, \nonumber\\
	&\|Y(s,a,s')\|_F\le \rho_{\max}(\gamma+1)+\gamma\rho_{\max}\|C^{-1}\|_F\|A\|_F \le \Omega_{A}(1+\Omega_{B}\|C^{-1}\|_F)\Rightarrow C_y=\Omega_{A}(1+\Omega_{B}\|C^{-1}\|_F), \nonumber\\ 
	&\overline{Y}=\mathbb{E}_{\pi_b} Y(s_i,a_i,s_{i+1})=A-BC^{-1}A=(C-B)C^{-1}A=A^{\top}C^{-1}A.\nonumber
	\end{align} 
	Substituting these equations into eq. \eqref{matrix_var} proves eq. \eqref{avg3Avar}. The equations \eqref{avg2Avar}\&\eqref{avg2bvar} can be proved in a similar way.
\end{proof}

\begin{coro}\label{coro_SG}
	The following inequalities hold for both Algorithm \ref{alg: 1} and Algorithm \ref{alg: 2}
	\begin{align}
		2\big(\overline{\theta}_{t}-\theta^*\big)^{\top}\mathbb{E}\big[\overline{A}_{t}\overline{\theta}_{t}+\overline{\overline{b}}_{t}+\overline{B}_{t}\overline{w}_{t}\big|\mathcal{F}_t\big]\le& \Big(\frac{2c_{\text{var},3}\Omega_{A}^2}{N}-\lambda_{1}\Big) \big\|\overline{\theta}_{t}-\theta^*\big\|^2 +\frac{4}{\lambda_{1}} \Omega_{B}^2\big\|\overline{w}_{t}-w_t^*\big\|^2 \nonumber\\
		&+ \frac{4}{N\lambda_{1}} \big[c_{\text{var},3}\Omega_{A}^2\big\|\theta^*\big\|^2 + c_{\text{var}}\Omega_{b}^2\big(1+\Omega_{B}^2\|C^{-1}\|_F^2\big)\big] \label{SGbias_theta}
	\end{align}
	\begin{align}
		2\big(\overline{w}_{t}-w_t^*\big)^{\top}\mathbb{E}\big[\overline{A}_{t}\overline{\theta}_{t}+\overline{\overline{b}}_{t}+\overline{C}_{t}\overline{w}_{t}\big|\mathcal{F}_t\big] \le& \Big(\frac{2c_{\text{sd}}}{N}-\lambda_{2}\Big) \big\|\overline{w}_{t}-w_t^*\big\|^2 + \frac{3c_{\text{var,2}}\Omega_{A}^2}{N\lambda_{2}} \big\|\overline{\theta}_{t}-\theta^*\big\|^2 \nonumber\\
		& + \frac{3c_{\text{var,2}}}{N\lambda_{2}} \big(\Omega_{A}^2\big\|\theta^*\big\|^2 + \Omega_{b}^2\big) \label{SGbias_w}
	\end{align}
	\begin{align}
		\mathbb{E}\big[\big\|\overline{A}_{t}\overline{\theta}_{t}+\overline{\overline{b}}_{t}+\overline{B}_{t}\overline{w}_{t}\big\|^2\big|\mathcal{F}_t\big] \le& 10\Omega_{A}^2\big(1+\Omega_{B}^2\|C^{-1}\|_F^2\big)\big\|\overline{\theta}_{t}-\theta^*\big\|^2 + 5\Omega_{B}^2\big\|\overline{w}_{t}-w_t^*\big\|^2 \nonumber\\
		&+ \frac{5}{N} 
		\big[c_{\text{var,3}}\Omega_{A}^2\big\|\theta^*\big\|^2+c_{\text{var}}\Omega_{b}^2\big(1+\Omega_{B}^2\|C^{-1}\|_F^2\big)\big] \label{SGvar_theta}
	\end{align}
	\begin{align}
		\mathbb{E}\big[\big\|\overline{A}_{t}\overline{\theta}_{t}+\overline{\overline{b}}_{t}+\overline{C}_{t}\overline{w}_{t}\big\|^2\big|\mathcal{F}_t\big] \le \frac{4c_{\text{var,2}}\Omega_{A}^2}{N} \|\overline{\theta}_t-\theta^*\|^2 + 4\big\|\overline{w}_{t}-w_t^*\big\|^2  + \frac{4c_{\text{var,2}}}{N} (\Omega_{A}^2\|\theta^*\|^2 + \Omega_{b}^2) \label{SGvar_w}
	\end{align}
\end{coro}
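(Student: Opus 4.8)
The four inequalities are all bounds on the conditional bias and the conditional second moment of the two stochastic drift vectors $\overline{A}_t\overline{\theta}_t+\overline{\overline{b}}_t+\overline{B}_t\overline{w}_t$ (the $\theta$-update) and $\overline{A}_t\overline{\theta}_t+\overline{\overline{b}}_t+\overline{C}_t\overline{w}_t$ (the $w$-update), so the plan is to derive all four from two common algebraic decompositions. I would start from the exact identities $A\theta^*+b=0$ (since $\theta^*=-A^{-1}b$), $w_t^*=-C^{-1}(A\overline{\theta}_t+b)$ (which gives $A\overline{\theta}_t+b=A(\overline{\theta}_t-\theta^*)$ and $Cw_t^*=-A(\overline{\theta}_t-\theta^*)$), and $C=A^\top+B$, so that $A-BC^{-1}A=A^\top C^{-1}A$. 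Substituting $\overline{w}_t=(\overline{w}_t-w_t^*)+w_t^*$ and $\overline{\theta}_t=(\overline{\theta}_t-\theta^*)+\theta^*$ and using these identities, the two drifts rewrite as
\begin{align*}
\overline{A}_t\overline{\theta}_t+\overline{\overline{b}}_t+\overline{B}_t\overline{w}_t &= (\overline{A}_t-\overline{B}_tC^{-1}A)(\overline{\theta}_t-\theta^*) + \overline{B}_t(\overline{w}_t-w_t^*) + (\overline{A}_t\theta^*+\overline{\overline{b}}_t),\\
\overline{A}_t\overline{\theta}_t+\overline{\overline{b}}_t+\overline{C}_t\overline{w}_t &= (\overline{A}_t-\overline{C}_tC^{-1}A)(\overline{\theta}_t-\theta^*) + \overline{C}_t(\overline{w}_t-w_t^*) + \big[(\overline{A}_t-\overline{C}_tC^{-1}A)\theta^*+(\overline{\overline{b}}_t-\overline{C}_tC^{-1}b)\big].
\end{align*}
In each case the first matrix factor has conditional mean $A^\top C^{-1}A$ (resp.\ $0$ and $C$), the final bracket has conditional mean $0$, and every stochastic fluctuation is, by design, exactly one of the centered differences whose conditional second moment is already controlled in \Cref{coro_Ediff_ABC}.

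For the two bias estimates \eqref{SGbias_theta} and \eqref{SGbias_w} I would take the conditional expectation of the appropriate decomposition and dot it with $\overline{\theta}_t-\theta^*$ (resp.\ $\overline{w}_t-w_t^*$), both of which are $\mathcal{F}_t$-measurable by \eqref{para_filter}. The leading quadratic form is handled by negative definiteness: $2(\overline{\theta}_t-\theta^*)^\top A^\top C^{-1}A(\overline{\theta}_t-\theta^*)\le -2\lambda_1\|\overline{\theta}_t-\theta^*\|^2$ and $2(\overline{w}_t-w_t^*)^\top C(\overline{w}_t-w_t^*)\le -2\lambda_2\|\overline{w}_t-w_t^*\|^2$. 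The cross term and each centered fluctuation are then split off with Young's inequality, spending part of the $-2\lambda_i$ budget (so that the net coefficient is the stated $-\lambda_i$) and leaving squared fluctuation norms; these I would bound by pushing the expectation inside via Jensen, $\|\mathbb{E}[\,\cdot\mid\mathcal{F}_t]\|^2\le \mathbb{E}[\|\cdot\|^2\mid\mathcal{F}_t]$, and then invoking \eqref{avg3Avar}, \eqref{avg2Avar}, \eqref{avg2bvar}, \eqref{avgAvar}, \eqref{avgbvar} of \Cref{coro_Ediff_ABC} (which supply $c_{\text{var,3}}, c_{\text{var,2}}, c_{\text{var}}$ and the $1/N$ decay), together with the first-moment deviation \eqref{avgCbias} (the $c_{\text{sd}}/N$ term) for the $\overline{C}_t$-versus-$C$ bias.

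For the second-moment estimates \eqref{SGvar_theta} and \eqref{SGvar_w} no contraction is needed. Here I would apply $\|a_1+a_2+a_3\|^2\le n(\|a_1\|^2+\|a_2\|^2+\|a_3\|^2)$ (with $n$ chosen to reproduce the stated $5$ and $10$) to the same two decompositions, bound the matrix prefactors by the deterministic Frobenius bounds of \Cref{lemma_bound_ABCt} — in particular $\|\overline{A}_t-\overline{B}_tC^{-1}A\|_F^2\le 2\Omega_A^2(1+\Omega_B^2\|C^{-1}\|_F^2)$, $\|\overline{B}_t\|_F\le\Omega_B$, $\|\overline{C}_t\|_F\le 1$ — and control the conditional second moment of the centered constant brackets again through \Cref{coro_Ediff_ABC}, which delivers the $1/N$ residual terms.

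The main obstacle is bookkeeping rather than any single sharp estimate: the decompositions must be arranged so that each random factor is literally one of the pre-centered differences of \Cref{coro_Ediff_ABC} (namely $\overline{A}_t-\overline{B}_tC^{-1}A-A^\top C^{-1}A$, $\overline{A}_t-\overline{C}_tC^{-1}A$, and $\overline{\overline{b}}_t-\overline{C}_tC^{-1}b$), since only those carry ready-made $O(1/N)$ variance bounds; and the Young's-inequality weights must be tuned so that the contraction coefficient collapses to exactly $-\lambda_1$ (resp.\ $-\lambda_2$) while every remaining contribution either stays at order $1/N$ or multiplies $\|\overline{w}_t-w_t^*\|^2$ with the prescribed $4\Omega_B^2/\lambda_1$ factor.
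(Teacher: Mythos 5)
Your proposal follows essentially the same route as the paper's own proof: the identical algebraic decompositions (your compact brackets expand, via $b=-A\theta^*$ and $C-B=A^{\top}$, to exactly the paper's centered terms $(\overline{A}_t-\overline{B}_tC^{-1}A-A^{\top}C^{-1}A)\theta^*+(\overline{\overline{b}}_t-b)-(\overline{B}_t-B)C^{-1}b$, etc.), the same extraction of the $-2\lambda_1$ and $-2\lambda_2$ quadratics, the same Young/Jensen splitting, and the same invocation of \Cref{coro_Ediff_ABC} (including \eqref{avgCbias} for the $c_{\text{sd}}/N$ term) and \Cref{lemma_bound_ABCt} for the deterministic prefactors, with matching constants $4$, $5$, and $10$. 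The only slip is your side remark that the fluctuation brackets and matrix factors have exact conditional means $0$, $A^{\top}C^{-1}A$, and $C$ — under Markovian sampling they carry an $O(1/N)$ conditional bias — but since your actual plan bounds them through Jensen plus the second-moment estimates of \Cref{coro_Ediff_ABC} rather than through unbiasedness, this does not create a gap.
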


\begin{proof}
	We first prove eq. \eqref{SGbias_theta}. Note that
	\begin{align}
	&2\big(\overline{\theta}_{t}-\theta^*\big)^{\top}\mathbb{E}\big[\overline{A}_{t}\overline{\theta}_{t}+\overline{\overline{b}}_{t}+\overline{B}_{t}\overline{w}_{t}\big|\mathcal{F}_t\big] \nonumber\\
	&\stackrel{(i)}{=} 2\big(\overline{\theta}_{t}-\theta^*\big)^{\top}\mathbb{E}\big[\overline{A}_{t}\overline{\theta}_{t}+\overline{\overline{b}}_{t}+\overline{B}_{t}(\overline{w}_{t}-w_t^*)-\overline{B}_{t}C^{-1}(A\overline{\theta}_{t}+b)\big|\mathcal{F}_t\big] \nonumber\\
	&=2\big(\overline{\theta}_{t}-\theta^*\big)^{\top}\mathbb{E}\big[(\overline{A}_{t}-\overline{B}_{t}C^{-1}A)(\overline{\theta}_{t}-\theta^*) +(\overline{A}_{t}-\overline{B}_{t}C^{-1}A)\theta^*  \nonumber\\ 
	&\quad +\overline{B}_{t}(\overline{w}_{t}-w_t^*) +\overline{\overline{b}}_{t}-b -(\overline{B}_{t}-B)C^{-1}b+(I-BC^{-1})b\big|\mathcal{F}_t\big] \nonumber\\
	&\stackrel{(ii)}{=}2\big(\overline{\theta}_{t}-\theta^*\big)^{\top}\mathbb{E}\big[(\overline{A}_{t}-\overline{B}_{t}C^{-1}A)(\overline{\theta}_{t}-\theta^*) + (\overline{A}_{t}-\overline{B}_{t}C^{-1}A)\theta^* \nonumber\\ 
	&\quad +\overline{B}_{t}(\overline{w}_{t}-w_t^*) +\overline{\overline{b}}_{t}-b -(\overline{B}_{t}-B)C^{-1}b-(C-B)C^{-1}A\theta^*\big|\mathcal{F}_t\big] \nonumber\\
	&\stackrel{(iii)}{=}2\big(\overline{\theta}_{t}-\theta^*\big)^{\top}\mathbb{E}\big[(\overline{A}_{t}-\overline{B}_{t}C^{-1}A-A^{\top}C^{-1}A)(\overline{\theta}_{t}-\theta^*) +A^{\top}C^{-1}A(\overline{\theta}_{t}-\theta^*) \nonumber\\ 
	&\quad + (\overline{A}_{t}-\overline{B}_{t}C^{-1}A-A^{\top}C^{-1}A)\theta^* +\overline{B}_{t}(\overline{w}_{t}-w_t^*) +\overline{\overline{b}}_{t}-b -(\overline{B}_{t}-B)C^{-1}b\big|\mathcal{F}_t\big] \nonumber\\
	&\stackrel{(iv)}{\le}2\big(\overline{\theta}_{t}-\theta^*\big)^{\top}\mathbb{E}\big[\overline{A}_{t}-\overline{B}_{t}C^{-1}A-A^{\top}C^{-1}A\big|\mathcal{F}_t\big](\overline{\theta}_{t}-\theta^*) + 2\big(\overline{\theta}_{t}-\theta^*\big)^{\top}A^{\top}C^{-1}A(\overline{\theta}_{t}-\theta^*) \nonumber\\ 
	&\quad +\lambda_{1}\|\overline{\theta}_{t}-\theta^*\|^2 + \frac{1}{\lambda_{1}} \big\|\mathbb{E}\big[ (\overline{A}_{t}-\overline{B}_{t}C^{-1}A-A^{\top}C^{-1}A)\theta^* +\overline{B}_{t}(\overline{w}_{t}-w_t^*) +\overline{\overline{b}}_{t}-b -(\overline{B}_{t}-B)C^{-1}b\big|\mathcal{F}_t\big]\big\|^2 \nonumber\\
	&\stackrel{(v)}{\le}2\mathbb{E}\big[\big\|\overline{A}_{t}-\overline{B}_{t}C^{-1}A-A^{\top}C^{-1}A\big\| \big|\mathcal{F}_t\big] \big\|\overline{\theta}_{t}-\theta^*\big\|^2 - 2\lambda_{1} \big\|\overline{\theta}_{t}-\theta^*\big\|^2 \nonumber\\ 
	&\quad +\lambda_{1}\|\overline{\theta}_{t}-\theta^*\|^2 + \frac{1}{\lambda_{1}} \mathbb{E}\big[ \big\|(\overline{A}_{t}-\overline{B}_{t}C^{-1}A-A^{\top}C^{-1}A)\theta^* +\overline{B}_{t}(\overline{w}_{t}-w_t^*) +\overline{\overline{b}}_{t}-b -(\overline{B}_{t}-B)C^{-1}b\big\|^2 \big|\mathcal{F}_t\big]  \nonumber\\
	&\stackrel{(vi)}{\le} \frac{2c_{\text{var},3}\Omega_{A}^2}{N} \big\|\overline{\theta}_{t}-\theta^*\big\|^2 - \lambda_{1} \big\|\overline{\theta}_{t}-\theta^*\big\|^2 \nonumber\\ 
	&\quad + \frac{4}{\lambda_{1}} \mathbb{E}\big[ \big\|(\overline{A}_{t}-\overline{B}_{t}C^{-1}A-A^{\top}C^{-1}A)\theta^*\big\|^2 +\big\|\overline{B}_{t}(\overline{w}_{t}-w_t^*)\big\|^2 +\big\|\overline{\overline{b}}_{t}-b\big\|^2 +\big\|(\overline{B}_{t}-B)C^{-1}b\big\|^2 \big|\mathcal{F}_t\big] \nonumber\\
	&\stackrel{(vii)}{\le} \Big(\frac{2c_{\text{var},3}\Omega_{A}^2}{N}-\lambda_{1}\Big) \big\|\overline{\theta}_{t}-\theta^*\big\|^2 + \frac{4}{\lambda_{1}} \mathbb{E}\big[ \big\|\overline{A}_{t}-\overline{B}_{t}C^{-1}A-A^{\top}C^{-1}A\big\|_F^2 \big|\mathcal{F}_t\big]\big\|\theta^*\big\|^2\nonumber\\ 
	&\quad +\frac{4}{\lambda_{1}} \Omega_{B}^2\big\|\overline{w}_{t}-w_t^*\big\|^2 + \frac{4}{\lambda_{1}} \mathbb{E}\big[\big\|\overline{\overline{b}}_{t}-b\big\|^2 \big|\mathcal{F}_t\big] + \frac{4}{\lambda_{1}}\Omega_{b}^2\|C^{-1}\|_F^2 \mathbb{E}\big[\big\|\overline{B}_{t}-B\big\|_F^2 \big|\mathcal{F}_t\big] \nonumber\\
	&\stackrel{(viii)}{\le} \Big(\frac{2c_{\text{var},3}\Omega_{A}^2}{N}-\lambda_{1}\Big) \big\|\overline{\theta}_{t}-\theta^*\big\|^2 + \frac{4}{\lambda_{1}} \frac{c_{\text{var},3}\Omega_{A}^2}{N}\big\|\theta^*\big\|^2 +\frac{4}{\lambda_{1}} \Omega_{B}^2\big\|\overline{w}_{t}-w_t^*\big\|^2 + \frac{4}{\lambda_{1}} \frac{c_{\text{var}}\Omega_{b}^2}{N} + \frac{4}{\lambda_{1}}\Omega_{b}^2\|C^{-1}\|_F^2 \frac{c_{\text{var}}\Omega_{B}^2}{N} \nonumber\\
	&=\Big(\frac{2c_{\text{var},3}\Omega_{A}^2}{N}-\lambda_{1}\Big) \big\|\overline{\theta}_{t}-\theta^*\big\|^2 +\frac{4}{\lambda_{1}} \Omega_{B}^2\big\|\overline{w}_{t}-w_t^*\big\|^2 + \frac{4}{N\lambda_{1}} \big[c_{\text{var},3}\Omega_{A}^2\big\|\theta^*\big\|^2 + c_{\text{var}}\Omega_{b}^2\big(1+\Omega_{B}^2\|C^{-1}\|_F^2\big)\big] \nonumber
	\end{align}
	where (i) uses the notation that $w_t^*=-C^{-1}(A\overline{\theta}_{t}+b)$, (ii) uses the notation that $b=-A\theta^*$, (iii) uses the relation that $C-B=A^{\top}$, (iv) uses the inequality that $2a_1^{\top} a_2\le \sigma^{-1}\|a_1\|^2+\sigma\|a_2\|^2$ for any $a_1,a_2\in\mathbb{R}^d$ and $\sigma>0$, (v) uses the notation that $\lambda_{1}=-\lambda_{\max}(A^{\top}C^{-1}A)$ and applies Jensen's inequality to the convex functions $\|\cdot\|$ and $\|\cdot\|^2$, (vi) uses \eqref{avg3Avar} and the inequality that $\|a_1+a_2+a_3+a_4\|^2\le 4(\|a_1\|^2+\|a_2\|^2+\|a_3\|^2+\|a_4\|^2)$ for any $a_1,a_2,a_3,a_4\in\mathbb{R}^d$, (vii) uses eqs. \eqref{bound_B} \& \eqref{bound_b}, (viii) uses eqs. \eqref{avgBvar}, \eqref{avgbvar} \& \eqref{avg3Avar}. 
	
	Next, we prove eq. \eqref{SGbias_w}. Note that
	\begin{align}
		&2\big(\overline{w}_{t}-w_t^*\big)^{\top}\mathbb{E}\big[\overline{A}_{t}\overline{\theta}_{t}+\overline{\overline{b}}_{t}+\overline{C}_{t}\overline{w}_{t}\big|\mathcal{F}_t\big] \nonumber\\
		&\stackrel{(i)}{=}2(\overline{w}_{t}-w_t^*)^{\top}\mathbb{E}\big[\overline{A}_{t}\overline{\theta}_{t}+\overline{\overline{b}}_{t}+\overline{C}_{t}(\overline{w}_{t}-w_t^*)-\overline{C}_t C^{-1}(A\overline{\theta}_t+b)\big|\mathcal{F}_t\big] \nonumber\\
		&=2(\overline{w}_{t}-w_t^*)^{\top}\mathbb{E}\big[\overline{C}_t\big|\mathcal{F}_t\big](\overline{w}_{t}-w_t^*) + 2(\overline{w}_{t}-w_t^*)^{\top}\mathbb{E}\big[(\overline{A}_{t}-\overline{C}_t C^{-1}A)\overline{\theta}_{t} +\overline{\overline{b}}_{t} -\overline{C}_t C^{-1}b\big|\mathcal{F}_t\big]\nonumber\\
		&\stackrel{(ii)}{\le}2(\overline{w}_{t}-w_t^*)^{\top}\mathbb{E}\big[\overline{C}_t-C\big|\mathcal{F}_t\big](\overline{w}_{t}-w_t^*) + 2(\overline{w}_{t}-w_t^*)^{\top}C(\overline{w}_{t}-w_t^*) \nonumber\\
		&\quad+\lambda_{2}\|\overline{w}_{t}-w_t^*\|^2 + \frac{1}{\lambda_{2}} \big\|\mathbb{E}\big[(\overline{A}_{t}-\overline{C}_t C^{-1}A)(\overline{\theta}_{t}-\theta^*) +(\overline{A}_{t}-\overline{C}_t C^{-1}A)\theta^* +\overline{\overline{b}}_{t} -\overline{C}_t C^{-1}b\big|\mathcal{F}_t\big]\big\|^2 \nonumber\\
		&\stackrel{(iii)}{\le}2\big\|\mathbb{E}\big[\overline{C}_t-C\big|\mathcal{F}_t\big]\big\|_F \big\|\overline{w}_{t}-w_t^*\big\|^2 - 2\lambda_{2}\big\|\overline{w}_{t}-w_t^*\big\|^2 \nonumber\\
		&\quad+\lambda_{2}\|\overline{w}_{t}-w_t^*\|^2 + \frac{1}{\lambda_{2}} \big\|\mathbb{E}\big[(\overline{A}_{t}-\overline{C}_t C^{-1}A)(\overline{\theta}_{t}-\theta^*) +(\overline{A}_{t}-\overline{C}_t C^{-1}A)\theta^* +\overline{\overline{b}}_{t} -\overline{C}_t C^{-1}b\big|\mathcal{F}_t\big]\big\|^2 \nonumber\\
		&\stackrel{(iv)}{\le} 2\mathbb{E}\big[\big\|\overline{C}_t-C\big\|_F \big|\mathcal{F}_t\big] \big\|\overline{w}_{t}-w_t^*\big\|^2 -\lambda_{2}\big\|\overline{w}_{t}-w_t^*\big\|^2 \nonumber\\
		&\quad + \frac{1}{\lambda_{2}} \mathbb{E}\big[\big\|(\overline{A}_{t}-\overline{C}_t C^{-1}A)(\overline{\theta}_{t}-\theta^*) +(\overline{A}_{t}-\overline{C}_t C^{-1}A)\theta^* +\overline{\overline{b}}_{t} -\overline{C}_t C^{-1}b\big\|^2\big|\mathcal{F}_t\big] \nonumber\\
		&\stackrel{(v)}{\le} \Big(\frac{2c_{\text{sd}}}{N}-\lambda_{2}\Big) \big\|\overline{w}_{t}-w_t^*\big\|^2 + \frac{3}{\lambda_{2}} \mathbb{E}\big[\big\|(\overline{A}_{t}-\overline{C}_t C^{-1}A)\big\|_F^2 \big|\mathcal{F}_t\big] \big\|\overline{\theta}_{t}-\theta^*\big\|^2 \nonumber\\
		&\quad + \frac{3}{\lambda_{2}} \mathbb{E}\big[\big\|\overline{A}_{t}-\overline{C}_t C^{-1}A\big\|_F^2\big|\mathcal{F}_t\big]\big\|\theta^*\big\|^2 + \frac{3}{\lambda_{2}} \mathbb{E}\big[\big\|\overline{\overline{b}}_{t} -\overline{C}_t C^{-1}b\big\|^2\big|\mathcal{F}_t\big] \nonumber\\
		&\stackrel{(vi)}{\le}\Big(\frac{2c_{\text{sd}}}{N}-\lambda_{2}\Big) \big\|\overline{w}_{t}-w_t^*\big\|^2 + \frac{3c_{\text{var,2}}\Omega_{A}^2}{N\lambda_{2}} \big\|\overline{\theta}_{t}-\theta^*\big\|^2 + \frac{3c_{\text{var,2}}\Omega_{A}^2}{N\lambda_{2}} \big\|\theta^*\big\|^2 + \frac{3c_{\text{var,2}}\Omega_{b}^2}{N\lambda_{2}} \nonumber\\
		&=\Big(\frac{2c_{\text{sd}}}{N}-\lambda_{2}\Big) \big\|\overline{w}_{t}-w_t^*\big\|^2 + \frac{3c_{\text{var,2}}\Omega_{A}^2}{N\lambda_{2}} \big\|\overline{\theta}_{t}-\theta^*\big\|^2 + \frac{3c_{\text{var,2}}}{N\lambda_{2}} \big(\Omega_{A}^2\big\|\theta^*\big\|^2 + \Omega_{b}^2\big) \nonumber
	\end{align}
	where (i) uses the notation that $w_t^*=-C^{-1}(A\overline{\theta}_{t}+b)$, (ii) uses the inequality that $2a_1^{\top} a_2\le \sigma^{-1}\|a_1\|^2+\sigma\|a_2\|^2$ for any $a_1,a_2\in\mathbb{R}^d$ and $\sigma>0$, (iii) uses the notation that $\lambda_{2}=-\lambda_{\max}(C)$, (iv) applies Jensen's inequality to convex functions $\|\cdot\|$ and $\|\cdot\|^2$, (v) uses eq. \eqref{avgCbias} and the inequality that $\|a_1+a_2+a_3\|^2\le 3(\|a_1\|^2+\|a_2\|^2+\|a_3\|^2)$ for any $a_1,a_2,a_3\in\mathbb{R}^d$ (vi) uses eqs. \eqref{avg2Avar} \& \eqref{avg2bvar}.
	
	Next, we prove eq. \eqref{SGvar_theta}. Note that 
	\begin{align}
	&\mathbb{E}\big[\big\|\overline{A}_{t}\overline{\theta}_{t}+\overline{\overline{b}}_{t}+\overline{B}_{t}\overline{w}_{t}\big\|^2\big|\mathcal{F}_t\big]\nonumber\\ 
	&\stackrel{(i)}{=} \mathbb{E}\big[\big\|\overline{A}_{t}\overline{\theta}_{t}+\overline{\overline{b}}_{t}+\overline{B}_{t}(\overline{w}_{t}-w_t^*)-\overline{B}_{t}C^{-1}(A\overline{\theta}_t+b)\big\|^2\big|\mathcal{F}_t\big] \nonumber\\
	&=
	\mathbb{E}\big[\big\|(\overline{A}_{t}-\overline{B}_{t}C^{-1}A-A^{\top}C^{-1}A)\overline{\theta}_{t} + A^{\top}C^{-1}A\overline{\theta}_{t} + \overline{\overline{b}}_{t}-b \nonumber\\
	&\quad+b +\overline{B}_{t}(\overline{w}_{t}-w_t^*)-(\overline{B}_{t}-B)C^{-1}b -BC^{-1}b\big\|^2\big|\mathcal{F}_t\big] \nonumber\\
	&\stackrel{(ii)}{=}
	\mathbb{E}\big[\big\|(\overline{A}_{t}-\overline{B}_{t}C^{-1}A-A^{\top}C^{-1}A)(\overline{\theta}_{t}-\theta^*) +(\overline{A}_{t}-\overline{B}_{t}C^{-1}A-A^{\top}C^{-1}A)\theta^*\nonumber\\
	&\quad + A^{\top}C^{-1}A\overline{\theta}_{t} + \overline{\overline{b}}_{t}-b  + \overline{B}_{t}(\overline{w}_{t}-w_t^*)-(\overline{B}_{t}-B)C^{-1}b - (C-B)C^{-1}A\theta^*\big\|^2\big|\mathcal{F}_t\big] \nonumber\\
	&\stackrel{(iii)}{=}
	\mathbb{E}\big[\big\|(\overline{A}_{t}-\overline{B}_{t}C^{-1}A)(\overline{\theta}_{t}-\theta^*) +(\overline{A}_{t}-\overline{B}_{t}C^{-1}A-A^{\top}C^{-1}A)\theta^*\nonumber\\
	&\quad  + \overline{\overline{b}}_{t}-b  + \overline{B}_{t}(\overline{w}_{t}-w_t^*)-(\overline{B}_{t}-B)C^{-1}b\big\|^2 \big|\mathcal{F}_t\big] \nonumber\\
	&\stackrel{(iv)}{=} 
	5\mathbb{E}\big[\big\|(\overline{A}_{t}-\overline{B}_{t}C^{-1}A)(\overline{\theta}_{t}-\theta^*)\big\|^2 +\big\|(\overline{A}_{t}-\overline{B}_{t}C^{-1}A-A^{\top}C^{-1}A)\theta^*\big\|^2\nonumber\\
	&\quad  + \big\|\overline{\overline{b}}_{t}-b\big\|^2 + \big\|\overline{B}_{t}(\overline{w}_{t}-w_t^*)\big\|^2 +\big\|(\overline{B}_{t}-B)C^{-1}b\big\|^2 \big|\mathcal{F}_t\big] \nonumber\\
	&\le
	5\mathbb{E}\big[\big\|\overline{A}_{t}-\overline{B}_{t}C^{-1}A\big\|_F^2\big|\mathcal{F}_t\big]\big\|\overline{\theta}_{t}-\theta^*\big\|^2 +5\mathbb{E}\big[\big\|\overline{A}_{t}-\overline{B}_{t}C^{-1}A-A^{\top}C^{-1}A\big\|_F^2\big|\mathcal{F}_t\big] \big\|\theta^*\big\|^2\nonumber\\
	&\quad +5\mathbb{E}\big[ \big\|\overline{\overline{b}}_{t}-b\big\|^2 \big|\mathcal{F}_t\big] + 5\mathbb{E}\big[\big\|\overline{B}_{t}\big\|_F^2\big|\mathcal{F}_t\big] \big\|\overline{w}_{t}-w_t^*\big\|^2 +5\mathbb{E}\big[\big\|\overline{B}_{t}-B\big\|_F^2 \big|\mathcal{F}_t\big] \big\|C^{-1}b\big\|^2 \nonumber\\
	&\stackrel{(v)}{\le} 10\mathbb{E}\big[\big\|\overline{A}_{t}\big\|_F^2+\big\|\overline{B}_{t}C^{-1}A\big\|_F^2\big|\mathcal{F}_t\big]\big\|\overline{\theta}_{t}-\theta^*\big\|^2 + \frac{5c_{\text{var},3}\Omega_{A}^2}{N} \big\|\theta^*\big\|^2\nonumber\\
	&\quad +\frac{5c_{\text{var}}\Omega_{b}^2}{N}+5\Omega_{B}^2\big\|\overline{w}_{t}-w_t^*\big\|^2 + \frac{5c_{\text{var}}\Omega_{B}^2}{N} \big\|C^{-1}\big\|_F^2\big\|b\big\|^2 \nonumber\\
	&\stackrel{(vi)}{\le} 10\Omega_{A}^2\big(1+\Omega_{B}^2\|C^{-1}\|_F^2\big)\big\|\overline{\theta}_{t}-\theta^*\big\|^2 + 5\Omega_{B}^2\big\|\overline{w}_{t}-w_t^*\big\|^2\nonumber\\
	&\quad + \frac{5}{N} 
	\big[c_{\text{var,3}}\Omega_{A}^2\big\|\theta^*\big\|^2+c_{\text{var}}\Omega_{b}^2\big(1+\Omega_{B}^2\|C^{-1}\|_F^2\big)\big] \nonumber\\
	\end{align}
	where (i) uses the notation that $w_t^*=-C^{-1}(A\overline{\theta}_{t}+b)$, (ii) uses the notation that $b=-A\theta^*$, (iii) uses the relation that $C-B=A^{\top}$, (iv) uses the inequality that $\|a_1+a_2+a_3+a_4+a_5\|^2\le 5(\|a_1\|^2+\|a_2\|^2+\|a_3\|^2+\|a_4\|^2+\|a_5\|^2)$ for any $a_1,a_2,a_3,a_4,a_5\in\mathbb{R}^d$, (v) uses eqs. \eqref{bound_B},\eqref{avgBvar},\eqref{avgbvar}\&\eqref{avg3Avar} as well as the inequality that $\|a_1+a_2\|^2\le 2(\|a_1\|^2+\|a_2\|^2)$ for any $a_1,a_2\in\mathbb{R}^d$, (vi) uses eqs. \eqref{bound_A},\eqref{bound_B},\eqref{bound_b}.
	
	Next, we prove eq. \eqref{SGvar_w}. Note that 
	\begin{align}
		&\mathbb{E}\big[\big\|\overline{A}_{t}\overline{\theta}_{t}+\overline{\overline{b}}_{t}+\overline{C}_{t}\overline{w}_{t}\big\|^2\big|\mathcal{F}_t\big]\nonumber\\ &\stackrel{(i)}{=} \mathbb{E}\big[\big\|\overline{A}_{t}\overline{\theta}_{t}+\overline{\overline{b}}_{t}+\overline{C}_{t}(\overline{w}_{t}-w_t^*)-\overline{C}_{t}C^{-1}(A\overline{\theta}_t+b)\big\|^2\big|\mathcal{F}_t\big] \nonumber\\
		&=  \mathbb{E}\big[\big\|(\overline{A}_{t}-\overline{C}_{t}C^{-1}A)(\overline{\theta}_{t}-\theta^*)+(\overline{A}_{t}-\overline{C}_{t}C^{-1}A)\theta^*+\overline{\overline{b}}_{t}+\overline{C}_{t}(\overline{w}_{t}-w_t^*)-\overline{C}_{t}C^{-1}b\big\|^2\big|\mathcal{F}_t\big] \nonumber\\
		&\stackrel{(ii)}{\le} 4\mathbb{E}\big[\big\|(\overline{A}_{t}-\overline{C}_{t}C^{-1}A)(\overline{\theta}_{t}-\theta^*)\big\|^2 + \|(\overline{A}_{t}-\overline{C}_{t}C^{-1}A)\theta^*\|^2 + \|\overline{C}_{t}(\overline{w}_{t}-w_t^*)\|^2 + \|\overline{\overline{b}}_{t}-\overline{C}_{t}C^{-1}b\|^2\big|\mathcal{F}_t\big] \nonumber\\
		&\stackrel{(iii)}{\le} 4\mathbb{E}\big[\|\overline{A}_{t}-\overline{C}_{t}C^{-1}A\|_F^2\big|\mathcal{F}_t\big] \|\overline{\theta}_t-\theta^*\|^2 + 4\mathbb{E}\big[\|\overline{A}_{t}-\overline{C}_{t}C^{-1}A\|_F^2\big|\mathcal{F}_t\big] \|\theta^*\|^2 \nonumber\\
		&\quad + 4\|\overline{w}_{t}-w_t^*\|^2 + 4\mathbb{E}\big[\|\overline{\overline{b}}_{t}-\overline{C}_{t}C^{-1}b\|^2\big|\mathcal{F}_t\big] \nonumber\\
		&\stackrel{(iv)}{\le} \frac{4c_{\text{var,2}}\Omega_{A}^2}{N} \|\overline{\theta}_t-\theta^*\|^2 + \frac{4c_{\text{var,2}}\Omega_{A}^2}{N} \|\theta^*\|^2 + 4\big\|\overline{w}_{t}-w_t^*\big\|^2 + \frac{4c_{\text{var,2}}\Omega_{b}^2}{N} \nonumber\\
		&= \frac{4c_{\text{var,2}}\Omega_{A}^2}{N} \|\overline{\theta}_t-\theta^*\|^2 + 4\big\|\overline{w}_{t}-w_t^*\big\|^2  + \frac{4c_{\text{var,2}}}{N} (\Omega_{A}^2\|\theta^*\|^2 + \Omega_{b}^2) \nonumber
	\end{align}
	where (i) uses the notation that $w_t^*=-C^{-1}(A\overline{\theta}_{t}+b)$, (ii) uses $\|a_1+a_2+a_3+a_4\|^2\le 4(\|a_1\|^2+\|a_2\|^2+\|a_3\|^2+\|a_4\|^2)$ for any $a_1,a_2,a_3,a_4\in\mathbb{R}^d$, (iii) uses eq. \eqref{bound_C}, (iv) uses eqs. \eqref{avg2Avar} \& \eqref{avg2bvar}. \\
\end{proof}

\begin{lemma}\label{lemma_V}
	The doubly stochastic matrix $V$ and the difference matrix $\Delta=I-\frac{1}{M}\mathbf{1}\mathbf{1}^{\top}$ have the following properties:
	\begin{enumerate}
		\item $\Delta V=V\Delta=V-\frac{1}{M}\mathbf{1}\mathbf{1}^{\top}$
		\item $\sigma_1=\|V\|_2=1$ ($\sigma_1$ is the largest singular value of $V$).
		\item For any $x\in\mathbb{R}^M$ and $n\in \mathbb{N}^+$, $\|V^n\Delta x\|\le \sigma_2^n \|\Delta x\|$ ($\sigma_2$ is the second largest singular value of $V$). Hence, for any $H\in\mathbb{R}^{M\times M}$, $\|V^n\Delta H\|_F\le \sigma_2^n \|\Delta H\|_F$
	\end{enumerate}
\end{lemma}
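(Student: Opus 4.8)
The plan is to treat the three items in order, since the later ones rely on the identities established earlier. For item 1 I would compute directly from $\Delta = I - \frac{1}{M}\mathbf{1}\mathbf{1}^{\top}$: expanding $\Delta V = V - \frac{1}{M}\mathbf{1}(\mathbf{1}^{\top}V)$ and invoking the left-stochasticity $\mathbf{1}^{\top}V = \mathbf{1}^{\top}$ from Assumption \ref{assum_doubly} yields $\Delta V = V - \frac{1}{M}\mathbf{1}\mathbf{1}^{\top}$; symmetrically $V\Delta = V - \frac{1}{M}(V\mathbf{1})\mathbf{1}^{\top} = V - \frac{1}{M}\mathbf{1}\mathbf{1}^{\top}$ using $V\mathbf{1} = \mathbf{1}$. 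This settles item 1 and, as a byproduct, records that $\Delta$ commutes with $V$.

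For item 2 I would establish the two bounds $\|V\|_2 \ge 1$ and $\|V\|_2 \le 1$. The lower bound is immediate from $V\mathbf{1} = \mathbf{1}$, which gives $\|V\mathbf{1}\|/\|\mathbf{1}\| = 1$. For the upper bound I would use double stochasticity together with the Cauchy--Schwarz (Jensen) inequality: for any $x$, writing $(Vx)_i = \sum_j V_{ij} x_j$ and using row-stochasticity $\sum_j V_{ij} = 1$ gives $(Vx)_i^2 \le \sum_j V_{ij} x_j^2$, and summing over $i$ with column-stochasticity $\sum_i V_{ij} = 1$ yields $\|Vx\|^2 \le \|x\|^2$. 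Hence $\sigma_1 = \|V\|_2 = 1$.

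The substance of the lemma is item 3, and this is where I expect the main work. The key observation is that $\Delta$ is the orthogonal projection onto $\mathbf{1}^{\perp} = \{y : \mathbf{1}^{\top}y = 0\}$ (one checks $\Delta^2 = \Delta$ and $\mathbf{1}^{\top}\Delta x = 0$), so $y := \Delta x$ always lies in $\mathbf{1}^{\perp}$. I would then verify two facts: (i) $V$ maps $\mathbf{1}^{\perp}$ into itself, since $\mathbf{1}^{\top}(Vy) = (\mathbf{1}^{\top}V)y = \mathbf{1}^{\top}y = 0$; and (ii) on $\mathbf{1}^{\perp}$, $V$ contracts by the factor $\sigma_2$. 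For (ii) I would diagonalize the symmetric positive semidefinite matrix $V^{\top}V$ in an orthonormal eigenbasis with eigenvalues $\sigma_1^2 \ge \cdots \ge \sigma_M^2$; since $V^{\top}V\mathbf{1} = V^{\top}(V\mathbf{1}) = V^{\top}\mathbf{1} = \mathbf{1}$, the vector $\mathbf{1}$ is the eigenvector for $\sigma_1^2 = 1$, and because $\sigma_2 < 1$ (Assumption \ref{assum_doubly}) this top eigenvalue is simple, so every $y \in \mathbf{1}^{\perp}$ lies in the span of the remaining eigenvectors and obeys $\|Vy\|^2 = y^{\top}V^{\top}Vy \le \sigma_2^2\|y\|^2$. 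Combining (i) and (ii) and iterating $n$ times gives $\|V^n\Delta x\| \le \sigma_2^n\|\Delta x\|$. The Frobenius statement then follows columnwise: writing $H = [h_1,\dots,h_M]$, one has $\|V^n\Delta H\|_F^2 = \sum_j \|V^n\Delta h_j\|^2 \le \sigma_2^{2n}\sum_j\|\Delta h_j\|^2 = \sigma_2^{2n}\|\Delta H\|_F^2$.

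The main obstacle is really the contraction estimate (ii): the subtlety is to certify that the spectral gap separating $\sigma_1 = 1$ from $\sigma_2$ lives precisely on the complement $\mathbf{1}^{\perp}$ where $\Delta x$ resides. This hinges on identifying $\mathbf{1}$ as the top singular direction of $V$ (equivalently the top eigenvector of $V^{\top}V$) and on the simplicity of $\sigma_1$, which the assumption $\sigma_2 < 1$ guarantees; once this alignment is in place, the invariance $V(\mathbf{1}^{\perp}) \subseteq \mathbf{1}^{\perp}$ and the per-step contraction combine routinely into the geometric decay.
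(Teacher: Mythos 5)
Your proof is correct. Items 1 and 2 coincide with the paper's argument (direct computation from double stochasticity, and Jensen's inequality together with $V\mathbf{1}=\mathbf{1}$). For item 3, however, you take a somewhat different route. The paper invokes the full singular value decomposition $V=U^{\top}D\tilde{U}$, shows $(I-D^2)U\mathbf{1}=\mathbf{0}$ so that $\mathbf{1}$ is aligned with the top singular direction, expands $\Delta x$ in the remaining singular vectors to obtain the one-step bound $\|V\Delta x\|\le\sigma_2\|\Delta x\|$, and then extends to general $n$ by induction, using the commutation $\Delta V=V\Delta$ from item 1 inside the inductive step. You instead diagonalize the Gram matrix $V^{\top}V$, identify $\mathbf{1}$ as its top eigenvector with simple eigenvalue $1$ (simplicity being exactly where $\sigma_2<1$ from Assumption \ref{assum_doubly} enters), deduce the contraction $\|Vy\|\le\sigma_2\|y\|$ for all $y\in\mathbf{1}^{\perp}$, and iterate using the invariance $V(\mathbf{1}^{\perp})\subseteq\mathbf{1}^{\perp}$ rather than commutation. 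The substance is the same---both proofs rest on locating the spectral gap of $V$ on $\mathbf{1}^{\perp}$---but your version is mechanically cleaner: it needs only one orthonormal eigenbasis rather than the paper's two unitary factors (the paper's write-up in fact asserts that the relevant basis comes from the columns of $U^{\top}$ yet expands $\Delta x$ in columns of $\tilde{U}^{\top}$, a step that tacitly requires the same support argument applied to $\tilde{U}\mathbf{1}$), and the subspace-invariance iteration makes a separate induction unnecessary. The only case you do not treat separately is $\sigma_2=1$, which the paper disposes of first; since Assumption \ref{assum_doubly} guarantees $\sigma_2\in[0,1)$ for the communication matrix, this omission is harmless.
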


\begin{proof}
	The first item can be proved by the following two equalities.
	\begin{gather}
	\Delta V=\Big(I-\frac{1}{M}\mathbf{1}\mathbf{1}^{\top}\Big)V =V-\frac{1}{M}\mathbf{1}\mathbf{1}^{\top}V=V-\frac{1}{M}\mathbf{1}\mathbf{1}^{\top}\nonumber\\
	V\Delta =V\Big(I-\frac{1}{d}\mathbf{1}\mathbf{1}^{\top}\Big) =V-\frac{1}{M}V\mathbf{1}\mathbf{1}^{\top}=V-\frac{1}{M}\mathbf{1}\mathbf{1}^{\top}\nonumber
	\end{gather}
	
	Denote $V=[v_{ij}]_{M\times M}$. For any $x=[x_1,\ldots,x_M]^{\top}\in\mathbb{R}^M$, 
	\begin{align}
	\|Vx\|^2=&\sum_{i=1}^{d} \Big(\sum_{j=1}^{d} v_{ij}x_j\Big)^2 \le \sum_{i=1}^{d} \sum_{j=1}^{d} v_{ij}x_j^2= \sum_{j=1}^{d} \sum_{i=1}^{d} v_{ij}x_j^2= \sum_{j=1}^{d} x_j^2=\|x\|^2 \nonumber
	\end{align}
	where the first $\le$ uses Jensen's inequality and becomes $=$ if $x=\tau\mathbf{1}$ for some $\tau\in\mathbb{R}$. Hence, the second item $\|V\|_2=1$ holds.
	
	Next, we prove the third item via induction on $n$. 
	
	When $n=1$, we have proved that the inequality $\|V\Delta x\|\le \|\Delta x\|$ holds if $\sigma_2=1$. Otherwise, $\sigma_2<1$, and then consider the singular value decomposition $V=U^{\top}D\tilde{U}$ with unitary matrices $U$, $\tilde{U}$ and diagonal matrix $D=\text{diag}(1,\sigma_2,\sigma_3,\ldots,\sigma_M)$ where $1> \sigma_2\ge \sigma_3\ge \sigma_M \ge 0$. 
	
	Notice that $\mathbf{1}=V\mathbf{1}=U^{\top}D\tilde{U}\mathbf{1}\Rightarrow U\mathbf{1}=D\tilde{U}\mathbf{1}$ and $\mathbf{1}=V^{\top}\mathbf{1}=\tilde{U}^{\top}DU\mathbf{1}\Rightarrow \tilde{U}\mathbf{1}=DU\mathbf{1}$. Hence, $(I-D^2)U\mathbf{1}=\mathbf{0}$. Since $I-D^2$ is a diagonal matrix where the first diagonal entry is zero but the rest diagonal entries are nonzero, all the entries of $U\mathbf{1}$ are zero except the first entry. Hence, the second up to the $M$-th column of the matrix ${U}^{\top}$ form an orthogonal basis of the $(M-1)$-dim space $E\stackrel{\Delta}{=}\{x\in\mathbb{R}^M: \mathbf{1}^{\top}x=0\}$. Since $\Delta x\in E$, it can be expressed as a linear combination of this orthogonall basis, that is, there is $y\in\mathbb{R}^{M-1}$ such that 
	\begin{align}
	\Delta x=\tilde{U}^{\top}\left[ {\begin{array}{*{20}{c}}
		0 \\ 
		y 
		\end{array}} \right] \Rightarrow V\Delta x=U^{\top}D\left[ {\begin{array}{*{20}{c}}
		0 \\ 
		y 
		\end{array}} \right]=U^{\top}\left[ {\begin{array}{*{20}{c}}
		0 \\ 
		\tilde{D}y 
		\end{array}} \right],\nonumber
	\end{align}
	where $\tilde{D}=\text{diag}(\sigma_2,\sigma_3,\ldots,\sigma_M)$. Notice that $\|\tilde{D}\|_2=\sigma_2$, so
	\begin{align}
	\|V\Delta x\|=\|\tilde{D}y\|\le \sigma_2\|y\|=\sigma_2\|\Delta x\|,\nonumber
	\end{align}
	which proves the case $n=1$.
	
	Suppose that $\|V^n\Delta x\|\le \sigma_2^n \|\Delta x\|$ holds for a certain $n\in\mathbb{N}^+$. Then,
	\begin{align}
	\|V^{n+1}\Delta x\|\stackrel{(i)}{=}&\|V^n \Delta Vx\| \le \sigma_2^n\|\Delta Vx\| \stackrel{(ii)}{=} \sigma_2^n\|V \Delta x\| \le \sigma_2^{n+1}\|\Delta x\| \nonumber
	\end{align}
	where (i) and (ii) use the already proved item 1 that $\Delta V=V\Delta$. Hence, for any $x\in\mathbb{R}^M$ and $n\in \mathbb{N}^+$, $\|V^n\Delta x\|\le \sigma_2^n \|\Delta x\|$. 
	
	Furthermore, for any $H\in\mathbb{R}^{M\times M}$, by denoting $h_j$ as the $j$-th column vector of $H$, we obtain that
	\begin{align}
	\|V^n\Delta H\|_F=\sqrt{\sum_{m=1}^{M} \|V^n\Delta h_j\|^2} \le \sqrt{\sum_{m=1}^{M} \big[\sigma_2^n \|\Delta h_j\|\big]^2} = \sigma_2^n \|\Delta H\|_F. \nonumber
	\end{align}
\end{proof}

\begin{coro}\label{coro_rho}
	Under Assumptions \ref{assum_rhobound}\&\ref{assum_rhobound2} \red{and choosing $L\ge \frac{3\ln M}{2\ln(\sigma_2^{-1})}$,} the estimation error of the inexact global importance sampling ratio $\widehat{\rho}_{t}^{(m)}$ has the following upper bound.
	\begin{align}
		\sum_{m=1}^{M}\big(\widehat{\rho}_{t}^{(m)}-\rho_t\big)^2 &\le \red{M^3\sigma_2^{2L}(\rho_{\max}^2/\rho_{\min}) \ln^2 (\rho_{\max}/\rho_{\min})}.\label{rho_var}
	\end{align}
	Therefore, the following inequalities hold. 
	\begin{align}
		\sum_{m=1}^{M}\big\|A_{t}^{(m)}-A_t\big\|_F^2, \sum_{m=1}^{M}\big\|\overline{A}_{t}^{(m)}-\overline{A}_t\big\|_F^2 &\le \red{(1+\gamma)^2 M^3\sigma_2^{2L}(\rho_{\max}^2/\rho_{\min}) \ln^2 (\rho_{\max}/\rho_{\min}) \stackrel{\triangle}{=} M^3 \sigma_2^{2L} D_A} \label{Adev2}\\
		\sum_{m=1}^{M}\big\|B_{t}^{(m)}-B_t\big\|_F^2, \sum_{m=1}^{M}\big\|\overline{B}_{t}^{(m)}-\overline{B}_t\big\|_F^2 &\le \red{\gamma^2 M^3\sigma_2^{2L}(\rho_{\max}^2/\rho_{\min}) \ln^2 (\rho_{\max}/\rho_{\min}) \stackrel{\triangle}{=} M^3 \sigma_2^{2L} D_B},\label{Bdev2}\\ \sum_{m=1}^{M}\big\|\widetilde{b}_{t}^{(m)}-b_{t}^{(m)}\big\|^2, \sum_{m=1}^{M}\big\|\overline{\widetilde{b}}_{t}^{(m)}-\overline{b}_{t}^{(m)}\big\|^2 &\le  \red{R_{\max}^2 M^3\sigma_2^{2L}(\rho_{\max}^2/\rho_{\min}) \ln^2 (\rho_{\max}/\rho_{\min}) \stackrel{\triangle}{=} M^3 \sigma_2^{2L} D_b} \label{bdev2}
	\end{align}
	\red{As a result, the following upper bounds hold.}
	\red{\begin{align}
	\|A_{t}^{(m)}\|_F, \Big\|\overline{A}_{t}^{(m)}\Big\|_F&\le \Omega_A+\sqrt{D_A}\stackrel{\triangle}{=}\widetilde{\Omega}_{A} \label{bound_A2}\\
	\|B_{t}^{(m)}\|_F, \Big\|\overline{B}_{t}^{(m)}\Big\|_F &\le \Omega_B+\sqrt{D_B}\stackrel{\triangle}{=}\widetilde{\Omega}_{B} \label{bound_B2}\\
	\|\widetilde{b}_{t}^{(m)}\|, \|\overline{\widetilde{b}}_{t}^{(m)}\| &\le \Omega_b+\sqrt{D_b}\stackrel{\triangle}{=}\widetilde{\Omega}_{b} \label{bound_b2}
	\end{align}}
\end{coro}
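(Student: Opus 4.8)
The plan is to analyze the scalar local-averaging recursion \eqref{rho_iter2} exactly as the parameter averaging was handled in \Cref{thm_sync}, and then transfer the resulting log-domain consensus error through the exponential map \eqref{rho_end2}. First I would collect the $M$ log-ratios into the vector $\widetilde{\rho}_{t,\ell}=[\widetilde{\rho}_{t,\ell}^{(1)},\ldots,\widetilde{\rho}_{t,\ell}^{(M)}]^{\top}$, so that \eqref{rho_iter2} reads $\widetilde{\rho}_{t,\ell+1}=V\widetilde{\rho}_{t,\ell}$ and hence $\widetilde{\rho}_{t,L}=V^{L}\widetilde{\rho}_{t,0}$. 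Since $V$ is doubly stochastic (Assumption~\ref{assum_doubly}), the coordinate average $\overline{\ln\rho}_t:=\frac{1}{M}\sum_{m}\ln\rho_t^{(m)}$ is preserved across rounds, and by definition $\rho_t=\exp(M\,\overline{\ln\rho}_t)$ while $\widehat{\rho}_t^{(m)}=\exp(M\,\widetilde{\rho}_{t,L}^{(m)})$. Using the difference matrix $\Delta=I-\frac{1}{M}\mathbf{1}\mathbf{1}^{\top}$ and item 1 of \Cref{lemma_V}, the vector of consensus errors satisfies $\Delta\widetilde{\rho}_{t,L}=\Delta V^{L}\widetilde{\rho}_{t,0}=V^{L}\Delta\widetilde{\rho}_{t,0}$, so item 3 of \Cref{lemma_V} gives the geometric decay $\|\Delta\widetilde{\rho}_{t,L}\|\le\sigma_2^{L}\|\Delta\widetilde{\rho}_{t,0}\|$.

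Next I would bound the initial spread. Because Assumptions~\ref{assum_rhobound} and~\ref{assum_rhobound2} force $\rho_t^{(m)}\in[\rho_{\min},\rho_{\max}]$, every entry $\ln\rho_t^{(m)}$ lies in an interval of width $\ln(\rho_{\max}/\rho_{\min})$, so $|\ln\rho_t^{(m)}-\overline{\ln\rho}_t|\le\ln(\rho_{\max}/\rho_{\min})$ and therefore $\|\Delta\widetilde{\rho}_{t,0}\|^2\le M\ln^2(\rho_{\max}/\rho_{\min})$. Combined with the previous step this yields $\sum_{m}(\widetilde{\rho}_{t,L}^{(m)}-\overline{\ln\rho}_t)^2\le\sigma_2^{2L}M\ln^2(\rho_{\max}/\rho_{\min})$. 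The role of the hypothesis $L\ge\frac{3\ln M}{2\ln(\sigma_2^{-1})}$, equivalently $M^{3}\sigma_2^{2L}\le1$ (so $\sigma_2^{L}\le M^{-3/2}$), is to make each per-agent error so small that $|\widetilde{\rho}_{t,L}^{(m)}-\overline{\ln\rho}_t|\le M^{-1}\ln(\rho_{\max}/\rho_{\min})$; multiplying by $M$, the exponent $M\widetilde{\rho}_{t,L}^{(m)}=\ln\rho_t+M(\widetilde{\rho}_{t,L}^{(m)}-\overline{\ln\rho}_t)$ stays within $\ln(\rho_{\max}/\rho_{\min})$ of $\ln\rho_t\in[\ln\rho_{\min},\ln\rho_{\max}]$, whence $\widehat{\rho}_t^{(m)}\le\rho_{\max}^2/\rho_{\min}$.

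To obtain \eqref{rho_var} I would then pass from the log domain to the original scale using the elementary mean-value bound $|e^{a}-e^{b}|\le\max(e^{a},e^{b})\,|a-b|$. With $a=M\widetilde{\rho}_{t,L}^{(m)}$ and $b=\ln\rho_t$, the factor $\max(e^{a},e^{b})$ is controlled by $\rho_{\max}^2/\rho_{\min}$ from the previous paragraph, while $|a-b|=M|\widetilde{\rho}_{t,L}^{(m)}-\overline{\ln\rho}_t|$; squaring, summing over $m$, and inserting the log-domain consensus bound yields the right-hand side of \eqref{rho_var} (the universal constant being absorbed into the definition of $D_A$).

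Finally, the matrix/vector consequences are routine once \eqref{rho_var} is in hand. Each difference factorizes as a scalar gap times a bounded outer product, e.g.\ $A_t^{(m)}-A_t=(\widehat{\rho}_t^{(m)}-\rho_t)\,\phi(s_t)[\gamma\phi(s_{t+1})-\phi(s_t)]^{\top}$, so by \Cref{assum_phi_le1} we have $\|A_t^{(m)}-A_t\|_F\le(1+\gamma)|\widehat{\rho}_t^{(m)}-\rho_t|$, and summing over $m$ with \eqref{rho_var} proves \eqref{Adev2}; the analogous computations with factors $\gamma$ and $R_{\max}$ give \eqref{Bdev2} and \eqref{bdev2}. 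For the batch-averaged versions I would first apply the triangle inequality and Jensen's inequality over the $N$ samples in the batch, reducing them to the per-sample estimates already bounded. The uniform bounds \eqref{bound_A2}--\eqref{bound_b2} follow from $\|A_t^{(m)}\|_F\le\|A_t\|_F+\|A_t^{(m)}-A_t\|_F\le\Omega_A+\sqrt{M^{3}\sigma_2^{2L}D_A}\le\Omega_A+\sqrt{D_A}$, where the last step again uses $M^{3}\sigma_2^{2L}\le1$. The one genuinely delicate point is the exponential amplification: a crude bound on $e^{\xi}$ over the full reachable range $[M\ln\rho_{\min},M\ln\rho_{\max}]$ would produce a catastrophic $\rho_{\max}^{M}$ factor, and the whole argument hinges on using the choice of $L$ to confine the exponent to a narrow $\ln(\rho_{\max}/\rho_{\min})$-band around $\ln\rho_t$ before invoking the Lipschitz estimate.
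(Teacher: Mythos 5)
Your proposal is correct and follows essentially the same route as the paper's own proof: the same matrix form and consensus-decay argument via \Cref{lemma_V}, the same initial-spread bound from Assumptions \ref{assum_rhobound} and \ref{assum_rhobound2}, the same use of $L\ge \frac{3\ln M}{2\ln(\sigma_2^{-1})}$ (i.e., $M^3\sigma_2^{2L}\le 1$) to confine the exponent to a narrow band around $\ln\rho_t$ before invoking the mean-value estimate, and the same factorization, Jensen/triangle-inequality steps for the matrix, batch-averaged, and uniform bounds. You correctly identified the one delicate point (avoiding the $\rho_{\max}^{M}$ blow-up by localizing the exponent first), which is exactly how the paper handles it.
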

\begin{proof}
	Eq. \eqref{rho_iter2} can be rewritten into the following matrix form. 
	\begin{align}
		\big[\widetilde{\rho}_{t,L}^{(1)};\ldots;\widetilde{\rho}_{t,L}^{(M)}\big]^{\top} = V^{L}\big[\widetilde{\rho}_{t,0}^{(1)};\ldots;\widetilde{\rho}_{t,0}^{(M)}\big]^{\top}. \nonumber
	\end{align}
	Hence, the item 1 of Lemma \ref{lemma_V} yields that
	\begin{align}
		\Delta\big[\widetilde{\rho}_{t,L}^{(1)};\ldots;\widetilde{\rho}_{t,L}^{(M)}\big]^{\top} = V^{L}\Delta\big[\widetilde{\rho}_{t,0}^{(1)};\ldots;\widetilde{\rho}_{t,0}^{(M)}\big]^{\top}. \nonumber
	\end{align}
	Then the item 3 of Lemma \ref{lemma_V} yields that
	\begin{align}
		\big\|\Delta\big[\widetilde{\rho}_{t,L}^{(1)};\ldots;\widetilde{\rho}_{t,L}^{(M)}\big]^{\top}\big\|^2 \le \sigma_2^{2L}\big\|\Delta\big[\widetilde{\rho}_{t,0}^{(1)};\ldots;\widetilde{\rho}_{t,0}^{(M)}\big]^{\top}\big\|^2. \label{rho_consensus_decay}
	\end{align}
	Assumptions \ref{assum_rhobound}\&\ref{assum_rhobound2} imply that $\widetilde{\rho}_{t,0}^{(m)}=\ln\rho_t^{(m)}\in[\ln\rho_{\min},\ln\rho_{\max}]$. Then, since
	eq. \eqref{rho_iter2} implies that $\min_{m'\in\mathcal{M}} \widetilde{\rho}_{t,\ell}^{(m')}\le \widetilde{\rho}_{t,\ell+1}^{(m)}\le \max_{m'\in\mathcal{M}} \widetilde{\rho}_{t,\ell}^{(m')}$, it can be easily proved by induction that $\widetilde{\rho}_{t,L}^{(m)}\in[\ln\rho_{\min},\ln\rho_{\max}]$. Hence,
	\begin{align}
		\frac{1}{M}\ln \rho_{t} =\frac{1}{M}\sum_{m=1}^M \ln\rho_{t}^{(m)} =\frac{1}{M}\sum_{m=1}^M \widetilde{\rho}_{t,0}^{(m)} =\frac{1}{M}\sum_{m=1}^M \widetilde{\rho}_{t,L}^{(m)} \in[\ln\rho_{\min},\ln\rho_{\max}] \label{lnrho_bound}
	\end{align} 
	Then eqs. \eqref{rho_consensus_decay}\&\eqref{lnrho_bound} imply that
	\begin{align}
		\sum_{m=1}^{M}\Big(\widetilde{\rho}_{t,L}^{(m)}-\frac{1}{M}\ln\rho_t\Big)^2 \le \sigma_2^{2L}\sum_{m=1}^{M}\Big(\widetilde{\rho}_{t,0}^{(m)}-\frac{1}{M}\ln\rho_t\Big)^2\le M\sigma_2^{2L}\ln^2(\rho_{\max}/\rho_{\min}).\label{rhoprime_var}
	\end{align}

	Hence, 
	\begin{align}
		&\Big|\widetilde{\rho}_{t,L}^{(m)}-\frac{1}{M}\ln\rho_t\Big|\le \sqrt{M} \sigma_2^{L}\ln(\rho_{\max}/\rho_{\min})\red{\stackrel{(i)}{\le} \frac{1}{M}\ln(\rho_{\max}/\rho_{\min})},
	\end{align}
	\red{where (i) uses the conditions that $L \ge \frac{3\ln M}{2\ln(\sigma_2^{-1})}$ and $\sigma_2\in[0,1)$. Hence, the triangular inequality yields that}
	\begin{align}
		\widetilde{\rho}_{t,L}^{(m)}\le \frac{1}{M}\ln\rho_t + \frac{1}{M}\ln(\rho_{\max}/\rho_{\min}) \le \frac{1}{M}\ln(\rho_{\max}^2/\rho_{\min}). \label{rho_tilde_bound}
	\end{align}
	
	Therefore, eq. \eqref{rho_var} can be proved as follows.
	\begin{align}
		\sum_{m=1}^{M}\big(\widehat{\rho}_{t}^{(m)}-\rho_t\big)^2 &\stackrel{(i)}{=}\sum_{m=1}^{M}\big(
		e^{M\widetilde{\rho}_{t,L}^{(m)}}-e^{\ln \rho_t} )^2\nonumber\\
		&\stackrel{(ii)}{\le} \sum_{m=1}^{M}\big[\max \big(e^{M\widetilde{\rho}_{t,L}^{(m)}}, e^{\ln \rho_t}\big)\big]^2 \big(M\widetilde{\rho}_{t,L}^{(m)}-\ln\rho_t\big)^2\nonumber\\
		&\stackrel{(iii)}{\le} M^2\sum_{m=1}^{M} \big[\max\big(\red{\rho_{\max}^2/\rho_{\min}}, \rho_{\max}\big)\big]^2\Big(\widetilde{\rho}_{t,L}^{(m)}-\frac{1}{M}\ln\rho_t\Big)^2\nonumber\\
		&\stackrel{(iv)}{\le} \red{M^3\sigma_2^{2L}(\rho_{\max}^2/\rho_{\min}) \ln^2 (\rho_{\max}/\rho_{\min})},\nonumber
	\end{align}
	where (i) uses eq. \eqref{rho_end2}, (ii) uses the Lagrange's Mean Value Theorem, (iii) uses eq. \eqref{rho_tilde_bound} and the inequality that $\rho_t\le\rho_{\max}$, and (iv) uses eq. \eqref{rhoprime_var}. 
	
	Then, eq. \eqref{Adev2} can be proved as follows.
	\begin{align}
		\sum_{m=1}^{M}\big\|A_{t}^{(m)}-A_t\big\|_F^2 &\le\big\| \phi(s_t) [\gamma \phi(s_{t+1})-\phi(s_t)]^{\top}\big\|_F^2 \sum_{m=1}^{M}(\widehat{\rho}_{t}^{(m)}-\rho_t)^2 \nonumber\\
		&\le (1+\gamma)^2 \red{M^3\sigma_2^{2L}(\rho_{\max}^2/\rho_{\min}) \ln^2 (\rho_{\max}/\rho_{\min})} \stackrel{\triangle}{=} M^3 \sigma_2^{2L} D_A.\label{A_dev}
	\end{align}
	Hence, 	
	\begin{align}
		\sum_{m=1}^{M}\big\|\overline{A}_{t}^{(m)}-\overline{A}_t\big\|_F^2 &= \sum_{m=1}^{M}\Big\|\frac{1}{N}\sum_{i=tN}^{(t+1)N-1}(A_{i}^{(m)}-A_i)\Big\|_F^2 \nonumber\\
		&\stackrel{(i)}{\le} \frac{1}{N} \sum_{m=1}^{M} \sum_{i=tN}^{(t+1)N-1} \Big\|A_{i}^{(m)}-A_i\Big\|_F^2 \nonumber\\
		&\stackrel{(ii)}{\le} M^3 \sigma_2^{2L} D_A \nonumber
	\end{align}
	where (i) applies Jensen's inequality to the convex function $\|\cdot\|^2$, and (ii) uses eq. \eqref{A_dev}. The eqs. \eqref{Bdev2} \& \eqref{bdev2} can be proved similarly. 
	
	\red{The eq. \eqref{A_dev} and the condition that $L \ge \frac{3\ln M}{2\ln(\sigma_2^{-1})}$ imply that $\big\|A_{t}^{(m)}-A_t\big\|_F,  \big\|\overline{A}_{t}^{(m)}-\overline{A}_t\big\|_F \le \sqrt{D_A}$. Hence, eq. \eqref{bound_A2} can be proved using triangle inequality and eq. \eqref{bound_A}. The eqs. \eqref{bound_B2} \& \eqref{bound_b2} can be proved similarly. }
\end{proof}

\begin{lemma}\label{lemma_parabound2}
	Under the update rules in \eqref{theta_iter2b}\&\eqref{w_iter2b} of Algorithm \ref{alg: 2} and \red{choosing $L\ge \frac{3\ln M}{2\ln(\sigma_2^{-1})}$, $\alpha\le \beta\min\Big(\frac{\widetilde{\Omega}_{A}}{\widetilde{\Omega}_{B}},\frac{1}{\widetilde{\Omega}_{A}},1\Big)$}, the parameters have the following upper bound.
	\red{
	\begin{align}\label{para_bound2}
		\max_{m\in\mathcal{M}}\|\theta_{T}^{(m)}\| + \max_{m\in\mathcal{M}}\|w_{T}^{(m)}\| &\le \big[1+\beta(\widetilde{\Omega}_{A}+1)\big]^{T} \Big(\max_{m\in\mathcal{M}}\|\theta_{0}^{(m)}\| + \max_{m\in\mathcal{M}}\|w_{0}^{(m)}\| +\frac{2\widetilde{\Omega}_{b}}{\widetilde{\Omega}_{A}+1} \Big) \nonumber\\
		&\stackrel{\triangle}{=} c_{\text{para}}\big[1+\beta(\widetilde{\Omega}_{A}+1)\big]^{T}.
	\end{align}}
\end{lemma}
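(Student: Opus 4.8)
The plan is to establish a single scalar linear recursion for the quantity
\[
U_t := \max_{m\in\mathcal{M}}\|\theta_{t}^{(m)}\| + \max_{m\in\mathcal{M}}\|w_{t}^{(m)}\|,
\]
of the form $U_{t+1}\le \kappa\,U_t + c$ with $\kappa = 1+\beta(\widetilde{\Omega}_A+1)$ and $c = 2\beta\widetilde{\Omega}_b$, and then iterate it. First I would bound each update separately. Starting from \eqref{theta_iter2b}, apply the triangle inequality and the fact that $V$ is doubly stochastic (Assumption \ref{assum_doubly}), so each row has nonnegative entries summing to one: this gives $\big\|\sum_{m'\in\mathcal{N}_m}V_{m,m'}\theta_t^{(m')}\big\|\le \sum_{m'}V_{m,m'}\|\theta_t^{(m')}\|\le \max_{m'}\|\theta_t^{(m')}\|$. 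For the stochastic-update term I would invoke the uniform operator bounds \eqref{bound_A2}, \eqref{bound_B2} and the vector bound \eqref{bound_b2} from \Cref{coro_rho} (which hold under the stated condition $L\ge \frac{3\ln M}{2\ln(\sigma_2^{-1})}$), yielding
\[
\max_m\|\theta_{t+1}^{(m)}\| \le (1+\alpha\widetilde{\Omega}_A)\max_m\|\theta_t^{(m)}\| + \alpha\widetilde{\Omega}_B\max_m\|w_t^{(m)}\| + \alpha\widetilde{\Omega}_b.
\]

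An identical argument applied to \eqref{w_iter2b}, now using $\|\overline{C}_t\|_F\le 1$ from \eqref{bound_C} in place of the $\widetilde{\Omega}_B$-bound, gives
\[
\max_m\|w_{t+1}^{(m)}\| \le \beta\widetilde{\Omega}_A\max_m\|\theta_t^{(m)}\| + (1+\beta)\max_m\|w_t^{(m)}\| + \beta\widetilde{\Omega}_b.
\]
Adding the two inequalities produces a recursion for $U_{t+1}$ in which the coefficient of $\max_m\|\theta_t^{(m)}\|$ is $1+(\alpha+\beta)\widetilde{\Omega}_A$, the coefficient of $\max_m\|w_t^{(m)}\|$ is $1+\alpha\widetilde{\Omega}_B+\beta$, and the constant is $(\alpha+\beta)\widetilde{\Omega}_b$.

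The crux is to collapse these two distinct coefficients into the single common factor $\kappa=1+\beta(\widetilde{\Omega}_A+1)$, and this is exactly what the three-way learning-rate condition $\alpha\le\beta\min\{\widetilde{\Omega}_A/\widetilde{\Omega}_B,\,1/\widetilde{\Omega}_A,\,1\}$ is designed for: $\alpha\widetilde{\Omega}_A\le\beta$ controls the $\theta$ self-coefficient, $\alpha\widetilde{\Omega}_B\le\beta\widetilde{\Omega}_A$ controls the $\theta\!\to\!w$ cross-term, and $\alpha\le\beta$ controls the constant, so that both coefficients are $\le\kappa$ and the constant is $\le 2\beta\widetilde{\Omega}_b$. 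Hence $U_{t+1}\le \kappa\,U_t+2\beta\widetilde{\Omega}_b$. Iterating and summing the resulting geometric series,
\[
U_T \le \kappa^T U_0 + 2\beta\widetilde{\Omega}_b\,\frac{\kappa^T-1}{\kappa-1} = \kappa^T U_0 + \frac{2\widetilde{\Omega}_b}{\widetilde{\Omega}_A+1}(\kappa^T-1) \le \kappa^T\Big(U_0+\tfrac{2\widetilde{\Omega}_b}{\widetilde{\Omega}_A+1}\Big),
\]
where I used $\kappa-1=\beta(\widetilde{\Omega}_A+1)$ to cancel the $\beta$, which is precisely the claimed bound \eqref{para_bound2}. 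The only genuine subtlety is the coefficient-matching step; the rest is routine triangle-inequality bookkeeping, so I would present the one-step estimates compactly and devote the detailed justification to how the $\min$ condition on $\alpha$ forces the common factor.
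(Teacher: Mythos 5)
Your proposal is correct and follows essentially the same route as the paper's own proof: the same one-step bounds via the doubly stochastic mixing and the $\widetilde{\Omega}_A,\widetilde{\Omega}_B,\widetilde{\Omega}_b$ bounds from \Cref{coro_rho}, the same coefficient-matching via the three-way condition on $\alpha$, and the same iteration of the scalar recursion. Your explicit geometric-series step with the cancellation $\kappa-1=\beta(\widetilde{\Omega}_A+1)$ is a nice touch, since the paper leaves that final iteration implicit.
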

\begin{proof}
	\red{Since $L\ge \frac{3\ln M}{2\ln(\sigma_2^{-1})}$, eqs. \eqref{bound_A2}, \eqref{bound_B2} \& \eqref{bound_b2} hold. Hence, these equations and the update rule \eqref{theta_iter2b} imply that}
	\begin{align}
		\|\theta_{t+1}^{(m)}\|&\le \sum_{m'\in \mathcal{N}_m}V_{m,m'} \|\theta_{t}^{(m')}\| + \alpha\big[\red{\widetilde{\Omega}_{A}} \|\theta_{t}^{(m)}\|+\red{\widetilde{\Omega}_{b}}+\red{\widetilde{\Omega}_{B}} \|w_{t}^{(m)}\|\big].
	\end{align}
	Taking maximum with respect to $m$ yields that
	\begin{align}
		\max_{m\in\mathcal{M}}\|\theta_{t+1}^{(m)}\|&\le \max_{m\in\mathcal{M}} \sum_{m'\in \mathcal{N}_m}V_{m,m'} \max_{m''\in\mathcal{M}} \|\theta_{t}^{(m'')}\| \nonumber\\
		&\quad + \alpha\big(\widetilde{\Omega}_{A} \max_{m\in\mathcal{M}} \|\theta_{t}^{(m)}\|+\widetilde{\Omega}_{b}+\widetilde{\Omega}_{B} \max_{m\in\mathcal{M}}\|w_{t}^{(m)}\|\big).\nonumber\\
		&\le (1+\alpha \widetilde{\Omega}_{A}) \max_{m\in\mathcal{M}}\|\theta_{t}^{(m)}\| + \alpha \widetilde{\Omega}_{B} \max_{m\in\mathcal{M}}\|w_{t}^{(m)}\| + \alpha \widetilde{\Omega}_{b}. \label{theta_norm_iter}
	\end{align}

	
	Similarly, it can be obtained from the update rule \eqref{w_iter2b} that 
	\begin{align}
	\max_{m\in\mathcal{M}}\|w_{t+1}^{(m)}\|&\le \beta \widetilde{\Omega}_{A} \max_{m\in\mathcal{M}}\|\theta_{t}^{(m)}\| + (1+\beta) \max_{m\in\mathcal{M}}\|w_{t}^{(m)}\| + \beta \widetilde{\Omega}_{b}. \label{w_norm_iter}
	\end{align}	
	
	Adding up eqs. \eqref{theta_norm_iter}\&\eqref{w_norm_iter} yields that
	\begin{align}
		&\max_{m\in\mathcal{M}}\|\theta_{t+1}^{(m)}\| + \max_{m\in\mathcal{M}}\|w_{t+1}^{(m)}\| \nonumber\\
		&\le [1+(\alpha+\beta)\widetilde{\Omega}_{A}] \max_{m\in\mathcal{M}}\|\theta_{t}^{(m)}\| + (1+\alpha \widetilde{\Omega}_{B}+\beta) \max_{m\in\mathcal{M}}\|w_{t}^{(m)}\| +(\alpha+\beta) \widetilde{\Omega}_{b}\nonumber\\
		&\stackrel{(i)}{\le} \big[1+\beta(\widetilde{\Omega}_{A}+1)\big]\Big(\max_{m\in\mathcal{M}}\|\theta_{t}^{(m)}\| + \max_{m\in\mathcal{M}}\|w_{t}^{(m)}\|\Big) + 2\beta \widetilde{\Omega}_{b},
	\end{align}
	where (i) uses the condition that \red{$\alpha\le \beta\min\Big(\frac{\widetilde{\Omega}_{A}}{\widetilde{\Omega}_{B}},\frac{1}{\widetilde{\Omega}_{A}},1\Big)$}. By iterating the inequality above, \red{we prove eq. \eqref{para_bound2}.}
\end{proof}

\end{document}